\DeclareMathOperator*{\argmin}{arg\,min}
\def\diag{\mathop{\rm diag}\nolimits}%
\DeclareMathOperator{\Var}{Var}
\DeclareMathOperator{\var}{Var}
\newcommand{\xv}{{\bm x}}
\newcommand{\yv}{{\bm y}}
\newcommand{\betav}{{\bm \beta}}
\def\e{\epsilon}
\DeclareMathOperator\E{\mathbb{E}}
\let\P\relax
\DeclareMathOperator\P{P}
\DeclareMathOperator\R{R}
\newcommand{\N}{\mathrm{N}}
\def\textiid{i.i.d.\@\xspace}
\newcommand\iid{\ifmmode\text{ i.i.d. } \else \textiid \fi}
\DeclarePairedDelimiter{\abs}{\lvert}{\rvert}
\DeclarePairedDelimiter{\norm}{\lVert}{\rVert}
\newtheorem{assumption}{Assumption}
\newtheorem{theorem}{Theorem}
\newtheorem{lemma}{Lemma}
\newtheorem{corollary}{Corollary}
\newtheorem{remark}{Remark}
\newtheorem{example}{Example}
\newcommand{\neutralize}[1]{\expandafter\let\csname c@#1\endcsname\count@}
\newenvironment{assumptionbis}[1]
  {%
   \neutralize{assumption}\phantomsection
   \begin{assumption}}
  {\end{assumption}}
\begin{document}


%

\runningtitle{Error bounds in estimating the out-of-sample prediction error using LOOCV}
\newcommand{\blind}{1}  
\newcommand{\longer}{1} 

\newcommand{\lo}{{\rm LO}}
\newcommand{\extra}{{\rm Err}_{\rm out}}
\newcommand{\ine}{\rm{Err}_{\rm in}}

\twocolumn[
    \aistatstitle{Error bounds in estimating the out-of-sample prediction error using leave-one-out cross validation in high-dimensions}
    \aistatsauthor{Kamiar Rahnama Rad \And Wenda Zhou \And Arian Maleki}
    \aistatsaddress{Baruch College \\ City University of New York \And Columbia University \And Columbia University}
]



\newcommand{\io}{\underline{1}}

\newcommand{\alo}{{\rm ALO}}

\newcommand{\inew}{\widehat{{\rm Err}}_{\rm in}}
\newcommand{\p}{\mathds{P}}
\newcommand{\mb}{\mathbf{m}}
\newcommand{\bb}{\mathbf{b}}
\newcommand{\bl}{\bm{\hat \beta}}
\newcommand{\blo}{\hat \beta^\circ}

\newcommand{\btjpi}{\bm{ \hat \beta}_{\ \tilde i/j' }}
\newcommand{\bljp}{\bm{ \hat \beta}_{/j'}}

\newcommand{\bli}{\bm{ \hat \beta}_{/i }}
\newcommand{\blijj}{\bm{ \hat \beta}_{/ijj' }}

\newcommand{\blj}{\bm{ \hat \beta}_{/j}}
\newcommand{\bti}{\bm{ \hat \beta}_{ /i }}

\newcommand{\btji}{\bm{ \hat \beta}_{\ \tilde i /ij }}

\newcommand{\blone}{\bm{\hat \beta}_{/1}}
\newcommand{\bltwo}{\bm{\hat \beta}_{/2}}
\newcommand{\blonetwo}{\bm{\hat \beta}_{/1,2}}

\newcommand{\XJIJ}{\bm{X}_{/ijj'}}

\newcommand{\XJ}{\bm{X}_{/j}}
\newcommand{\yj}{\bm{y}_{/j}}
\newcommand{\Xji}{\bm{ X}_{\ \tilde i /j }}
\newcommand{\yji}{\bm{ y}_{\ \tilde i /j }}

\newcommand{\df}{\text{df}}
\newcommand{\poly}{\rm{poly}}
\newcommand{\snr}{\text{snr}}

\newcommand{\polyn}{\rm{poly}(\log n)}
\newcommand{\tXI}{{\bm{\bar X}}_{/i}}
\newcommand{\tXJIJ}{{\bm{\bar X}}_{/ijj'}}

\newcommand{\tGI}{{ \bm{\Gamma}}_{i/\bm{\tilde \delta_\ell}, \bm{\tilde \delta_r } }}
\newcommand{\tGII}{{ \bm{\Gamma}}_{ i/{\bm{\tilde \delta}_{1,\ell},\bm{\tilde \delta}_{1,r}  } }}
\newcommand{\tGIII}{{ \bm{\Gamma}}_{i/{\bm{\tilde \delta}_{2,\ell}, \bm{\tilde \delta}_{2,r} }}}
\newcommand{\zGI}{{\bm{ \Gamma}}_{\bm{\tilde \zeta} /i}}
\newcommand{\xGI}{{ \bm{\Gamma}}_{\bm{\tilde \xi} /i}}
\newcommand{\xxGI}{{ \bm{\Gamma}}_{ \bm{\xi} /i}}

\newcommand{\XI}{\bm{X}_{/i}}
\newcommand{\yi}{\bm{y}_{/i}}

\newcommand{\ld}{\dot{\ell}}
\newcommand{\ldd}{\ddot{\ell}}
\newcommand{\efd}{\dot{f}}
\newcommand{\efdd}{\ddot{f}}
\newcommand{\rd}{\dot{r}}
\newcommand{\psd}{\dot{\psi}}
\newcommand{\rdd}{\ddot{r}}
\newcommand{\rddd}{\dddot{r}}
\newcommand{\lddd}{\dddot{\ell}}
\newcommand{\dli}{\bm{\Delta}_{ \slash i}}
\newcommand{\pd}{\dot{\phi}}
\newcommand{\bs}{{\rm bias}}
\newcommand{\loi}{\widetilde{\text{LO}}_i}

\newcommand{\fd}{\dot{f}_e}
\newcommand{\fdd}{\ddot{f}_e}

\begin{abstract}
We study the problem of out-of-sample risk estimation in the high dimensional regime where both the sample size $n$ and number of features $p$ are large, and $n/p$ can be less than one. Extensive empirical evidence confirms the accuracy of leave-one-out cross validation (LO) for out-of-sample risk estimation. Yet, a unifying theoretical evaluation of the accuracy of LO in high-dimensional problems has remained an open problem. This paper aims to fill this gap for penalized regression in the generalized linear family. With minor assumptions about the data generating process, and without any sparsity assumptions on the regression coefficients, our theoretical analysis obtains finite sample upper bounds on the expected squared error of LO in estimating the out-of-sample error. Our bounds show that the error goes to zero as $n,p \rightarrow \infty$, even when the dimension $p$ of the feature vectors is comparable with or  greater than the sample size $n$. One technical advantage of the theory is that it can be used to clarify and connect some results from the recent literature on  scalable approximate LO.

\end{abstract}

\noindent%
{\it Keywords:}  High-dimensional statistics, Regularized estimation, Out-of-sample risk estimation, Cross validation, Generalized linear models, Model selection.

\section{Introduction}\label{section:intro}
Balancing the sensible level of model \textit{complexity} against model \textit{fitness} is a fundamental challenge faced by any learning algorithm. A model that is too simple can fail to  capture the essential pattern in  the data, and a model that is too complex is oversensitive to the idiosyncrasies of the particular data, resulting in highly variable patterns that are mere mirages in the noise. The learning algorithm's ability to perform well on \textit{new, previously unseen data} is typically used to set the model complexity. This performance is known as the \textit{out-of-sample error}. 

To be concrete, let $D = \{ (y_1, \bm{x_1}), \ldots, (y_n, \bm{x_n})\}$ be our dataset where $\bm{x_i} \in \R^p$ and $y_i \in \R$ denote the features and response, respectively. The goal is to obtain an estimate of the response for a newly observed feature vector. We assume observations are independent and identically distributed draws from some joint unknown distribution $q(y_i, \bm{x_i})$. We model this distribution as $q(y_i, \bm{x_i}) = q_1 (y_i | \bm{x}_i^\top \bm{\beta}_*) q_2(\bm{x}_i)$, and estimate $\bm{\beta}_*$ using the optimization problem
\begin{equation}\label{eq:ori_opt}
\bl \triangleq  \underset{\bm{\beta} \in \R^p}{\argmin}  \Bigl \{ \sum_{i=1}^n  \ell ( y_i \mid \bm{x_i}^\top \bm{\beta} ) + \lambda r(\bm{\beta})  \Bigr \},
\end{equation}
where $\ell$ is called the loss function, and $r(\bm{\beta})$ is called the regularizer. Both the regularizer $r(\bm{\beta})$ and the regularization parameter $\lambda$ have significant effects on the performance of the estimate by controlling the complexity of the model. Hence, for picking a good regularizer, $r$, or tuning the parameter $\lambda$ one would like to estimate the \textit{out-of-sample prediction error}, defined as
\begin{equation}
\extra \triangleq \E [ \phi ( y_o,\bm{x}_o^\top \bl ) \mid D ],
\end{equation}
where $(y_o,\bm{x}_o)$ is a \textit{new, previously unseen} sample from the unknown distribution $q(y , \bm{x})$ independent of $\mathcal{D}$, and $\phi$ is a function that measures the closeness of $y_o$ to $\bm{x}_o^\top \bl$. A standard choice for $\phi$ is $\ell(y \mid \bm{x}^\top \bm{\beta})$.


\begin{figure}
\begin{center}\vspace{-5.5cm}
\hspace{-0.9cm}
        \includegraphics[width=0.5\textwidth]{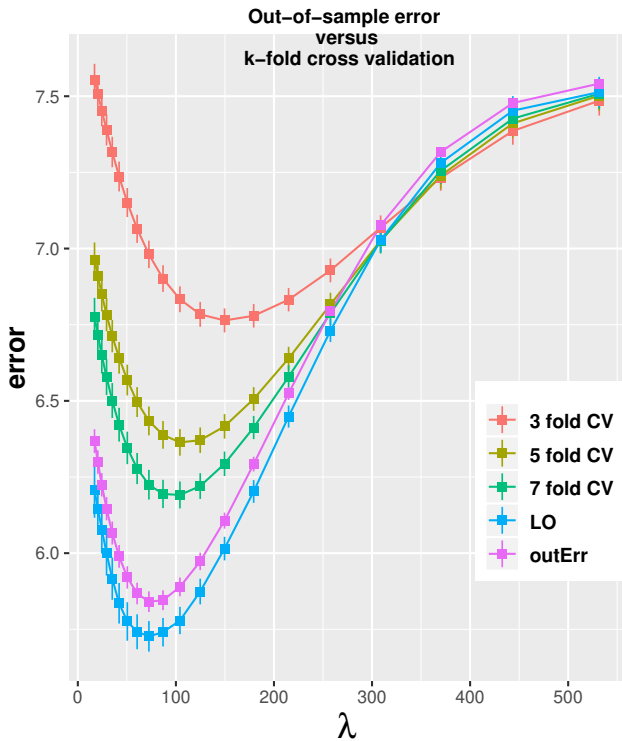}
          \caption{ Comparison of  $K$-fold cross validation (for $K=3,5,7$) and leave-one-out cross validation with the true (oracle-based)  out-of-sample  error for the elastic-net problem where $\ell ( y \mid \bm{x}^\top \bm{\beta} )=\frac{1}{2}(y-\bm{x}^\top \bm{\beta})^2$ and $r(\bm{\beta})=\| \bm{\beta} \|_1/2 + \| \bm{\beta} \|_2^2/4$. The upward bias of $K$-fold CV clearly decreases as number of folds increase. $y_i \sim \N(\bm{x_i}^\top \bm{\beta}^*,\sigma^2)$ and $\bm{x_i}\sim \N(\bm{0}, \bm{I})$. The number of nonzero elements of the true $\bm{\beta}^*$ is set to $k$ and their values is set to $\frac{1}{3\sqrt{2}}$. Dimensions are $(p,n,k)=\bigl(2000,500,100\bigr)$ and  $\sigma^2=2$. Extra-sample test data is $y_o \sim \N(\bm{x}_o^\top \bm{\beta}^*, \sigma^2)$ where  $\bm{x}_o \sim \N(\bm{0}, \bm{I})$. The true (oracle-based) out-of-sample prediction error is $\extra =  \E [  ( y_o-\bm{x}_o^\top \bl )^2 | D ]= \sigma^2 + \|\bl-\bm{\beta}^* \|_2^2$.  All depicted quantities are averages based on 100 random independent samples, and error bars depict one standard error. }
          \label{fig:loovscv5}
          \end{center}
\end{figure}
The problem of risk estimation has been extensively studied in the past fifty years and popular estimates, such as $k$-fold cross validation \cite{S74} are used extensively in practical systems. However, the emergence of high-dimensional estimation problems in which the number of features $p$ is comparable or even larger than the number of observations $n$, deemed many standard techniques in-accurate. For instance, Figure \ref{fig:loovscv5} compares the estimates obtained from $k$-fold cross validation for different values of $k$. As is clear in this figure, given the importance of each observation in high-dimensional settings, standard techniques, such as $5$-fold suffer from a large bias.

One of the existing estimates of $\extra$ that seems to be accurate in high-dimensional settings is the leave-one-out cross validation (LO), which is defined through the following formula:
\begin{equation}
\lo \triangleq \frac{1}{n} \sum_{i=1}^n \phi(y_i, \bm{x}_i^\top \bli),\label{eq:loo}
\end{equation}
where
\begin{eqnarray}\label{eq:lo_opt}
\bli \triangleq  \underset{\bm{\beta} \in \R^p}{\argmin}  \Bigl \{ \sum_{j\neq i}  \ell ( y_j \mid \bm{x}_j^\top \bm{\beta} ) + \lambda r(\bm{\beta})  \Bigr \} \label{eq:bli},
\end{eqnarray}
is the leave-$i$-out estimate. The simulation results reported in Figure \ref{fig:loovscv5} and elsewhere \cite{RM18, WZMLM18,SB19, BRST17,TK18} have demonstrated the good performance of LO in a wide range of high-dimensional problems. Despite the existence of extensive simulation results, the theoretical properties of LO have not been studied in the high-dimensional settings.

In this paper, we study the expected squared error of LO in estimating the out-of-sample error, in the high-dimensional setting, where both $n$ and $p$ are large, and $n/p$ can be less than one. We focus on regularized regression in the generalized linear family, and we make no sparsity assumption on the vector of regression coefficients. In short, we obtain an almost sharp upper bound on the error $|\lo - \extra|$. These bounds not only show that  $|\lo - \extra| \rightarrow 0$ as $n,p \rightarrow \infty$, but they also capture the rate of this convergence. This finally establishes what has been observed in empirical studies; $\lo$ obtains accurate out-of-sample risk estimates even in high-dimensional problems.  

An important advantage of our theoretical results is that they can be used to clarify and connect some results from the recent literature on computationally efficient approximation to LO. For instance, \cite{RM18} showed that in the same high dimensional regime, $|\alo - \lo| \rightarrow 0$ as $n,p \rightarrow \infty$, where $\alo$ stands for a computationally efficient approximation of LO we formally refer to in Section \ref{ssec:ALO1}. A major consequence of our theory is that it shows that $\alo$ is a consistent estimator of $\extra$. We make these statements more concrete in the next sections.



 \subsection{Notation}
 We first review the notations that will be used in the rest of the paper.    Let $\bm{x_i}^\top \in \R^{1 \times p}$ stand for the $i$th row of $\bm{X} \in \R^{n \times p}$. $\bm{y}_{/i} \in \R^{(n-1) \times 1}$ and $\XI \in \R^{(n-1) \times p}$ stand for $\bm{y}$ and $\bm{X}$, excluding the $i$th entry $y_i$ and the $i$th row $\bm{x_i}^\top$, respectively, and let $\bm{X}_{/ij}$ be defined likewise.
 Additionally, let $\bm{\hat \beta}_{/ij}$ stand for the regularized estimate in \eqref{eq:ori_opt} when $(y_i,\bm{x}_i)$ and $(y_j,\bm{x}_j)$ are excluded.
 Moreover, define 
\begin{align*}
\pd(y,z) &\triangleq \frac{\partial \phi (y,z)  }{\partial z}, \\
\ld(y_i \mid \bm{x_i}^\top\bm{\beta}) &\triangleq \frac{\partial \ell (y_i \mid z)  }{\partial z}\Big\rvert_{z =  \bm{x}_i^\top \bm{\beta}}, \\  
\ldd_i(\bm{\beta}) &\triangleq \frac{\partial ^2\ell (y_i \mid z)  }{\partial z^2}\Big\rvert_{z = \bm{x}_i^\top \bm{\beta}}, \\
  \bm{\ldd}_{/ i}(\cdot) &\triangleq  [\ldd_1(\cdot),\cdots,\ldd_{i-1}(\cdot), \ldd_{i+1}(\cdot),\dotsc,\ldd_n(\cdot)]^\top.
\end{align*}
Likewise, define $\bm{\ldd}_{/ ij}(\bm{\beta})$. The notation $\poly\log n$ denotes  polynomial of $\log n$ with a finite degree. Let $\sigma_{\max}(\bm{A})$ and $\sigma_{\min}(\bm{A})$ stand for the largest and smallest eigenvalues of $\bm{A}$, respectively. We state $x_n = O_p(a_n)$ when the set of values $x_n/a_n$ is stochastically bounded.  

\subsection{Computational complexity of LO and its approximation}\label{ssec:ALO1}
The high computational cost of repeatedly refitting models is a major hurdle in using $\lo$ in high dimensional settings. A typical approach to alleviate this problem analytically approximates the leave-$i$-out model based on the full-data model. A large body of work has addressed computationally efficient approximations to the leave-one-out cross validation error for ridge regularized  estimation problems (and its variants) \cite{A74,CW79,GHW79,OYR86,B90,CH92,OW00,CT08,MG13,VMTSW16, MMB15}.  Extensions to a wide array of regularizers, such as LASSO \cite{OK18,RM18,SB19} and nuclear norm \cite{WZMLM18} were recently studied and the validity of these approximations in estimating LO (and its variants) were theoretically studied in  \cite{OK16,BRST17,RM18,GSLJB19,SB19,XMRD19}. 

For example, a single Newton step around $\bl$ was used in \cite{WZMLM18,RM18} to approximate  $\bli$ by
\begin{equation*}
\begin{aligned}
\bm{\tilde{\beta}}_{ \slash i} &\triangleq \bl
+ \Big( \sum_{j\neq i} \bm{x_j} \bm{x_j}^\top \ldd ( y_j \mid \bm{x}_j^\top \bl ) + \lambda \bm{\nabla^2 r}(\bl) \Big)^{-1} \\
&\quad\cdot \bm{x_i} \ld ( y_i \mid \bm{x}_i^\top \bl )
\end{aligned}
\end{equation*}
and using the Woodburry lemma, the following scalable approximate LO (ALO) formula was obtained:
\begin{align}\label{eq:aloformulafinal}
    &\alo \triangleq \frac{1}{n} \sum_{i = 1}^n \phi\bigl(y_i, \bm{x}_i^\top \tilde{\bm{\beta}}_{\slash i}\bigr) \nonumber \\
    &= \frac{1}{n} \sum_{i = 1}^n \phi \biggl(y_i, \bm{x}_i^\top \hat{\bm{\beta}} + \bigl(\frac{H_{ii}}{1 - H_{ii}}\bigr) \frac{\ld(y_i \mid \bm{x}_i^\top \hat{\bm{\beta}})}{\ldd(y_i \mid \bm{x}_i^\top \hat{\bm{\beta}})} \biggr)
\end{align}
where
\begin{equation*}
\bm{H} \triangleq  \bm{X}  ( \bm{X}^\top \diag[\bm{\ldd}(\bl)] \bm{X} +  \lambda \bm{\nabla^2 r}(\bl)   )^{-1}  \bm{X}^\top \diag[\bm{\ldd}(\bm{\bl})]
\end{equation*}
This result was extended to nonsmooth regularizers. For example, \cite{WZMLM18,RM18, SB19} showed that for $r(\bm{\beta})=\|\bm{\beta}\|_1$, the same $\alo$ formula is a valid approximation of $\lo$ if the following $\bm{H}$ matrix is used:   
$$ \bm{H}=  \bm{X}_S \left ( \bm{X}_S^\top \diag[\bm{\ldd} (\bl)]   \bm{X}_S  \right)^{-1} \bm{X}_S^\top \diag[\bm{\ldd} (\bl)]$$ 
where $S$ is the active set of $\bl$ and $\bm{X}_S$ is the matrix $\bm{X}$ restricted to columns indexed by $S$. With minor assumptions about the data generating process and without any sparsity assumption on the vector of regression coefficients, \cite{RM18} (Theorem 3 and Corollary 1)  proved that for various regularizers and regression methods $|\alo - \lo| = O_p(\frac{\poly \log n}{n})$ in the high dimensional setting where $n/p = \delta$ is constant while $n,p \rightarrow \infty$.

Our finite sample bounds in the next section show that with similar (easy to check) regularity conditions,  for various regularizers and regression methods,  $|\lo - \extra| \rightarrow 0$ estimate go to zero as $n,p \rightarrow \infty$ but $ n/p = \delta$ is a fixed number.  As a byproduct of this result, we show that in this high dimensional regime $|\alo - \extra| \rightarrow 0$ as $n,p \rightarrow \infty$. We will more formally state these claims in Section \ref{sec:connectALOextra}. 




\section{Main results}

\subsection{Our assumptions}\label{ssec:assumption}
Our goal is to evaluate the accuracy of LO in estimating the out-of-sample prediction error in the high-dimensional regime.
Our results are valid for finite values of $n$ and $p$.
Later, in order to make asymptotic conclusions, we suppose that $n/p = \delta$ is constant while $n,p \rightarrow \infty$.



We now state our assumptions for theorem \ref{th:mse}.
For simplicity of exposition, we start by stating a \emph{strong} version of our assumptions, which often requires uniform bounds.
Weaker analogues are discussed in \ref{ssec:extension}.
As the assumptions may appear somewhat opaque and technical, we will discuss them in the context of usual assumptions and concrete examples of standard generalized linear models.

\begin{assumption} \label{A1}
 The vectors $\bm{x}_i$ are independent zero mean vectors with covariance $\bm{\Sigma} \in \R^{p \times p}$ such that $  \sigma_{\max} \left( \bm{\Sigma} \right) \leq \rho / p$ for a nonnegative constant $\rho$.
\end{assumption}

Assumption \ref{A1} characterizes the different distributions obtained for each $n$ and $p$.
The rows $\bm{x}_i^\top$ are scaled in a way that ensures $\E \| \bm{x}_i \|_2^2 = O(1)$  and $\var(\bm{x}_i^\top \bm{\beta}) = \bm{\beta}^\top \bm{\Sigma} \bm{\beta}= O(1)$, assuming that $\beta_i$ (for $i=1,\cdots,p$) is $O(1)$, e.g. $\|\bm{\beta} \|_2^2=O(p)$.
For instance, under the linear model $y_i = \bm{x}_i^\top \bm{\beta}+ \epsilon_i$, this scaling ensures that the signal-to-noise ratio in each observation remains fixed as $n,p$ grow (when the noise variance is a non-zero constant).
Unless we make explicit assumptions about the sparsity of $\bm{\beta}$, without the $1/p$ scaling, the Hessian of the optimization problem \eqref{eq:ori_opt} is dominated by the data, making the regularizer, and in turn $\lambda$, irrelevant.
In this paper, we make \textit{no sparsity assumption} on the vector of regression coefficients.
For similar finite signal-to-noise ratio scalings in the high-dimensional asymptotic analysis see \cite{K17,EBBLY13,bean2013optimal,donoho2016high,DMM11, bayati2012lasso,NR16,SBC17,DW18,RM18,XMRD19}.
Under this scaling, the optimal value of $\lambda$ will be $O_p(1)$ \cite{MMB15}.  

\begin{assumption} \label{A2} We assume the functions $\ell(y \mid z)$ and $\phi(y,z)$ are twice differentiable in $z$. We also assume that $\ell(y \mid z)$ and $r(\bm{\beta})$ are convex in $z$ and $\bm{\beta}$, respectively. Let $(y_o,\bm{x}_o)$ be a sample from  the unknown distribution $q(y,\bm{x})$ independent of $D=\{(y_1,\bm{x}_1),\cdots,(y_n,\bm{x}_n)\}$. We assume there exists constants $c_0$ and $c_1$, such that, for all $i,j$, uniformly:
\begin{align*}
c_0 &\geq  \max \left ( \abs{\ld(y_i \mid \bm{x}_i^\top \bl)}, \abs{\ld(y_o \mid  \bm{x}_o^\top \bli)}\right), \nonumber \\
c_1 &\geq\sup_{t \in [0,1]} \sqrt{ \E \bigl[ \pd(y_o, t\bm{x_o}^\top \bli + \bar{t}\bm{x_o}^\top \bl_{/ij})^2   \mid D_{/i}  \bigr]},  \nonumber \\
c_1 &\geq \sup_{t \in [0,1]} \sqrt{\E   \pd  (y_o,  t\bm{x_o}^\top \blone + \bar{t} \bm{x_o}^\top \blonetwo)^2   },  \\
c_1 &\geq  \sup_{t \in [0,1]}\sqrt{\E\bigl[ \pd(y_o, t\bm{x}_o^\top \bl + \bar{t}\bm{x}_o^\top \bli )^2 \mid D\bigr]},
\end{align*}
where $D_{/ i} \triangleq D \setminus \{(y_i,\bm{x}_i)\}$, and $\bar{t} = 1-t$. 
\end{assumption}

Assumption \ref{A2} characterizes the smoothness of the GLM problem (and its associated leave-one-out versions).
As we will show below there are many examples, such as logistic and robust regression, in which we can find $c_0$ and $c_1$.
However, in some other popular examples, such as linear or Poisson regression, $\abs{\ld(y_i \mid \bm{x}_i^\top \bl)}$ is a random quantity and we cannot find an absolute constant to dominate it everywhere.
As will be discussed later in Section \ref{ssec:extension}, we can weaken Assumption \ref{A2} at the expense of a slightly stronger moment condition on the feature vector $\bm{x}_i$. 

{\bf Example 1.} In the generalized linear model family, for the negative logistic regression log-likelihood $\ell ( y \mid \bm{x}^\top \bm{\beta} )=-y\bm{x}^\top \bm{ \beta} +\log (1 + e^{\bm{x}^\top \bm{ \beta}})$, where  $y \in \{0,1\}$, for $\phi(y,z)=\ell(y \mid z)$ it is easy to show that $\ld(y \mid z) \leq 2$ for any $y$ and $z$, leading to $c_0=c_1=2$. 

{\bf Example 2.} Our next example is about a smooth approximation of the Huber loss used in robust estimation, known as the pseudo-Huber loss:
\begin{equation}\label{eq:pseudo-huber-loss}
f_H(z) = \gamma^2 \left( \sqrt{1+ \frac{z^2}{\gamma^2} }-1\right),
\end{equation}
where $\gamma>0$ is a fixed number. If we use this loss for the linear regression problem, and set $\ell ( y \mid \bm{x}^\top \bm{\beta} )= \phi(y, \bm{x}^\top \bm{\beta} )=f_H(y-\bm{x}^\top \bm{ \beta})$. It is easy to show that $\ld(y \mid z) \leq \gamma$ for any $y$ and $z$, leading to $c_0=c_1=\gamma$. 

Our next example is concerned with another popular loss function in linear regression, namely the absolute deviation. However, since we would like our loss functions to be differentiable, we use the following smooth approximation of the absolute deviation loss, $\ell(y \mid z)= \abs{y-z}$, introduced in \cite{schmidt2007fast}:
\[
\ell_{\gamma}(y \mid z) \triangleq  \frac{1}{\gamma} \Big( \log(1+ e^{\gamma (y-z)}) + \log(1+ e^{-\gamma (y-z)}) \Big),
\]
where $\gamma>0$ is fixed.\footnote{ Note that  $\lim_{\gamma \rightarrow \infty} \sup_{y,z} \abs[\big]{\abs{y-z} - \ell_{\gamma}(y \mid z)} =0$.}

{\bf Example 3.}  For $\ell ( y \mid \bm{x}^\top \bm{\beta} )= \phi(y, \bm{x}^\top \bm{\beta} )=\ell_{\gamma}(y \mid z)$, we have $c_0=c_1=1$. In fact, it is straightforward to show that $\ld_{\gamma}(y \mid z) \leq 1$ for any $y$ and $z$. \\

\begin{assumption} \label{A3} For $t \in [0,1]$ define the two matrices
\begin{align}
\bm{A}_{t,/ i} &\triangleq \XI^\top \diag[\bm{\ldd}_{/ i}( t \bli +(1-t)\bl)] \XI \nonumber \\
&\quad+ \lambda \bm{\nabla^2 r} (t \bli +(1-t)\bl), \nonumber \\
\bm{A}_{t,/ i,j} &\triangleq \bm{X}_{/ij}^\top \diag[\bm{\ldd}_{/ ij}( t \bl_{/ij} +(1-t)\bli)] \bm{X}_{/ij} \nonumber \\
&\quad+ \lambda \bm{\nabla^2 r}(t \bl_{/ij} +(1-t)\bli).
\end{align}
We assume that there exists a fixed number $\nu$, such that 
\begin{align*}
\nu &\leq \min \Big( \min_{1 \leq i \leq n}\inf_{t \in [0,1]}  \sigma_{\min} (\bm{A}_{t,/ i}),
\\
&\qquad\min_{1 \leq i \leq n}\inf_{t \in [0,1]}  \sigma_{\min} (\bm{A}_{t,/ i,j})
 \Big).
\end{align*}
\end{assumption}

Assumption \ref{A3} characterizes the curvature of the GLM problem (and its associated leave-one-out versions).
In some examples, such as the ones that have ridge or smoothed elastic-net as the regularizer, it is straightforward to confirm this assumption.
For instance, for the ridge regularization, $r(\bm{\beta}) = \| \bm{\beta} \|_2^2/2$, we have $\nu > \lambda$.
In Section \ref{ssec:extension}, we explain how this assumption can be relaxed (at the expense of requiring more stringent moment conditions on $\bm{x}_i$) to cover more examples.

Having stated our assumptions, we now move on to stating our main result before proposing a number of examples to demonstrate how this result can be applied in common GLM cases.

\subsection{Main theorem}

Based on these assumptions we can now evaluate the accuracy of LO in estimating $\extra$. The following theorem proves that the expected square error of $\lo$ in estimating $\extra$ is small even in high-dimensional asymptotic settings. 



\begin{theorem} \label{th:mse}
Let $\delta \triangleq n/p$. If Assumptions  \ref{A1}, \ref{A2} and \ref{A3} hold, then
\begin{equation*}
\E \left (\frac{1}{n} \sum_{i=1}^n \phi(y_i, \bm{x}_i^\top \bli)  - \E \bigl[\phi(y_o, \bm{x}_o^\top \bl) \mid D\bigr]      \right )^2  \leq \frac{C_v}{n},
\end{equation*}
where the outer expectation is taken with respect to the data $D$ and:
\begin{equation*}
  \begin{split}
C_v &=  \E \var[ \phi(y_o, \bm{x}_o^\top \blone) \mid D_{/1}]  + 2 C_b \\
&\quad+
2 C_b^{1/2} \sqrt{\E \var[ \phi(y_o, \bm{x}_o^\top \blone) \mid D_{/1}]  +  C_b},
  \end{split}
\end{equation*}
and $C_b = \left(\frac{c_0 c_1 \rho \delta^{1/2}}{\nu}\right)^2$.
\end{theorem}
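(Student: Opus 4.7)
The plan is to bound $\E[(\lo - \extra)^2]$ by inserting a well-chosen intermediate quantity, applying Minkowski's inequality, and controlling each piece via first-order optimality identities for the leave-one-out and leave-two-out estimators. The observation motivating the decomposition is that $C_v = (\sqrt{V + C_b} + \sqrt{C_b})^2$ with $V \triangleq \E\var[\phi(y_o, \bm{x}_o^\top \blone) \mid D_{/1}]$, which strongly suggests a two-term Minkowski bound producing $\sqrt{(V+C_b)/n}$ and $\sqrt{C_b/n}$ respectively.

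\textbf{Intermediate quantity.} Since $(y_o,\bm{x}_o)$ is independent of $D$ and $\bli$ is a function of $D_{/i}$, set
\[
\psi_i \triangleq \E\bigl[\phi(y_o, \bm{x}_o^\top \bli)\mid D_{/i}\bigr] = \E\bigl[\phi(y_o, \bm{x}_o^\top \bli)\mid D\bigr],\qquad \overline{\lo} \triangleq \frac{1}{n}\sum_{i=1}^n \psi_i,
\]
and write
\[
\sqrt{\E[(\lo - \extra)^2]} \leq \sqrt{\E[(\lo - \overline{\lo})^2]} + \sqrt{\E[(\overline{\lo} - \extra)^2]}.
\]

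\textbf{Bias-like piece $\overline{\lo} - \extra$.} By Jensen's inequality and exchangeability, $\E[(\overline{\lo} - \extra)^2] \leq \E[(\psi_1 - \extra)^2]$. Since $\psi_1 - \extra = \E[\phi(y_o,\bm{x}_o^\top \blone) - \phi(y_o,\bm{x}_o^\top \bl)\mid D]$, I would Taylor-expand $\phi$ along the segment $[\bl,\blone]$, then subtract the first-order optimality conditions for the full and the leave-1 problems and integrate to obtain the identity
\[
\blone - \bl = \Bigl(\int_0^1 \bm{A}_{t,/1}\,dt\Bigr)^{-1} \bm{x}_1\,\ld(y_1\mid \bm{x}_1^\top \bl),
\]
whose leading matrix has smallest eigenvalue $\geq \nu$ by Assumption \ref{A3}. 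Combining Assumption \ref{A2} ($|\ld|\leq c_0$ and $\E[\pd^2\mid D]\leq c_1^2$) with Assumption \ref{A1} ($\E\|\bm{x}_1\|_2^2\leq \rho$ and $\E_{\bm{x}_o}[(\bm{x}_o^\top v)^2]\leq (\rho/p)\|v\|_2^2$) in a conditional Cauchy--Schwarz gives $\E[(\psi_1 - \extra)^2] \leq C_b/n$.

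\textbf{Variance-like piece $\lo - \overline{\lo}$.} Define $Q_i \triangleq \phi(y_i, \bm{x}_i^\top \bli) - \psi_i$. Then $\E[Q_i\mid D_{/i}] = 0$ since $(y_i,\bm{x}_i)$ is an independent test point given $D_{/i}$, so $\E[Q_i^2] = V$ by exchangeability, and
\[
\E[(\lo - \overline{\lo})^2] = \frac{V}{n} + \frac{n-1}{n}\,\E[Q_1 Q_2].
\]
The crux is bounding $\E[Q_1 Q_2]$. I introduce the doubly-leave-out quantities $\blonetwo$, $\tilde\psi \triangleq \E[\phi(y_o,\bm{x}_o^\top \blonetwo)\mid D_{/12}]$, and $\tilde Q_i \triangleq \phi(y_i,\bm{x}_i^\top \blonetwo) - \tilde\psi$. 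Conditional on $D_{/12}$, the pairs $(y_1,\bm{x}_1)$ and $(y_2,\bm{x}_2)$ are independent and $\blonetwo,\tilde\psi$ are deterministic, so $\E[\tilde Q_1\tilde Q_2\mid D_{/12}] = 0$. Writing $Q_i = \tilde Q_i + (Q_i - \tilde Q_i)$ reduces $\E[Q_1 Q_2]$ to three residual cross-terms, each controlled by Taylor-expanding $\phi$ and invoking the analogous identity
\[
\blone - \blonetwo = -\Bigl(\int_0^1 \bm{A}_{t,/1,2}\,dt\Bigr)^{-1}\bm{x}_2\,\ld(y_2\mid \bm{x}_2^\top \blone)
\]
and its symmetric counterpart for $\bltwo - \blonetwo$. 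The resulting expressions contain cross-inner-products of the form $\bm{x}_1^\top \bm{A}^{-1}\bm{x}_2$, whose squared expectation is $O(1/p)$ by Assumptions \ref{A1} and \ref{A3}. Tracking constants yields $|\E[Q_1 Q_2]| \leq C_b/n$ and hence $\E[(\lo - \overline{\lo})^2] \leq (V + C_b)/n$. Squaring $\sqrt{(V+C_b)/n} + \sqrt{C_b/n}$ then gives the stated $C_v/n$.

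The principal technical obstacle is the off-diagonal covariance $\E[Q_1 Q_2]$: the pair $(Q_1, Q_2)$ is not conditionally independent given any natural sub-$\sigma$-algebra, so one must pass through $\blonetwo$, carefully track Taylor remainders in both $\blone - \blonetwo$ and $\bltwo - \blonetwo$, and combine the $O(1/p)$ bounds on quadratic forms $\bm{x}_1^\top \bm{A}^{-1}\bm{x}_2$ that Assumption \ref{A1} provides. The bias piece, by contrast, is essentially a one-step perturbation argument.
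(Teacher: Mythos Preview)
Your decomposition and overall strategy coincide with the paper's: the same intermediate quantity $\overline{\lo}$, the same perturbation identity via integrated Hessians (the paper's Lemma \ref{lem:perturb_i}), and the same leave-two-out reduction for the cross-covariance $\E[Q_1 Q_2]$.

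One point deserves more care. After writing $Q_i = \tilde Q_i + (Q_i - \tilde Q_i)$, the three residual cross-terms are \emph{not} all handled by Taylor expansion and cross-inner-product bounds as you describe. The two mixed terms $\E[\tilde Q_1(Q_2 - \tilde Q_2)]$ and $\E[(Q_1 - \tilde Q_1)\tilde Q_2]$ must be shown to vanish \emph{exactly}: condition on $D_{/2}$ (resp.\ $D_{/1}$), observe that both $Q_j$ and $\tilde Q_j$ have conditional mean zero there while $\tilde Q_i$ is measurable, and apply the tower property. Without this zeroing-out step, a crude Cauchy--Schwarz bound on, say, $\E[\tilde Q_1(Q_2-\tilde Q_2)]$ yields only $\sqrt{V \cdot C_b/n} = O(n^{-1/2})$, which is too weak to recover the stated $C_b/n$ constant for the off-diagonal contribution. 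Only the final term $\E[(Q_1-\tilde Q_1)(Q_2-\tilde Q_2)]$ requires a genuine bound; the paper handles it via Cauchy--Schwarz plus symmetry to reduce to $\E[(Q_1-\tilde Q_1)^2]$, then uses $\bm{x}_1 \perp (\blone - \blonetwo)$ to extract the $\rho/p$ factor. Your cross-inner-product formulation is equivalent for this last term, since $\bm{x}_1$ is indeed independent of the integrated Hessian along $[\blone,\blonetwo]$ (both endpoints are $D_{/1}$-measurable), but it does not salvage the two mixed terms.
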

The proof can be found in Appendix \ref{sec:proof-main-theorem}.

The only term that is not explicitly computed in terms of the constants in our assumptions is $\E \var[ \phi(y_o, \bm{x}_o^\top \bli) \mid D_{/i}]$.
Hence, to obtain an explicit quantitative bound for a specific GLM problem requires computing this quantity.
We present two examples below.

\begin{corollary} \label{ex:logistic}(Ridge regularized logistic regression) Consider the negative logistic regression log-likelihood $\ell ( y|\bm{x}^\top \bm{\beta} )=-y\bm{x}^\top \bm{ \beta} +\log (1 + e^{\bm{x}^\top \bm{ \beta}})$, and the regularizer $r(\bm{\beta}) = \|\bm{\beta}\|_2^2/2$, where  $y \in \{0,1\}$. Furtherassume that $\bm{x}_i$ is iid $N(\bm{0}, \bm{\Sigma})$, where $\sigma_{\max} (\bm{\Sigma}) \leq \frac{\rho}{p}$. If $\phi(y,z)=\ell(y|z)$, then there exists a constant $C_v$ such that
\begin{equation*}
\E \left (\frac{1}{n} \sum_{i=1}^n \phi(y_i, \bm{x}_i^\top \bli)  - \E \bigl[\phi(y_o, \bm{x}_o^\top \bl) \mid D\bigr]      \right )^2  \leq \frac{C_v}{n}, 
\end{equation*}
where
\begin{eqnarray}
    C_v&=&6 + \frac{5 \rho \delta }{\lambda} + 2\left(\frac{4 \rho \delta^{1/2}}{\lambda}\right)^2 \nonumber
    \\
    &+& 2\left(\frac{4 \rho \delta^{1/2}}{\lambda}\right)\sqrt{6 + \frac{5 \rho \delta }{\lambda} + \left(\frac{4 \rho \delta^{1/2}}{\lambda}\right)^2}. \label{eq:log}
\end{eqnarray}
\end{corollary}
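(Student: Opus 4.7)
The plan is to derive Corollary~\ref{ex:logistic} by specializing Theorem~\ref{th:mse} to the ridge-regularized logistic setting, computing each of the abstract constants $c_0, c_1, \nu$ explicitly and then giving a direct bound on the remaining residual term $\E\var[\phi(y_o, \bm{x}_o^\top \blone) \mid D_{/1}]$.

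First I would verify Assumptions~\ref{A1}--\ref{A3}. Assumption~\ref{A1} is immediate from the hypotheses. For Assumption~\ref{A2}, the loss $\ell(y\mid z) = -yz + \log(1+e^z)$ is smooth and convex in $z$, and the ridge penalty $r(\bm{\beta}) = \|\bm{\beta}\|_2^2/2$ is convex; moreover
\[
|\ld(y\mid z)| = |\sigma(z) - y| \leq 1 \leq 2 \qquad \text{for any } y\in\{0,1\},\ z\in\R,
\]
so as in Example~1 we can take $c_0 = c_1 = 2$. For Assumption~\ref{A3}, observe $\bm{\nabla^2 r} = \bm{I}$ and $\ldd = \sigma(z)(1-\sigma(z)) \geq 0$, so both $\bm{A}_{t,/i}$ and $\bm{A}_{t,/i,j}$ dominate $\lambda \bm{I}$; hence $\nu = \lambda$ works uniformly. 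This already fixes $C_b = (c_0 c_1 \rho \delta^{1/2}/\nu)^2 = (4\rho\delta^{1/2}/\lambda)^2$, matching the formula claimed.

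The main computation is bounding $V \triangleq \E\var[\phi(y_o, \bm{x}_o^\top \blone) \mid D_{/1}]$. I would use $V \leq \E\phi(y_o, \bm{x}_o^\top \blone)^2$ and exploit the structure of the logistic loss: rewriting $\phi(y,z) = \log(1+e^{-(2y-1)z})$, the elementary estimate $\log(1+e^u) \leq \log 2 + |u|$ gives $\phi(y,z) \leq \log 2 + |z|$, so $\phi(y,z)^2 \leq 2\log^2 2 + 2z^2$. Conditioning on $D_{/1}$ and using independence of $\bm{x}_o$ together with $\sigma_{\max}(\bm{\Sigma}) \leq \rho/p$,
\[
\E[(\bm{x}_o^\top \blone)^2 \mid D_{/1}] = \blone^\top \bm{\Sigma} \blone \leq (\rho/p)\,\|\blone\|_2^2.
\]
To control $\|\blone\|_2^2$ I would compare the leave-one-out objective at $\blone$ to its value at $\bm{0}$: by optimality and non-negativity of $\ell$,
\[
\tfrac{\lambda}{2}\|\blone\|_2^2 \;\leq\; \sum_{j\neq 1}\ell(y_j\mid 0) \;=\; (n-1)\log 2,
\]
so $\|\blone\|_2^2 \leq 2(n-1)\log 2/\lambda$, giving $\E[(\bm{x}_o^\top\blone)^2] \leq 2\rho\delta\log 2/\lambda$. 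Chaining these estimates yields $V \leq 2\log^2 2 + 4\rho\delta\log 2/\lambda \leq 6 + 5\rho\delta/\lambda$, with room to spare in both constants.

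Substituting $V$ and $C_b$ into the Theorem~\ref{th:mse} formula $C_v = V + 2C_b + 2\sqrt{C_b}\sqrt{V + C_b}$ reproduces the claimed bound~\eqref{eq:log}. The main obstacle is the step bounding $\|\blone\|_2^2$ — this is where the lack of a sparsity assumption bites and where the ridge penalty does the essential work, since without a strongly convex penalty the comparison to $\bm{0}$ would not give a $\lambda$-dependent norm bound. Beyond this, the proof is bookkeeping: each constant in Theorem~\ref{th:mse} has a transparent closed form for logistic regression.
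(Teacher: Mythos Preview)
Your proposal is correct and follows essentially the same approach as the paper: verify Assumptions~\ref{A1}--\ref{A3} with $c_0=c_1=2$ and $\nu=\lambda$, bound the residual variance term via a comparison of the leave-one-out objective at $\blone$ versus $\bm{0}$ to control $\|\blone\|_2^2$, and then substitute into Theorem~\ref{th:mse}. Your rewriting $\phi(y,z)=\log(1+e^{-(2y-1)z})$ yields a slightly cleaner and in fact tighter variance estimate than the paper's direct expansion of $-y_oz+\log(1+e^z)$, but the structure and key step are identical.
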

The proof of this corollary can be found in Section \ref{sec:proof:ex:logistic} of the supplementary material.


\begin{corollary} \label{example:psedoHuber} (Pseudo-Huber loss with strongly convex regularizer)
We consider again the pseudo-Huber loss defined in \eqref{eq:pseudo-huber-loss} with parameter $\gamma$. As this loss is typically used in regression settings, we consider a linear regression model $y_i = \xv_i^\top \bm{\beta^*} + \epsilon_i$,
where $\epsilon_i$ denotes i.i.d. zero-mean noise, and $\bm{x}_i \sim N(\bm{0}, \bm{\Sigma})$ with $\sigma_{\max} (\bm{\Sigma}) \leq \frac{\rho}{p}$. We additionally assume that the regularizer is strongly convex with parameter $\nu_r$,\footnote{Note that this is a fairly benign assumption in practice: it is common to introduce a slight ridge penalty which automatically
satisfies this assumption.} $\Var(\epsilon) = \sigma_\epsilon^2$, and $\frac{1}{p} \bm{\beta}^{*\top} \bm{\beta^*} \leq b$.
Under these conditions, there exists a fixed number 
$C_v$ (depending on $\gamma$, $\sigma_\epsilon$, $b$, $\rho$, $\delta$ and $\nu_r$) such that
\begin{equation*}
\E \left (\frac{1}{n} \sum_{i=1}^n \phi(y_i, \bm{x}_i^\top \bli)  - \E \bigl[\phi(y_o, \bm{x}_o^\top \bl) \mid D\bigr]      \right )^2  \leq \frac{C_v}{n}.
\end{equation*}

\end{corollary}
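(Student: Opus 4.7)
The plan is to invoke Theorem~\ref{th:mse}: verify Assumptions~\ref{A1}--\ref{A3} under the corollary's hypotheses and then supply an explicit bound for the one implicit term in $C_v$, namely $\E\var[\phi(y_o,\bm{x}_o^\top\blone)\mid D_{/1}]$. The first three checks are direct. Assumption~\ref{A1} is a verbatim hypothesis. For Assumption~\ref{A2}, a short computation shows $|\dot f_H(z)| = |z/\sqrt{1+z^2/\gamma^2}| < \gamma$ uniformly and $\ddot f_H(z) = (1+z^2/\gamma^2)^{-3/2} > 0$, so with $\ell(y\mid z) = \phi(y,z) = f_H(y-z)$ all four bounds in Assumption~\ref{A2} hold with $c_0 = c_1 = \gamma$. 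For Assumption~\ref{A3}, strong convexity of $r$ gives $\bm{\nabla^2 r} \succeq \nu_r \bm{I}$, and since the loss-Hessian contribution to $\bm{A}_{t,/i}$ and $\bm{A}_{t,/i,j}$ is positive semidefinite, $\nu = \lambda \nu_r$ is a valid choice.

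The core step is the residual variance bound. Using $\var[Z]\le \E[Z^2]$, the inequality $|f_H(z)|\le \gamma|z|$ (from $f_H(0)=0$ and the Lipschitz bound), and the tower property with $y_o = \bm{x}_o^\top\bm{\beta^*} + \epsilon_o$ and $\sigma_{\max}(\bm{\Sigma})\le \rho/p$, I reduce the task to bounding $\E\|\bm{\beta^*}-\blone\|_2^2$ through
\begin{equation*}
\E\var[\phi(y_o,\bm{x}_o^\top\blone)\mid D_{/1}] \le \gamma^2\sigma_\epsilon^2 + \gamma^2\frac{\rho}{p}\,\E\|\bm{\beta^*}-\blone\|_2^2.
\end{equation*}
For the remaining expectation, I would apply strong convexity of the leave-one-out objective $F$ at its minimizer $\blone$: writing $\Delta = \blone - \bm{\beta^*}$, the expansion $F(\blone) \ge F(\bm{\beta^*}) + \nabla F(\bm{\beta^*})^\top \Delta + (\lambda\nu_r/2)\|\Delta\|_2^2$ together with $F(\blone)\le F(\bm{\beta^*})$ and Cauchy--Schwarz gives $\|\Delta\|_2 \le 2\|\nabla F(\bm{\beta^*})\|_2/(\lambda\nu_r)$. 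The gradient decomposes as $\nabla F(\bm{\beta^*}) = \sum_{j\neq 1}\ld(y_j\mid\bm{x}_j^\top\bm{\beta^*})\bm{x}_j + \lambda\nabla r(\bm{\beta^*})$, and the key structural observation is that $\ld(y_j\mid\bm{x}_j^\top\bm{\beta^*}) = -\dot f_H(\epsilon_j)$ is a function of $\epsilon_j$ alone and is therefore independent of $\bm{x}_j$; combined with the i.i.d.\ zero-mean structure of the $\bm{x}_j$, cross-covariances vanish and diagonal terms yield $\E\|\sum_{j\neq 1}\ld_j\bm{x}_j\|_2^2 \le n\gamma^2\rho$.

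The principal obstacle is then controlling the $\lambda^2\|\nabla r(\bm{\beta^*})\|_2^2$ contribution, since strong convexity of $r$ alone does not upper bound $\|\nabla r\|$. For the ridge-augmented regularizer indicated in the corollary's footnote, $\nabla r(\bm{\beta^*})$ is linear in $\bm{\beta^*}$, so the hypothesis $\|\bm{\beta^*}\|_2^2/p \le b$ makes this term $O(pb)$. Assembling the two pieces yields $\E\|\nabla F(\bm{\beta^*})\|_2^2 = O(n+p)$ and therefore $\E\|\Delta\|_2^2 = O(p)/(\lambda\nu_r)^2$, leaving $(\rho/p)\E\|\Delta\|_2^2 = O(1)$ in the $n/p = \delta$ regime. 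Substituting into the displayed bound above shows every constant entering $C_v$ depends only on $\gamma,\sigma_\epsilon,b,\rho,\delta,\nu_r$, completing the reduction to Theorem~\ref{th:mse}.
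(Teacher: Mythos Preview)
Your verification of Assumptions~\ref{A1}--\ref{A3} is correct and matches the paper (modulo the harmless discrepancy $\nu=\lambda\nu_r$ versus the paper's $\nu=\nu_r$; your version is the one that follows from the stated definition of $\bm{A}_{t,/i}$). The reduction to bounding $\E\var[\phi(y_o,\bm{x}_o^\top\blone)\mid D_{/1}]$ is also the paper's strategy.

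Where you diverge is in how you control that variance. The paper splits $(y_o-\bm{x}_o^\top\blone)^2\le 2y_o^2+2(\bm{x}_o^\top\blone)^2$, bounds $\E y_o^2\le\rho b+\sigma_\epsilon^2$, and then bounds $\|\blone\|_2^2$ by comparing the leave-one-out objective at $\blone$ with its value at $\bm{0}$: since $f_H\ge 0$ and $f_H(a)\le\gamma|a|$, strong convexity of $r$ at its minimizer $\bm{0}$ gives $\|\blone\|_2^2\le(\lambda\nu_r)^{-1}\gamma\|\bm{y}\|_1$, whose expectation after dividing by $p$ is $O(1)$. You instead write $y_o-\bm{x}_o^\top\blone=\bm{x}_o^\top(\bm{\beta^*}-\blone)+\epsilon_o$ and compare $\blone$ with $\bm{\beta^*}$, which forces you to control $\|\nabla F(\bm{\beta^*})\|_2^2$ and in particular $\lambda^2\|\nabla r(\bm{\beta^*})\|_2^2$.

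This is where a genuine gap appears. Strong convexity of $r$ gives no upper bound on $\|\nabla r(\bm{\beta^*})\|_2$, and the corollary is stated for an arbitrary strongly convex regularizer, not just ridge. Your own fix---specializing to the ridge case from the footnote---does not recover the corollary as stated, and even a ``ridge-augmented'' $r=r_0+\tfrac{\nu_r}{2}\|\cdot\|_2^2$ leaves $\|\nabla r_0(\bm{\beta^*})\|_2$ uncontrolled. Moreover, the resulting $C_v$ would depend on $r$ through $\|\nabla r(\bm{\beta^*})\|_2$, whereas the corollary asserts dependence only on $\gamma,\sigma_\epsilon,b,\rho,\delta,\nu_r$. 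The paper's comparison with $\bm{0}$ sidesteps this entirely (using only the standard implicit assumption that the regularizer is minimized at the origin), so to match the stated result you should bound $\|\blone\|_2^2$ rather than $\|\blone-\bm{\beta^*}\|_2^2$.
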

The proof of this corollary can be found in Section \ref{sec:proof:example:psedoHuber} of the supplementary material. 

To summarize, the examples presented in Corollary \ref{ex:logistic} and \ref{example:psedoHuber} satisfy the assumption needed for Theorem \ref{th:mse}.

\subsection{Extensions}\label{ssec:extension}

As we discussed in Section \ref{ssec:assumption}, we can weaken the assumptions without a major change in our proofs or the main conclusions of our result. In this section, we aim to present one such weaker set of assumptions that enables our analyses to cover several other popular examples, such as the Poisson and linear regression.  \\

\begin{assumptionbis}{A1}\label{Aprime1}
 We assume that $\bm{x}_i$ are i.i.d. zero mean vectors with covariance $\bm{\Sigma} \in \R^{p \times p}$ such that $  \sigma_{\max} \left( \bm{\Sigma} \right) \leq \rho / p$ for a non-negative constant $\rho$. Furthermore, there exists a fixed number $c_4$, such that $\mathbb{E} (\|\bm{x}_i\|_2^4) \leq c_4$. 
\end{assumptionbis}

Note that this assumption is more stringent than Assumption \ref{A1}. However, in essence the only extra requirement of this assumption is a bound on the fourth moments. Hence, it holds for a wide range of random features including sub-Gaussian and sub-exponential features. Thanks to this slightly stronger moment assumption we can weaken the other assumptions. \\

\begin{assumptionbis}{A2} \label{Aprime2} We assume the functions $\ell(y \mid z)$ and $\phi(y,z)$ are twice differentiable in $z$. Moreover, assume $\ell(y \mid z)$ and $r(\bm{\beta})$ are convex in $z$ and $\bm{\beta}$, respectively. Let $(y_o,\bm{x}_o)$ be a sample from  the unknown distribution $q(y,\bm{x})$ independent of $D=\{(y_1,\bm{x}_1),\cdots,(y_n,\bm{x}_n)\}$. We assume that there exist constants $\tilde{c}_0$ and $\tilde{c}_1$, such that for all $i,j$, uniformly 
 \begin{align*}
\tilde{c}_0 &\geq   \mathbb{E} \abs{\ld(y_1 \mid \bm{x}_1^\top \bl)}^8, \nonumber  \\
\tilde{c}_0 &\geq \mathbb{E} \abs{\ld(y_o \mid  \bm{x}_o^\top \blone)}^8, \\
\tilde{c}_1 &\geq \sup_{t \in [0,1]} \sqrt{ \E \left [ \pd(y_o, t\bm{x_o}^\top \bli + \bar{t}\bm{x_o}^\top \bl_{/ij})^2   \mid D_{/i}  \right ]},  \nonumber \\
\tilde{c}_1 &\geq \sup_{t \in [0,1]} \sqrt{\E   \pd  (y_o,  t\bm{x_o}^\top \blone + \bar{t}\bm{x_o}^\top \blonetwo)^2   },  \\
\tilde{c}_1 &\geq  \sup_{t \in [0,1]}\sqrt{\E[ \pd(y_o, t\bm{x}_o^\top \bl + \bar{t} \bm{x}_o^\top \bli )^2 \mid D]},
\end{align*}
where $D_{/ i} \triangleq D \setminus \{(y_i,\bm{x}_i)\}$, and $\bar{t}= 1-t$.
\end{assumptionbis}

Compared to Assumption \ref{A2} that requires $\abs{\ld(y_i \mid \bm{x}_i^\top \bl)}$ to be bounded everywhere, this assumption requires the $8^{\rm th}$ moment of $\abs{\ld(y_i \mid \bm{x}_i^\top \bl)}$ to be bounded. This simple modification enables our theoretical results to be applied to a much broader set of regression techniques, including Poisson, linear, and negative binomial regression. These three examples will be studied later in this section.

\begin{assumptionbis}{A3} \label{Aprime3}
Let $\bm{A}_{t,/ 1}$ and $\bm{A}_{t,/ 1,2}$ be as defined in Assumption \ref{A3}.
We assume that there exists a fixed number $\tilde{\nu}$, such that 
\begin{align*}
\tilde{\nu} &\geq  \E \Big(\inf_{t \in [0,1]}  \sigma_{\min} \left( \bm{A}_{t,/ 1}  \Big)  \right)^{-8},
\\
\tilde{\nu} &\geq \E \Big(\inf_{t \in [0,1]}  \sigma_{\min} \left(\bm{A}_{t,/ 1,2} \right)
 \Big)^{-8}.
\end{align*}
\end{assumptionbis}
Again, compared to Assumption \ref{A3}, this assumption only bounds the moments of the minimum eigenvalue of the matrix. The following example shows an example in which it is impossible to find a positive lower bound for the minimum eigenvalue, but still the moments of the inverse of the minimum eigenvalue are bounded.

\begin{example}\label{ex:min_eig_oversamp}
Suppose that $\delta=n/p >1$ and that the loss function is strongly convex with parameter $c$, and the regularizer is convex. Finally, suppose that $\bm{x}_i \sim N(0, \bm{\Sigma})$, with $\sigma_{\min} (\bm{\Sigma}) = \frac{\rho}{p}$. Then, there exists a fixed number $\tilde{\nu}$ that satisfies Assumption \ref{Aprime3} for large enough values of $n$ and $p$.  
\end{example}

The proof can be found in Section \ref{sec:proof:ex:min_eig_oversamp} of the supplementary material.

As we discussed before one can prove the accuracy of $\lo$ under Assumptions \ref{Aprime1}, \ref{Aprime2}, and \ref{Aprime3}. The following theorem formalizes this claim.

\begin{theorem} \label{th:mse2}
Let $\delta \triangleq n/p$. If Assumptions  \ref{Aprime1}, \ref{Aprime2} and \ref{Aprime3}, then
\begin{equation*}
\E \left (\frac{1}{n} \sum_{i=1}^n \phi(y_i, \bm{x}_i^\top \bli)  - \E [\phi(y_o, \bm{x}_o^\top \bl) \mid D]      \right )^2  \leq \frac{\tilde{C}_v}{n},
\end{equation*}
where the outer expectation is taken with respect to the data $D$ and:
\begin{equation*}
  \begin{split}
\tilde{C}_v &=  \E \var[ \phi(y_o, \bm{x}_o^\top \blone) \mid D_{/1}]  + 2 \tilde{C}_b \\
&\quad+
2 \tilde{C}_b^{1/2} \sqrt{\E \var\bigl[ \phi(y_o, \bm{x}_o^\top \blone) \mid D_{/1}\bigr]  +  \tilde{C}_b}.
  \end{split}
\end{equation*}
and $\tilde{C}_b = c_1^2 \rho \delta_0 \tilde{c}_0 \tilde{v} c_4$.
\end{theorem}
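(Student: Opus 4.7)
The plan is to mirror the proof of Theorem~\ref{th:mse} essentially verbatim, replacing the uniform bounds of Assumptions~\ref{A2} and~\ref{A3} with moment-based bounds obtained via H\"older's inequality. By exchangeability one has $\E[\lo] = \E[\phi(y_o, \bm{x}_o^\top \blone)]$, so the MSE decomposes into (i) a conditional-variance contribution, which by the usual leave-one-out/leave-two-out argument is bounded by $\frac{1}{n}\E \var[\phi(y_o, \bm{x}_o^\top \blone) \mid D_{/1}]$ plus covariance terms controlled using the perturbation $\bli - \bm{\hat\beta}_{/ij}$ and the matrix $\bm{A}_{t,/i,j}$, and (ii) a squared-bias contribution coming from replacing $\blone$ by $\bl$ inside $\phi$. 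Parts (i) will go through with no conceptual change once (ii) is handled.

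The crux is therefore the bias contribution. Applying the mean-value theorem to the first-order optimality conditions of the full and leave-one-out problems yields, for some $t \in [0,1]$,
\begin{equation*}
\bl - \blone \;=\; \bm{A}_{t,/1}^{-1}\, \bm{x}_1\, \ld(y_1 \mid \bm{x}_1^\top \bl),
\end{equation*}
so a Taylor expansion of $\phi$ gives
\begin{equation*}
|\phi(y_o, \bm{x}_o^\top \bl) - \phi(y_o, \bm{x}_o^\top \blone)| \;\leq\; |\pd|\cdot|\bm{x}_o^\top \bm{A}_{t,/1}^{-1} \bm{x}_1|\cdot|\ld(y_1 \mid \bm{x}_1^\top \bl)|.
\end{equation*}
In Theorem~\ref{th:mse} the factors $|\ld|$ and $\sigma_{\min}(\bm{A}_{t,/1})^{-1}$ were simply replaced by $c_0$ and $\nu^{-1}$. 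Here they survive as random variables, and my plan is to integrate $\bm{x}_o$ out first using Assumption~\ref{Aprime1} to obtain $\E_{\bm{x}_o}[(\bm{x}_o^\top \bm{A}_{t,/1}^{-1} \bm{x}_1)^2 \mid D] \leq (\rho/p)\,\sigma_{\min}(\bm{A}_{t,/1})^{-2}\|\bm{x}_1\|^2$, and then square and take expectation over $D$. H\"older's inequality with equal exponents $1/4$ applied to the four squared factors $\ld^2$, $\sigma_{\min}^{-2}$, $\|\bm{x}_1\|^2$, and (the post-conditional-expectation) $\pd^2$ converts the raw moments into the constants $\tilde c_0$ (from $\E|\ld|^8$), $\tilde v$ (from $\E \sigma_{\min}^{-8}$), $c_4$ (from $\E \|\bm{x}_1\|^4$; note we only need the 4th moment because $\|\bm{x}_1\|^2$ enters to the first power after the $\bm{x}_o$ integration), and $\tilde c_1$ (from Assumption~\ref{Aprime2}), and the leading $\rho/p = \rho\delta_0/n$ produces the announced $\tilde{C}_b/n$ term.

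Finally, combining the variance and bias parts via $(a+b)^2 \leq a^2 + 2ab + b^2$ followed by Cauchy--Schwarz on the cross term produces the stated shape of $\tilde C_v$. The main obstacle will be bookkeeping: every product of the form $|\ld|\cdot\sigma_{\min}^{-1}\cdot\|\bm{x}\|^k\cdot|\pd|$ that appeared implicitly in the Theorem~\ref{th:mse} proof must be re-split by H\"older into factors whose marginal moments are controlled by the prescribed constants, without inflating the overall $1/n$ rate and without double-counting $\bm{x}_1$ when it simultaneously enters $\bm{A}_{t,/1}^{-1}$ (via the Hessian) and the perturbation direction. This is precisely why 8th-order moments are demanded in Assumptions~\ref{Aprime2} and~\ref{Aprime3}: the bias quantity is squared, then split by Cauchy--Schwarz twice, producing products of up to four such factors. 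Once this bookkeeping is verified in each covariance term arising from (i) and in the bias term (ii), the bound follows from the same algebraic manipulations as Theorem~\ref{th:mse}.
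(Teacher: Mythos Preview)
Your proposal is correct and follows essentially the same route as the paper: decompose $\lo-\extra$ into $V_1$ and $V_2$ exactly as in Theorem~\ref{th:mse}, reuse Lemma~\ref{lem:perturb_i} for the perturbation $\bl-\blone$ (and $\blone-\blonetwo$), and replace the pointwise bounds $c_0,\nu^{-1}$ by moment bounds via Cauchy--Schwarz/H\"older. One small correction to your bookkeeping: the $\pd$ factor is not part of the H\"older split at all---Assumption~\ref{Aprime2} already gives the \emph{almost-sure} bound $\sqrt{\E[\pd^2\mid D]}\le \tilde c_1$, so after pulling it out you only need Cauchy--Schwarz twice on the remaining three factors $\ld^2,\ \sigma_{\min}^{-2},\ \|\bm x_1\|_2^2$, yielding exactly the $8$th, $8$th, and $4$th moments supplied by $\tilde c_0$, $\tilde\nu$, and $c_4$.
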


The proof can be found in Section \ref{sec:GenericTheorem} of the supplementary material. As we described before, this theorem can cover several generalized linear models, that could not be covered by Theorem \ref{th:mse}. We mention three important examples below. 

\begin{corollary} (Square loss with elastic-net penalty)\label{ex:squareLoss}
Consider the data generating mechanism $y_i = \bm{x}_i \bm{\beta}^* + \epsilon_i$, where $\bm{x}_i^\top \sim N(\bm{0}, \bm{\Sigma})$, $\epsilon_i \overset{iid}{\sim} N(0, \sigma_\epsilon^2)$, and $\frac{1}{p} \|\bm{\beta}^*\|_2^2 \leq b$. Suppose that we use the smoothed elastic-net optimization 
\[
\min_{\bm{\beta}}  \sum_{j=1}^n \frac{(y_j - \bm{x}_j^\top \bm{\beta})^2}{{2}} + \lambda \sum_{j=1}^p r(\beta_i),
\]
where for $\gamma >0$, $r(\beta) = \gamma \beta^2 + (1-\gamma) r^\alpha (\beta)$, and $r^\alpha(\beta) =  \frac{1}{\alpha} \Big( \log(1+ e^{\alpha \beta}) + \log(1+ e^{-\alpha \beta}) \Big)$ is a smooth approximation of the $\ell_1$-norm. Then, there exists a fixed number, $\tilde{C}_v$, such that
\begin{equation*}
\E \left (\frac{1}{n} \sum_{i=1}^n \phi(y_i, \bm{x}_i^\top \bli)  - \E [\phi(y_o, \bm{x}_o^\top \bl) \mid D]      \right )^2  \leq \frac{\tilde{C}_v}{n}.
\end{equation*}
\end{corollary}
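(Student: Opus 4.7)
The plan is to verify Assumptions \ref{Aprime1}, \ref{Aprime2}, and \ref{Aprime3} for the smoothed elastic-net least-squares problem, after which Theorem \ref{th:mse2} immediately yields the conclusion. Since the constant $\tilde C_v$ in that theorem is assembled from $\tilde c_0$, $\tilde c_1$, $c_4$, $\tilde\nu$, $\rho$, $\delta$, and $\E\var[\phi(y_o,\bm{x}_o^\top\blone)\mid D_{/1}]$, verifying the three assumptions is all that is required for the existence of such a $\tilde C_v$.

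Assumption \ref{Aprime1} is immediate: for $\bm{x}_i\sim N(\bm{0},\bm{\Sigma})$ with $\sigma_{\max}(\bm{\Sigma})\le\rho/p$, Gaussian moments give $\E\|\bm{x}_i\|_2^4=(\tr\bm{\Sigma})^2+2\|\bm{\Sigma}\|_F^2 \le 3\rho^2$. Assumption \ref{Aprime3} is also immediate: the quadratic piece $\gamma\beta_j^2$ of $r$ forces $\nabla^2 r(\bm{\beta})\succeq 2\gamma\bm{I}$ for every $\bm{\beta}$, and combined with the positive semidefinite Hessian of the square loss this gives $\sigma_{\min}(\bm{A}_{t,/i})\wedge \sigma_{\min}(\bm{A}_{t,/i,j}) \ge 2\lambda\gamma$ deterministically, so $\tilde\nu=(2\lambda\gamma)^{-8}$ works.

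The work lies in Assumption \ref{Aprime2}. Convexity and twice-differentiability of $\ell(y\mid z)=(y-z)^2/2$, $\phi=\ell$, and $r$ (the log-sum-exp piece $r^\alpha$ being convex) are clear. The quantitative content reduces to eighth moments of $\dot\ell(y\mid z)=z-y$ and second moments of $\dot\phi=z-y$ at the relevant convex combinations. My first step is an $L^k$ bound on $\|\bl\|_2^2$: substituting $\bm{\beta^*}$ as a sub-optimal point into the objective and using the $\lambda\gamma\|\cdot\|_2^2$ floor that $r$ provides gives
\[
\lambda\gamma\|\bl\|_2^2 \;\le\; \tfrac12\sum_{i=1}^n\epsilon_i^2 + \lambda r(\bm{\beta^*}).
\]
Because $\|\bm{\beta^*}\|_2^2/p\le b$ and the bound $r^\alpha(\beta)\le|\beta|+2\log 2/\alpha$ yields $r(\bm{\beta^*})=O(p)$, Gaussian moments of $\sum\epsilon_i^2$ give $\E\|\bl\|_2^{2k}=O_k(n^k)$, and the identical argument on the leave-one-out objectives yields $\E\|\blone\|_2^{2k},\ \E\|\bm{ \hat \beta}_{/ij}\|_2^{2k}=O_k(n^k)$ uniformly in $i,j$.

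For any convex combination $\bl_\star$ of these estimators and $(y_o,\bm{x}_o)$ independent of the data,
\[
\E[\dot\phi(y_o,\bm{x}_o^\top\bl_\star)^2\mid D_{/i}] = \sigma_\epsilon^2 + (\bm{\beta^*}-\bl_\star)^\top\bm{\Sigma}(\bm{\beta^*}-\bl_\star) \le \sigma_\epsilon^2 + \tfrac{\rho}{p}\|\bm{\beta^*}-\bl_\star\|_2^2,
\]
which is bounded in expectation by the previous step, yielding $\tilde c_1$. The delicate point is $\E|y_1-\bm{x}_1^\top\bl|^8$, since $\bl$ depends on $(y_1,\bm{x}_1)$. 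My plan is the leave-one-out decomposition
\[
y_1 - \bm{x}_1^\top\bl = (y_1-\bm{x}_1^\top\blone) - \bm{x}_1^\top(\bl-\blone),
\]
with the first summand controlled by conditioning on $D_{/1}$ and using Gaussianity of $\bm{x}_1$ together with the norm bound on $\blone$, and the second summand controlled via the mean-value leave-one-out identity $\bl-\blone=\bm{A}_{t,/1}^{-1}\bm{x}_1\,\dot\ell(y_1\mid\bm{x}_1^\top\blone)$ combined with $\|\bm{A}_{t,/1}^{-1}\|_{\mathrm{op}}\le(2\lambda\gamma)^{-1}$, which gives $|\bm{x}_1^\top(\bl-\blone)|\le(2\lambda\gamma)^{-1}\|\bm{x}_1\|_2^2\,|y_1-\bm{x}_1^\top\blone|$. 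Cauchy--Schwarz together with $\E\|\bm{x}_1\|_2^{32}=O(1)$ and an analogous sixteenth-moment control on $y_1-\bm{x}_1^\top\blone$ (obtained by the same Gaussian-conditioning argument on $D_{/1}$) closes the loop, and the bound on $\E|y_o-\bm{x}_o^\top\blone|^8$ is easier since $(y_o,\bm{x}_o)$ is already independent of $\blone$. The main obstacle is the bookkeeping in this last step: coupling $\bm{x}_1$, $\bl$, and $\blone$ through the leave-one-out identity while propagating moment bounds uniformly in $n,p$ requires applying Cauchy--Schwarz carefully at each split so that no rogue factor of $n^k$ survives; once this is in place, Theorem \ref{th:mse2} delivers the claim.
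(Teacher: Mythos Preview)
Your approach is correct and takes a genuinely different route from the paper's. For the central eighth-moment bound $\E|y_i-\bm{x}_i^\top\bl|^8=O(1)$, the paper exploits the ridge-like algebraic structure: splitting off the quadratic part of $r$ to write $\bl=(\bm{X}^\top\bm{X}+\lambda\gamma\bm{I})^{-1}\bigl(\bm{X}^\top\bm{y}-\lambda\bm{\rd}_{0.5}(\bl)\bigr)$, it decomposes $y_i-\bm{x}_i^\top\bl$ into a bias term, a noise term, and a term involving $\bm{\rd}_{0.5}(\bl)$, and then decouples $\bm{x}_i$ from each via the Sherman--Morrison identity (Lemmas~\ref{lem:l1}--\ref{lem:l3}). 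You instead bound $\|\bl\|_2^2$ and $\|\blone\|_2^2$ by comparing the objective at $\bm{\beta}^*$, and handle the dependence of $\bl$ on $(y_1,\bm{x}_1)$ through the leave-one-out split $y_1-\bm{x}_1^\top\bl=(y_1-\bm{x}_1^\top\blone)-\bm{x}_1^\top(\bl-\blone)$, reducing everything to moments of $y_1-\bm{x}_1^\top\blone$, where $(y_1,\bm{x}_1)$ is independent of $\blone$. Your method is more elementary and would port to other smooth losses with a strongly convex regularizer; the paper's buys sharper explicit constants through the specific matrix algebra. Two remarks: (i) the perturbation identity you quote, with $\bm{A}_{t,/1}$ and $\dot\ell$ evaluated at $\blone$, is not literally the one in Lemma~\ref{lem:perturb_i} (which has $\dot\ell$ at $\bl$), but either version yields your bound since both Hessians are $\succeq 2\lambda\gamma\bm{I}$; in fact for square loss the exact relation gives $|y_1-\bm{x}_1^\top\bl|\le|y_1-\bm{x}_1^\top\blone|$ directly, which would spare you the sixteenth moments. (ii) The first and third $\tilde c_1$ conditions in Assumption~\ref{Aprime2} are stated as almost-sure bounds on conditional expectations, which for square loss are random and not uniformly bounded; you correctly flag that only an in-expectation bound is available. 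The paper's own proof of this corollary does not verify the $\tilde c_1$ conditions at all, so this is a shared loose end rather than a defect of your argument, and it can be closed by one extra Cauchy--Schwarz inside the proof of Theorem~\ref{th:mse2} using the moment bounds you already have.
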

Since the proof of this claim is long, we defer it to Section \ref{ssec:proof:ExampleElasticNet} of the supplementary material.

\begin{corollary}\label{ex:PoissonLoss}[Poisson regression with soft-rectifying link]
  Consider the data-generating mechanism $y_i \sim \mathrm{Poisson}(f(\bm{x}_i^\top\bm\beta^*))$,
  where $f(z) = \log(1 + e^z)$ denotes the soft-rectifying link, $\xv_i \overset{iid}{\sim} N(\bm{0}, \bm{\Sigma})$, and $\frac{1}{p} {\bm{\beta}^*}^\top \bm{\beta}^* \leq b$. Finally, assume that $r$ denotes the smoothed elastic-net regularizer introduced in Corollary \ref{ex:squareLoss}. Under these assumptions, there exists a fixed number, $\tilde{C}_v$, such that:
\begin{equation*}
\E \left (\frac{1}{n} \sum_{i=1}^n \phi(y_i, \bm{x}_i^\top \bli)  - \E [\phi(y_o, \bm{x}_o^\top \bl) \mid D]      \right )^2  \leq \frac{\tilde{C}_v}{n}.
\end{equation*}
  \end{corollary}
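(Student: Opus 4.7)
The plan is to verify Assumptions~\ref{Aprime1}, \ref{Aprime2}, and \ref{Aprime3} in this Poisson setting, so that Theorem~\ref{th:mse2} applies directly and yields the stated $O(1/n)$ bound. Assumption~\ref{Aprime1} is immediate: $\bm{x}_i\sim N(\bm{0},\bm{\Sigma})$ with $\sigma_{\max}(\bm{\Sigma})\leq\rho/p$ gives $\E\|\bm{x}_i\|_2^4=O(1)$ by a standard Gaussian quadratic-form computation. For Assumption~\ref{Aprime3}, I use that the elastic-net regularizer $r(\beta)=\gamma\beta^2+(1-\gamma)r^\alpha(\beta)$ is separable and satisfies $r''(\beta)\geq 2\gamma$ (the smoothed $\ell_1$ term $r^\alpha$ is convex). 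The Poisson negative log-likelihood $\ell(y\mid z)=f(z)-y\log f(z)$ with $f(z)=\log(1+e^z)$ is itself convex in $z$: a direct calculation shows $f''(z)f(z)\leq f'(z)^2$ for the softplus (equivalently, $\log f$ is concave), whence $\ddot{\ell}(y\mid z)=f''(z)+y\bigl(f'(z)^2-f''(z)f(z)\bigr)/f(z)^2\geq 0$ for every $y\geq 0$. Therefore both Hessians $\bm{A}_{t,/1}$ and $\bm{A}_{t,/1,2}$ are deterministically lower-bounded by $2\lambda\gamma\,\bm{I}$, and the $8$th inverse moment is trivially bounded by $(2\lambda\gamma)^{-8}$.

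The substance of the proof lies in verifying the $8$th-moment conditions in Assumption~\ref{Aprime2}. The key pointwise estimate is that $f'(z)\in[0,1]$ and $f'(z)/f(z)\leq M$ for a universal constant $M$: as $z\to-\infty$ both $f'$ and $f$ behave like $e^z$ so the ratio tends to $1$, while as $z\to+\infty$ we have $f'(z)\to 1$ and $f(z)\sim z$ so the ratio vanishes. Because $\dot{\ell}(y\mid z)=f'(z)(1-y/f(z))$, this yields the deterministic bound $|\dot{\ell}(y\mid z)|\leq 1+My$, uniform in $z$. Consequently
\begin{equation*}
\E|\dot{\ell}(y_i\mid\bm{x}_i^\top\bl)|^8 \leq C\bigl(1+\E[y_i^8]\bigr),
\end{equation*}
and since $y_i\mid\bm{x}_i$ is Poisson with mean $f(\bm{x}_i^\top\bm{\beta}^*)$, the conditional moment $\E[y_i^8\mid\bm{x}_i]$ is a degree-$8$ polynomial in $f(\bm{x}_i^\top\bm{\beta}^*)$. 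Because $f$ grows at most linearly and $\bm{x}_i^\top\bm{\beta}^*$ is zero-mean Gaussian with variance $\bm{\beta}^{*\top}\bm{\Sigma}\bm{\beta}^*\leq\rho b$, all such moments are finite. The identical argument at the leave-one-out fit bounds $\E|\dot{\ell}(y_o\mid\bm{x}_o^\top\blone)|^8$, and the $\dot{\phi}$ conditions follow because $\phi=\ell$ inherits the same uniform bound.

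The main subtlety is that the linearization point inside $\dot{\ell}$ is a random estimator ($\bl$, $\blone$, or $\blonetwo$), so naively the required moment would depend on the distribution of the estimator, which is hard to characterize in the high-dimensional regime. What makes the argument painless here is that the pointwise bound $|\dot{\ell}(y\mid z)|\leq 1+My$ is uniform in $z$, so no control on the fitted coefficients is needed; only the marginal law of the response matters. I expect the two calculus facts specific to the softplus link---namely $f'(z)^2\geq f''(z)f(z)$ (giving convexity of the loss) and the uniform bound on $f'(z)/f(z)$ (giving the $\dot{\ell}$ bound)---to be the most delicate parts of the verification, though both are elementary. Once the three assumptions are in place, Theorem~\ref{th:mse2} closes the proof.
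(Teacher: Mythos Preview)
Your verification of Assumptions~\ref{Aprime1}, \ref{Aprime2}, and \ref{Aprime3} is essentially the paper's argument, and your two calculus facts about the softplus link are correct (in fact the paper takes $M=1$: one checks $u/(1+u)\leq\log(1+u)$ for $u\geq 0$). However, there is a genuine gap in the last sentence of your proposal. Theorem~\ref{th:mse2} gives a bound $\tilde C_v/n$ where $\tilde C_v$ contains the term $\E\var[\phi(y_o,\bm{x}_o^\top\blone)\mid D_{/1}]$; this variance is \emph{not} implied by Assumptions~\ref{Aprime1}--\ref{Aprime3} and must be bounded separately for the corollary to hold with a fixed $\tilde C_v$. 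You claim ``no control on the fitted coefficients is needed,'' but that is exactly what is required here: $\ell(y\mid z)=f(z)-y\log f(z)$ grows linearly in $z$, so $\E[\ell^2(y_o\mid\bm{x}_o^\top\blone)\mid D_{/1}]$ depends on moments of $\bm{x}_o^\top\blone$, hence on $\|\blone\|_2^2/p$.

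The paper handles this by writing $\ell(y_o\mid\bm{x}_o^\top\blone)=\ell(y_o\mid\bm{x}_o^\top\bm{\beta}^*)+\dot\ell(y_o\mid\tilde z)\,\bm{x}_o^\top(\blone-\bm{\beta}^*)$, bounding the first piece via the Gaussian moments of $\bm{x}_o^\top\bm{\beta}^*$, and bounding the second via $|\dot\ell|\leq 1+y_o$ together with a bound on $\E|\bm{x}_o^\top\blone|^4$. For the latter, one compares the objective at $\blone$ and at $\bm{\beta}^*$ and uses the $\gamma$-strong convexity of the smoothed elastic-net to get $\lambda\gamma\|\blone\|_2^2\leq\sum_{j\neq 1}\ell(y_j\mid\bm{x}_j^\top\bm{\beta}^*)+\lambda r(\bm{\beta}^*)$, then shows the right side is $O(p)$ in expectation. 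This last step is not difficult, but it is not automatic from your three assumptions, and your proposal as written does not address it.
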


The proof can be found in Section \ref{sec:proof:ex:PoissonLoss} of the supplementary file. 

\begin{remark}
  We have assumed here that $\bm{x}_i$ is multivariate Gaussian. As might be clear to the reader from the proof, this normality assumption on $\xv$ may be relaxed to an $8^{\rm th}$ moment assumption at the cost of a slightly more complicated proof.  
\end{remark}

\begin{corollary}[Negative-Binomial Regression]\label{ex:negbino}
    We consider the problem of negative binomial regression with fixed shape parameter $\alpha > 0$ and exponential link.
    Here, the negative log-likelihood is given by:
    \begin{equation*}
        \ell(y \mid z) = (y + \alpha^{-1}) \log(1 + \alpha e^z) - y z + C(\alpha, y),
    \end{equation*}
    where $C(\alpha, y)$ denotes a constant which only depends on $\alpha$ and $y$.
    Assume the data generating process is such that $\E [y^8] \leq \kappa$, and that $\phi(y,z)=\ell(y|z)$. Finally, similar to Corollary, \ref{ex:squareLoss} we use the smoothed elastic-net as the regularizer.
    Under these assumptions, there exists a fixed number, $\tilde{C}_v$, such that
\begin{equation*}
\E \left (\frac{1}{n} \sum_{i=1}^n \phi(y_i, \bm{x}_i^\top \bli)  - \E [\phi(y_o, \bm{x}_o^\top \bl) \mid D]      \right )^2  \leq \frac{\tilde{C}_v}{n}.
\end{equation*}
 \end{corollary}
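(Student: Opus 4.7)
The plan is to verify that the three Assumptions \ref{Aprime1}, \ref{Aprime2}, and \ref{Aprime3} hold for the negative binomial setup so that Theorem \ref{th:mse2} applies directly. Assumption \ref{Aprime1} is handled up front: the features are assumed Gaussian (with the same scaling convention as in Corollary \ref{ex:PoissonLoss}) so $\sigma_{\max}(\bm{\Sigma}) \leq \rho/p$, and $\E \|\bm{x}_i\|_2^4$ is uniformly bounded for Gaussian vectors with such a covariance.

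The next step, and the cleanest part of the argument, is verifying Assumption \ref{Aprime2}. Differentiating the negative binomial log-likelihood twice yields
\begin{equation*}
\dot{\ell}(y\mid z) = (y + \alpha^{-1})\,\frac{\alpha e^z}{1+\alpha e^z} - y, \qquad \ddot{\ell}(y\mid z) = (y + \alpha^{-1})\,\frac{\alpha e^z}{(1+\alpha e^z)^2}.
\end{equation*}
In particular, since $\frac{\alpha e^z}{1+\alpha e^z} \in (0,1)$, we get the uniform (in $z$) pointwise bound $|\dot{\ell}(y\mid z)| \leq y + \alpha^{-1}$. Combining this with the assumption $\E[y^8] \leq \kappa$ immediately gives $\E |\dot{\ell}(y_1 \mid \bm{x}_1^\top \bl)|^8 \leq 2^7(\kappa + \alpha^{-8})$, and similarly $\E |\dot{\ell}(y_o\mid \bm{x}_o^\top \blone)|^8$ is bounded (using that the pointwise bound is independent of the argument $z$ and that $y_o$ has the same distribution as $y_1$). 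Since $\phi = \ell$, the $\pd$ conditions in Assumption \ref{Aprime2} reduce to $\sup_t \E[\dot{\ell}(y_o \mid \,\cdot\,)^2 \mid \text{data}] \leq \E[(y_o + \alpha^{-1})^2]$ via the same pointwise bound, and this is uniformly finite because $y_o$ is independent of the conditioning data and has bounded moments.

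For Assumption \ref{Aprime3}, I exploit the strong convexity built into the smoothed elastic-net regularizer of Corollary \ref{ex:squareLoss}: since $r(\beta) = \gamma \beta^2 + (1-\gamma) r^\alpha(\beta)$ with $r^\alpha$ convex, the Hessian of $\lambda r$ satisfies $\lambda \bm{\nabla^2 r}(\cdot) \succeq 2\gamma\lambda\,\bm{I}$. Together with the convexity of $\ell$ (so $\diag[\bm{\ldd}_{/i}(\cdot)] \succeq 0$), this gives the deterministic lower bound $\sigma_{\min}(\bm{A}_{t,/1}) \geq 2\gamma\lambda$, and identically for $\bm{A}_{t,/1,2}$. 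Hence $(\inf_{t\in[0,1]}\sigma_{\min}(\bm{A}_{t,/1}))^{-8} \leq (2\gamma\lambda)^{-8}$ pointwise, and Assumption \ref{Aprime3} holds with $\tilde{\nu} = (2\gamma\lambda)^{-8}$. Applying Theorem \ref{th:mse2} then yields the stated bound.

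The main delicate point in carrying this out is really conceptual rather than technical: one should check that the uniform bound $|\dot{\ell}(y\mid z)| \leq y + \alpha^{-1}$ is strong enough to make all of the $\pd$-conditions in Assumption \ref{Aprime2} hold \emph{conditionally} on the training-like subsets $D_{/1}$, $D$, etc., without needing any further control on $\bl$, $\blone$, or $\blonetwo$. This works precisely because the pointwise bound is independent of $z$ and $y_o$ is independent of the conditioning $\sigma$-algebras, so Jensen's inequality and the $\E[y_o^8] \leq \kappa$ hypothesis suffice. With A1'--A3' verified, the conclusion follows from Theorem \ref{th:mse2} with $\tilde{C}_v$ assembled from $\gamma$, $\lambda$, $\alpha$, $\kappa$, $\rho$, $\delta$, and the fourth-moment bound on $\|\bm{x}_i\|_2$.
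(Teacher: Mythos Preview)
Your verification of Assumptions \ref{Aprime1}, \ref{Aprime2}, and \ref{Aprime3} is correct and matches the paper's approach essentially line for line (the derivative bound $|\dot\ell(y\mid z)|\leq y+\alpha^{-1}$ and the $\lambda\gamma$-strong convexity of the smoothed elastic-net are exactly what the paper uses). However, there is a genuine gap: applying Theorem \ref{th:mse2} does \emph{not} by itself yield a \emph{fixed} constant $\tilde C_v$. Look at the formula for $\tilde C_v$ in Theorem \ref{th:mse2}: it contains the term $\E\var[\phi(y_o,\bm{x}_o^\top\blone)\mid D_{/1}]$, and nothing in Assumptions \ref{Aprime1}--\ref{Aprime3} controls this quantity. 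Your proposal never addresses it, so the claim ``$\tilde C_v$ assembled from $\gamma,\lambda,\alpha,\kappa,\rho,\delta$, and the fourth-moment bound'' is unsupported.

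The paper handles this extra step by bounding $|\ell(y_o\mid \bm{x}_o^\top\blone)|$ directly (using $\log(1+\alpha e^z)\leq 1+|\log\alpha|+|z|$) to get a bound of the form $C(1+y_o)(1+|\bm{x}_o^\top\blone|)$, and then reducing to control of $\E[(\bm{x}_o^\top\blone)^k]$. That in turn requires bounding $\frac{1}{p}\|\blone\|_2^2$, which comes from comparing the penalized objective at $\blone$ against a reference point and invoking the strong convexity of the regularizer; this is the same machinery used for Corollary \ref{ex:PoissonLoss}. None of this is captured by the $\dot\phi$-bounds in Assumption \ref{Aprime2}, because those control only the \emph{derivative} of $\phi$, not $\phi$ itself. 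You need to add this argument (or at least flag it and point to the Poisson proof) for the corollary to go through.
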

The proof can be found in Section \ref{sec:proof:ex:negbino} of the supplementary material.

\section{Connection of $\alo$ and $\extra$}\label{sec:connectALOextra}
We mentioned in Section \ref{ssec:ALO1} that  different approximations of $\lo$ have been proposed in the literature to reduce the computational complexity of $\lo$.  Among such approximations, the $\alo$ formula introduced in \eqref{eq:aloformulafinal},  is analyzed in \cite{RM18} under a similar asymptotic framework as the one discussed in our paper:
\begin{theorem}\label{thm:aloVSlo}\cite{RM18}
Suppose that $n/p = \delta$ is constant while $n,p \rightarrow \infty$. Under the assumption $\bm{x}_i \sim N(0, \bm{\Sigma})$, for the regression problems discussed in Corollaries \ref{ex:logistic}, \ref{example:psedoHuber}, \ref{ex:squareLoss}, and \ref{ex:PoissonLoss} we have
\[
|\alo-\lo| = O_p\left(\frac{\poly \log n}{\sqrt{n}}\right). 
\]
\end{theorem}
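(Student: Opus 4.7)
The plan is to bound the per-sample approximation error $|\bm{x}_i^\top \bli - \bm{x}_i^\top \tilde{\bm{\beta}}_{\slash i}|$ and then aggregate across $i$ using smoothness of $\phi$ and the triangle inequality, since $|\alo - \lo| \leq \frac{1}{n}\sum_i |\phi(y_i, \bm{x}_i^\top \bli) - \phi(y_i, \bm{x}_i^\top \tilde{\bm{\beta}}_{\slash i})|$. Because $\alo$ is defined by a single Newton step off $\bl$, the natural route is to compare that Newton step to the exact leave-$i$-out solution implicit in the optimality conditions.

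First I would write down the first-order stationarity conditions for $\bl$ and $\bli$, subtract them, and Taylor-expand the leave-$i$-out equation around $\bl$. This gives
\[
\bm{A}_{/i}(\bli - \bl) = \bm{x}_i\, \ld(y_i \mid \bm{x}_i^\top \bl) + \bm{R}_i,
\]
where $\bm{A}_{/i}$ is essentially the leave-$i$-out Hessian appearing in Assumption \ref{A3} and $\bm{R}_i$ is a second-order remainder involving $\lddd$ evaluated on the segment between $\bl$ and $\bli$. Inverting $\bm{A}_{/i}$ and using the Woodbury identity to swap it for the full-data Hessian $\bm{A}$ reproduces precisely the Newton step $\tilde{\bm{\beta}}_{\slash i}$ underlying formula \eqref{eq:aloformulafinal}, up to an error driven by $\bm{R}_i$.

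Second, I would quantify the sizes. Assumption \ref{A3} gives $\sigma_{\min}(\bm{A}_{/i})^{-1} = O(1)$, Assumption \ref{A1} gives $\|\bm{x}_i\|_2 = O_p(1)$, and Assumption \ref{A2} controls $|\ld(y_i \mid \bm{x}_i^\top \bl)|$; together these show that $\|\bli - \bl\|_2$ is of lower order, so that $\bm{R}_i$ is of still lower order. Translating a bound on $\|\bli - \tilde{\bm{\beta}}_{\slash i}\|_2$ into a sufficiently sharp bound on the scalar projection $|\bm{x}_i^\top(\bli - \tilde{\bm{\beta}}_{\slash i})|$ — sharp enough that averaging $n$ of them yields the $\poly\log n / \sqrt{n}$ rate — is where the Gaussian assumption on $\bm{x}_i$ enters: Hanson--Wright-type concentration for quadratic forms produces leverage-score bounds such as $\bm{x}_i^\top \bm{A}_{/i}^{-1} \bm{x}_i = O_p(\poly\log n / n)$, exploiting that $\bm{x}_i$ is independent of $\bm{A}_{/i}$. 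Lipschitz continuity of $\phi$, which follows from Assumption \ref{A2}, then converts the per-sample fit error into a per-sample loss error.

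The hardest part will be obtaining \emph{uniform-in-$i$} control. The $n$ approximation errors share randomness through $\bl$, $\bm{A}$, and the various $\bli$, so a crude union bound would cost a full $\sqrt{n}$ factor and destroy the claimed rate. Retaining it forces subgaussian-tail concentration for both the leverage scores and the perturbations $\|\bli - \bl\|_2$, coupled with a union bound whose cost is only polylogarithmic in $n$. This uniform-in-$i$ concentration machinery is the technical heart of the argument in \cite{RM18}, and it is also what requires the individual regularity and moment conditions underlying Corollaries \ref{ex:logistic}, \ref{example:psedoHuber}, \ref{ex:squareLoss}, and \ref{ex:PoissonLoss} to be verified separately for each model.
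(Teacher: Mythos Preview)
The paper does not contain a proof of this statement. Theorem~\ref{thm:aloVSlo} is explicitly a citation of a result from \cite{RM18} (note the ``\cite{RM18}'' in the theorem header); the present paper merely quotes it so as to combine it with Theorems~\ref{th:mse} and~\ref{th:mse2} and conclude Corollary~\ref{thm:aloVSextra}. There is therefore nothing in this paper against which to compare your proposal.

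Your sketch is a reasonable outline of how such an argument would go, and you correctly identify the uniform-in-$i$ concentration as the technical crux and the source of the polylogarithmic factor. But checking whether your proposal matches the actual proof requires consulting \cite{RM18} directly, not this paper.
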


Note that the ultimate goal of $\alo$ is to use it as an estimate of $\extra$. Hence, while Theorem \ref{thm:aloVSlo} confirms the accuracy of ALO in approximating $\lo$ it does not explain whether the estimates obtained by $\alo$ or $\lo$ can be trusted in high-dimensional settings. However, we can combine this result with Theorems \ref{th:mse} and \ref{th:mse2} to prove the accuracy of $\alo$ in estimating $\extra$. Toward this goal we first prove the following claim.

\begin{theorem}\label{thm:opLO}
Suppose that $n/p = \delta$ is constant while $n,p \rightarrow \infty$. Under the assumption $\bm{x}_i \sim N(0, \bm{\Sigma})$, for the regression problems discussed in Corollaries \ref{ex:logistic}, \ref{example:psedoHuber}, \ref{ex:squareLoss}, and \ref{ex:PoissonLoss} we have
\[
|\lo-\extra| = O_p\left(\frac{1}{\sqrt{n}}\right). 
\]
\end{theorem}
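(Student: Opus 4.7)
The plan is essentially to recognize that Theorem \ref{thm:opLO} is a direct probabilistic consequence of the mean-squared error bounds already established in Theorems \ref{th:mse} and \ref{th:mse2} (together with the corresponding Corollaries \ref{ex:logistic}, \ref{example:psedoHuber}, \ref{ex:squareLoss}, \ref{ex:PoissonLoss}). The key observation is that the $L^2$ bound $\E(\lo - \extra)^2 \leq C_v/n$ automatically upgrades to an $O_p(1/\sqrt{n})$ statement via Markov's (Chebyshev's) inequality applied to $(\lo - \extra)^2$.

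Concretely, I would proceed as follows. First, for each of the four regression problems listed in the statement, I would invoke the corresponding corollary to conclude that Assumptions \ref{A1}--\ref{A3} or \ref{Aprime1}--\ref{Aprime3} (whichever is appropriate) hold with the fixed constants identified there. This yields a constant $C_v$ (or $\tilde{C}_v$), independent of $n$ and $p$ so long as $\delta = n/p$ is held constant, such that
\begin{equation*}
\E(\lo - \extra)^2 \;\leq\; \frac{C_v}{n}.
\end{equation*}
Second, for any $M > 0$, Markov's inequality gives
\begin{equation*}
\P\!\left(|\lo - \extra| > \frac{M}{\sqrt{n}}\right) \;=\; \P\!\left((\lo - \extra)^2 > \frac{M^2}{n}\right) \;\leq\; \frac{n \, \E(\lo - \extra)^2}{M^2} \;\leq\; \frac{C_v}{M^2}.
\end{equation*}
Since $C_v$ is a fixed constant, the right-hand side can be made arbitrarily small by choosing $M$ sufficiently large, uniformly in $n$. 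By the definition of $O_p$, this is exactly the statement $|\lo - \extra| = O_p(1/\sqrt{n})$.

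There is no substantive obstacle here beyond bookkeeping: the four corollaries do all the heavy lifting of verifying the hypotheses of Theorem \ref{th:mse} or Theorem \ref{th:mse2} in each regression setting, so the only work left is the standard $L^2$-to-in-probability conversion via Chebyshev. The mild care needed is simply to make sure that the constants $c_0, c_1, \rho, \nu$ (resp. $\tilde{c}_0, \tilde{c}_1, \tilde{\nu}, c_4$) identified in the corollaries do not implicitly depend on $n$ or $p$ once $\delta$ is fixed, so that $C_v$ (resp. $\tilde{C}_v$) is genuinely an $n$-independent constant; this was already checked when establishing the corollaries themselves, and the Gaussian assumption $\bm{x}_i \sim N(0, \bm{\Sigma})$ imposed in the hypothesis of Theorem \ref{thm:opLO} is (as the remark following Corollary \ref{ex:PoissonLoss} indicates) more than sufficient to control the required moments uniformly. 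Hence the proof reduces to a one-line application of Markov's inequality to the MSE bound.
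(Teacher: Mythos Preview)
Your proposal is correct and essentially identical to the paper's own proof: both invoke the MSE bounds from Theorems \ref{th:mse} and \ref{th:mse2} (via the four corollaries) and then apply Markov's inequality to $(\lo-\extra)^2$ to convert the $L^2$ bound into the $O_p(1/\sqrt{n})$ statement. The only cosmetic difference is that the paper writes the bound as $\min(C_v,\tilde{C}_v)/M^2$ to cover both theorems at once.
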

\begin{proof}
For a fixed number $M$
\begin{align}\label{eq:proofLO_extral}
\MoveEqLeft{\P \Bigl( \abs[\big]{\lo-\extra} > \frac{M}{\sqrt{n}}\Bigr) } \nonumber \\
&= \P \Bigl( \abs[\big]{\lo-\extra}^2 > \frac{M^2}{n}\Bigr) \nonumber \\
&\leq \frac{n}{M^2} \E |\lo-\extra|^2 \nonumber \\
&\leq \frac{n}{M^2}\frac{\min (C_\nu, \tilde{C}_\nu)}{n}= \frac{\min (C_\nu, \tilde{C}_\nu)}{M^2}.
\end{align}
The first inequality in the above equations is due to Markov inequality, and the second inequality is a result of Theorems \ref{th:mse} and \ref{th:mse2}. As we discussed in Corollaries \ref{ex:logistic}, \ref{example:psedoHuber}, \ref{ex:squareLoss}, and \ref{ex:PoissonLoss} either $C_\nu$ or $\tilde{C}_\nu$ are finite numbers. Hence, as $M$ increases, the final probability can be reduced to the desired level.  
\end{proof}

 Before we proceed to establish the accuracy of $\alo$ we have to clarify Theorem \ref{thm:opLO}. Note that even under the idealized (but incorrect) assumption that the individual estimates $\phi(y_i, \bm{x}_i^\top \bli)$ are independent and $\bli$s are the same as $\bl$, the central limit theorem indicates that $|\lo-\extra| \sim \frac{1}{\sqrt{n}}.$\footnote{The notation $|\lo-\extra| \sim \frac{1}{\sqrt{n}}$ means that we have both $|\lo-\extra| \sim O_p(\frac{1}{\sqrt{n}})$ and $\frac{1}{\sqrt{n}} = O_p(|\lo-\extra|)$.} Hence, we should not expect the error of $\lo$ to be $o_p (\frac{1}{\sqrt{n}})$. Therefore, the above theorem seems to offer the sharpest result that is possible for $\lo$. Note that the sharpness is with regard to the rate of convergence and not the constants.

Combining the results of Theorem \ref{th:mse2} and Theorem \ref{thm:opLO}  we can finally quantify the accuracy of $\alo$ in estimating $\extra$. 

\begin{corollary}\label{thm:aloVSextra}
Suppose that $n/p = \delta$ is constant while $n,p \rightarrow \infty$. Under the assumption $\bm{x}_i \sim N(0, \bm{\Sigma})$, for the regression problems discussed in Corollaries \ref{ex:logistic}, \ref{example:psedoHuber}, \ref{ex:squareLoss}, and \ref{ex:PoissonLoss} we have
\[
|\alo-\extra| = O_p\left(\frac{\poly \log n}{\sqrt{n}}\right). 
\]
\end{corollary}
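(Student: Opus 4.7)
The plan is straightforward: this corollary follows from a triangle inequality combining the two results immediately preceding it, namely Theorem \ref{thm:aloVSlo} (which controls $|\alo - \lo|$) and Theorem \ref{thm:opLO} (which controls $|\lo - \extra|$). Both theorems are stated under exactly the hypotheses assumed in the corollary: $n/p = \delta$ constant, $\bm{x}_i \sim N(0,\bm{\Sigma})$, and the regression problems of Corollaries \ref{ex:logistic}, \ref{example:psedoHuber}, \ref{ex:squareLoss}, and \ref{ex:PoissonLoss}. So no additional verification of assumptions is needed; the work is entirely in the concluding inequality.

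First I would write
\[
|\alo - \extra| \;\leq\; |\alo - \lo| + |\lo - \extra|.
\]
Next, I would invoke Theorem \ref{thm:aloVSlo} to conclude that the first term on the right-hand side is $O_p(\poly\log n / \sqrt{n})$, and Theorem \ref{thm:opLO} to conclude that the second term is $O_p(1/\sqrt{n})$. Since $1/\sqrt{n} = O_p(\poly\log n / \sqrt{n})$ (any fixed power of $\log n$, including degree zero, is dominated by $\poly\log n$ in the sense used in the paper), the sum of the two bounds is $O_p(\poly\log n / \sqrt{n})$, yielding the claim.

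There is essentially no obstacle here; the only subtle point worth mentioning is that adding two sequences that are $O_p$ of given rates gives a sequence that is $O_p$ of the slower rate, which is the standard way probabilistic orders combine. Concretely, if $A_n = O_p(a_n)$ and $B_n = O_p(b_n)$ with $b_n = O(a_n)$, then for every $\epsilon > 0$ one can choose $M$ large so that $\P(|A_n|/a_n > M/2) < \epsilon/2$ and $\P(|B_n|/a_n > M/2) < \epsilon/2$ for all sufficiently large $n$, whence $\P((|A_n| + |B_n|)/a_n > M) < \epsilon$. Applying this with $A_n = \alo - \lo$, $B_n = \lo - \extra$, $a_n = \poly\log n/\sqrt{n}$, and $b_n = 1/\sqrt{n}$ completes the argument.
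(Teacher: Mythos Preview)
Your proposal is correct and matches the paper's intended approach: the paper states that the corollary follows by combining the preceding results and that the proof ``is straightforward, and is hence skipped.'' Your triangle-inequality argument, invoking Theorem~\ref{thm:aloVSlo} for $|\alo-\lo|$ and Theorem~\ref{thm:opLO} for $|\lo-\extra|$, is exactly the combination the paper has in mind.
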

The proof of this corollary is straightforward, and is hence skipped. 
Note that this corollary finally establishes the fact that $\alo$ obtains accurate estimates of $\extra$. While we have established this result for only four popular examples in this paper, Theorems \ref{th:mse}, \ref{th:mse2} and Theorem 3 of \cite{RM18} can be applied to a much broader class of regression problems. Hence, a similar result is expected for such scenarios as well. Finally, we should emphasize that by comparing Theorems \ref{thm:opLO} and \ref{thm:aloVSextra} one may notice that the accuracy of $\alo$ might be worse than $\lo$ by a logarithmic factor. At this stage, it is not clear whether this difference is an artifact of the proof of \cite{RM18} or it is a real extra error that has been introduced by the approximation of $\lo$. 

\section{Numerical Experiments}
In this section, we present two numerical experiments to show that the $O(\frac{1}{n})$ bound given in Theorem \ref{th:mse} and \ref{th:mse2} is sharp but not tight. Specifically,  we generate synthetic data, and compare $\extra$ and $\lo$ for elastic-net linear regression and  ridge logistic regression. In all the examples in this section, the rows of $\bm{X}$ are $\N(\bm{0}, \bm{\Sigma})$. Here we let $\bm{\Sigma} = \bm{I}/n$ and  $\phi(y,z)=\ell(y \mid z)$. The codes for the Figure 1 and Table 1,2 are available at  \texttt{https://github.com/RahnamaRad/LO}.

\paragraph{Square loss with elastic-net penalty.}
We set  $\ell ( y \mid \bm{x}^\top \bm{\beta} )=\frac{1}{2}(y-\bm{x}^\top \bm{\beta})^2$, $r(\bm{\beta})=\frac{ (1-\alpha)}{2} \| \bm{\beta} \|_2^2+ \alpha \|  \bm{\beta} \|_1$  and $\alpha=0.5$. The true unknown parameter vector $\bm{\beta}^* \in \R^p$ is sparse with $k=0.1n$ non-zero elements independently drawn from a zero mean unit variance Laplace distribution, leading to $\var(\bm{x}^\top \bm{\beta}^*)=0.1$ (regardless of the values of $n$ and $p$). To generate data, we sample  $\bm{y} \sim \N(\bm{X} \bm{\beta}^*, \sigma^2 \bm{I}  )$. Here the out-of-sample error is:
\begin{equation*}
\extra=\E \ell ( y_o \mid \bm{x}_o^\top \bm{\beta} ) =
\sigma^2 + \| \bm{\Sigma}^{1/2}(\bl - \bm{\beta}^*) \|_2^2.
\end{equation*}

As we increase $n$ and $p$, we keep the ratio $\delta=n/p=0.1$ constant. We numerically calculate MSE$\triangleq \E (\extra - \lo)^2$ as a function of $n$ (and $p=10n$) based on 100 synthetic data samples, for each $n$, $p$ and $\lambda=5$. We fitted a line to model $\log(\text{MSE}) \sim \log(n)$ and obtained a slope of -1.03 (SE$=0.04$) and intercept of -0.46 (SE$=0.54$) with an Adjusted R-squared of 0.95. The slope of -1.03 ($SE=0.04$) shows that the bound is sharp because it confirms the $1/n$ scaling of our theory. Table \ref{t:1} shows the numerical MSE as a function of $n$ and $p$.


\begin{table}
\begin{center}
    \begin{tabular}{rrr}
    \toprule
    $n$       &   $p$     &   \multicolumn{1}{c}{MSE (SE)} \\
    \midrule
    40         &    400    &    0.0156 (0.0021)            \\
    80         &    800    &    0.0064 (0.0008)            \\
    120        &    1200    &    0.0039 (0.0006)           \\
    160        &    1600    &    0.0038 (0.0006)           \\
    200        &    2000    &    0.0028 (0.0004)           \\
    \bottomrule
    \end{tabular}
\end{center}\caption{Square loss with elastic-net penalty: MSE$\triangleq  \E (\extra - \lo)^2$ (and standard errors).} \label{t:1}
\end{table}

\textbf{Logistic regression with ridge penalty.}
We set $\ell ( y \mid \bm{x}^\top \bm{\beta} )=-y\bm{x}^\top \bm{ \beta} +\log (1 + e^{\bm{x}^\top \bm{ \beta}})$ (the negative logistic log-likelihood) and $r(\bm{\beta})=\frac{1}{2}\|  \bm{\beta} \|_2^2$. To generate data, we sample    $y_i \sim Binomial \left(\frac{e^{ \bm{x_i}^\top \bm{\beta}^*}}{1 + e^{\bm{x_i}^\top \bm{\beta}^*}}\right)$. Here the out-of-sample error
\begin{eqnarray*}
\extra&=&\E \ell ( y_o|\bm{x}_o^\top \bl )  \\
&=&  -\frac{\bl^\top \bm{\beta}^* }{\| \bm{\beta}^* \|_2^2} \E \left[ \frac{Ze^{Z}}{1+e^Z} \right] + \E \log(1+e^W)
\end{eqnarray*}
where $Z\sim \N(0,\|\bm{\Sigma}^{1/2} \bm{\beta}^* \|_2^2)$ and $W\sim \N(0,\|\bm{\Sigma}^{1/2} \bl \|_2^2)$.

As we increase $n$ and $p$, we keep the ratio $n/p=1$ constant. We numerically calculate MSE$\triangleq \E (\extra - \lo)^2$ as a function of $n$ (and $p=n$) based on 100 synthetic data samples, for each $n$, $p$ and $\lambda=0.1$. We fitted a line to model $\log(\text{MSE}) \sim \log(n)$ and obtained a slope of -1.00  (SE$= 0.04$) and intercept of 0.34 (SE$=0.27$) with an Adjusted R-squared of 0.99. The slope of -1.00 shows that the bound is sharp because it confirms the $1/n$ scaling of our theory.  Table \ref{t:2} compares the numerical MSE and the theoretical bound from Theorem \ref{th:mse} and Corollary \ref{ex:logistic}. The theoretical upper bound was computed using \ref{eq:log} in Corollary \ref{ex:logistic} where in this example, we have $\lambda=0.1$, $\rho=1$, and $\delta=1$, leading to $C_v=6311.52$. The significant difference between the bound and the MSE shows that the bound is not tight.

\begin{table}
\begin{center}
    \begin{tabular}{rrrr}
    \toprule
    $n $       &   $p$     &   \multicolumn{1}{c}{MSE (SE)}     &    \multicolumn{1}{c}{Bound}        \\
    \midrule
    100        &    100    &    0.0136 (0.0019)  &  63.12     \\ 
    300        &    300    &    0.0037 (0.0005)  &   21.04    \\ 
    500        &    500    &    0.0026 (0.0005)   &  12.62    \\ 
    700        &    700    &    0.0017 (0.0002)   &  9.02     \\ 
    900        &    900    &    0.0015 (0.0002)    &  7.01    \\ 
    1100       &    1100   &    0.0012 (0.0002)   &  5.74     \\ 
    \bottomrule
    \end{tabular}
\end{center}\caption{Logistic regression with ridge penalty: MSE$\triangleq \E (\extra - \lo)^2$ (and standard errors) and the  upper bound based on \ref{eq:log} in Corollary \ref{ex:logistic} of Theorem \ref{th:mse}. } \label{t:2}
\end{table}


\section{Conclusion}
Leave-one-out estimators (and their approximate versions) have seen renewed interest recently in the context of big data and high-dimensional problems.
We show that, in general, leave-one-out risk estimators have desirable statistical behaviours in the high-dimensional setting.
Although the leave-out-risk estimator itself is generally computationally intractable, this result also implies consistency for a (growing) number of approximate leave-one-out estimators, and demonstrate that such estimators offer a potentially good direction for building risk estimators for high-dimensional problems.

\subsection*{Acknowledgement}
K.R. was supported by the NSF DMS grant 1810888, and Eugene M. Lang Junior Faculty Research Fellowship. A.M. was supported by the NSF DMS grant 1810888.

\bibliographystyle{apalike}
\bibliography{myrefs}

\begin{thebibliography}{}

\bibitem[Allen, 1974]{A74}
Allen, D. (1974).
\newblock The relationship between variable selection and data augmentation and
  a method for prediction.
\newblock {\em Technometrics}, 16:125--127.

\bibitem[Bayati and Montanari, 2012]{bayati2012lasso}
Bayati, M. and Montanari, A. (2012).
\newblock The lasso risk for gaussian matrices.
\newblock {\em IEEE Transactions on Information Theory}, 58(4):1997--2017.

\bibitem[Bean et~al., 2013]{bean2013optimal}
Bean, D., Bickel, P.~J., El~Karoui, N., and Yu, B. (2013).
\newblock Optimal m-estimation in high-dimensional regression.
\newblock {\em Proceedings of the National Academy of Sciences},
  110(36):14563--14568.

\bibitem[Beirami et~al., 2017]{BRST17}
Beirami, A., Razaviyayn, M., Shahrampour, S., and Tarokh, V. (2017).
\newblock On optimal generalizability in parametric learning.
\newblock {\em NIPS}, pages 3455--3465.

\bibitem[Burman, 1990]{B90}
Burman, P. (1990).
\newblock Estimation of generalized additive models.
\newblock {\em Journal of Multivariate Analysis}, 32:230--255.

\bibitem[Cawley and Talbot, 2008]{CT08}
Cawley, G. and Talbot, N. (2008).
\newblock Efficient approximate leave-one-out cross-validation for kernel
  logistic regression.
\newblock {\em Machine Learning}, 71:243--264.

\bibitem[Cessie and Houwelingen, 1992]{CH92}
Cessie, S. and Houwelingen, J. (1992).
\newblock Ridge estimators in logistic regression.
\newblock {\em Applied Statistics}, 41(1):191--201.

\bibitem[Craven and Wahba, 1979]{CW79}
Craven, P. and Wahba, G. (1979).
\newblock Estimating the correct degree of smoothing by the method of
  generalized cross-validation.
\newblock {\em Numerische Mathematik}, 31:377--403.

\bibitem[Dobriban and Wager, 2018]{DW18}
Dobriban, E. and Wager, S. (2018).
\newblock High-dimensional asymptotics of prediction: Ridge regression and
  classification.
\newblock {\em Ann. Stat.}, 46(1):247--279.

\bibitem[Donoho et~al., 2011]{DMM11}
Donoho, D., Maleki, A., and Montanari, A. (2011).
\newblock Noise sensitivity phase transition.
\newblock {\em IEEE Trans. Inform. Theory}, 57(10).

\bibitem[Donoho and Montanari, 2016]{donoho2016high}
Donoho, D. and Montanari, A. (2016).
\newblock High dimensional robust m-estimation: Asymptotic variance via
  approximate message passing.
\newblock {\em Probab Theory Relat Fields}, 166(3-4):935--969.

\bibitem[El~Karoui, 2017]{K17}
El~Karoui, N. (2017).
\newblock On the impact of predictor geometry on the performance on
  high-dimensional ridge-regularized generalized robust regression estimators.
\newblock {\em Probability Theory Related Fields}, 170(1-2):95--175.

\bibitem[El~Karoui et~al., 2013]{EBBLY13}
El~Karoui, N., Bean, D., Bickel, P., Lim, C., and Yu, B. (2013).
\newblock On robust regression with high-dimensional predictors.
\newblock {\em PNAS}, 110(36):14557--14562.

\bibitem[Giordano et~al., 2019]{GSLJB19}
Giordano, R., Stephenson, W., Liu, R., Jordan, M., and Broderick, T. (2019).
\newblock A swiss army infinitesimal jackknife.
\newblock {\em JMLR}, 89(1139-1147).

\bibitem[Golub et~al., 1979]{GHW79}
Golub, G., Heath, M., and Wahba, G. (1979).
\newblock Generalized cross-validation as a method for choosing a good ridge
  parameter.
\newblock {\em Technometrics}, 21(2):215--223.

\bibitem[Meijer and Goeman, 2013]{MG13}
Meijer, R. and Goeman, J. (2013).
\newblock Efficient approximate k-fold and leave-one-out cross-validation for
  ridge regression.
\newblock {\em Biometrical Journal}, 55(2):141--155.

\bibitem[Mousavi et~al., 2018]{MMB15}
Mousavi, A., Maleki, A., and Baraniuk, R. (2018).
\newblock Consistent parameter estimation for lasso and approximate message
  passing.
\newblock {\em The Annals of Statistics}, 46(1):119--148.

\bibitem[Nevo and Ritov, 2016]{NR16}
Nevo, D. and Ritov, Y. (2016).
\newblock On {B}ayesian robust regression with diverging number of predictors.
\newblock {\em Electron. J. Statist.}, 10(2):3045--3062.

\bibitem[Obuchi and Kabashima, 2016]{OK16}
Obuchi, T. and Kabashima, Y. (2016).
\newblock Cross validation in lasso and its acceleration.
\newblock {\em J. Stat. Mech. Theor. Exp.}, 53(304):1--36.

\bibitem[Obuchi and Kabashima, 2018]{OK18}
Obuchi, T. and Kabashima, Y. (2018).
\newblock Accelerating {C}ross-{V}alidation in {M}ultinomial {L}ogistic
  {R}egression with ell1-{R}egularization.
\newblock {\em Journal of Machine Learning Research}, 19:1--30.

\bibitem[Opper and Winther, 2000]{OW00}
Opper, M. and Winther, O. (2000).
\newblock Gaussian processes and {SVM}: {M}ean field results and leave-one-out.
\newblock In Smola, A., Bartlett, P., Scholkopf, B., and Schuurmans, D.,
  editors, {\em Advances Large Margin Classifiers}, pages 43--56. MIT Press,
  Cambridge, MA.

\bibitem[O'Sullivan et~al., 1986]{OYR86}
O'Sullivan, F., Yandell, B., and Raynor, W. (1986).
\newblock Automatic smoothing of regression functions in generalized linear
  models.
\newblock {\em JASA}, 81(393):96--103.

\bibitem[Rahnama~Rad and Maleki, 2019]{RM18}
Rahnama~Rad, K. and Maleki, A. (2019).
\newblock A scalable estimate of the extra-sample prediction error via
  approximate leave-one-out.
\newblock {\em arXiv:1801.10243v3}.

\bibitem[Schmidt et~al., 2007]{schmidt2007fast}
Schmidt, M., Fung, G., and Rosales, R. (2007).
\newblock Fast optimization methods for l1 regularization: A comparative study
  and two new approaches.
\newblock In {\em ECML}, pages 286--297. Springer.

\bibitem[Stephenson and Broderick, 2019]{SB19}
Stephenson, W. and Broderick, T. (2019).
\newblock {S}parse {A}pproximate {C}ross-{V}alidation for {H}igh-{D}imensional
  {GLM}s.
\newblock {\em arXiv preprint arXiv:1905.13657}.

\bibitem[Stone, 1974]{S74}
Stone, M. (1974).
\newblock Cross-validatory choice and assesment of statistical predictions.
\newblock {\em J R Stat Soc Series B}, 36(2):111--147.

\bibitem[Su et~al., 2017]{SBC17}
Su, W., Bogdan, M., and Candes, E. (2017).
\newblock False discoveries occur early on the {L}asso path.
\newblock {\em Ann. Stat.}, 45(5):2133--2150.

\bibitem[Takahashi and Kabashima, 2018]{TK18}
Takahashi, T. and Kabashima, Y. (2018).
\newblock A statistical mechanics approach to de-biasing and uncertainty
  estimation in lasso for random measurements.
\newblock {\em Journal of Statistical Mechanics: Theory and Experiment},
  7(073405).

\bibitem[Vehtari et~al., 2016]{VMTSW16}
Vehtari, A., Mononen, T., Tolvanen, V., Sivula, T., and Winther, O. (2016).
\newblock {B}ayesian leave-one-out cross-validation approximations for
  {G}aussian latent variable models.
\newblock {\em Journal of Machine Learning Research}, 17(1):3581--3618.

\bibitem[Wang et~al., 2018]{WZMLM18}
Wang, S., Zhou, W., Maleki, A., Lu, H., and Mirrokni, V. (2018).
\newblock {A}pproximate {L}eave-{O}ne-{O}ut for {H}igh-{D}imensional
  {N}on-{D}ifferentiable {L}earning {P}roblems.
\newblock {\em International Conference on Machine Learning}.

\bibitem[Xu et~al., 2019]{XMRD19}
Xu, J., Maleki, A., Rahnama~Rad, K., and Hsu, D. (2019).
\newblock Consistent risk estimation in high-dimensional linear regression.
\newblock {\em arXiv:1902.01753}.

\end{thebibliography}

\newpage

\onecolumn
\appendix
\begin{center}
\textbf{\LARGE Supplementary Material}    
\end{center}
\section{Notation}
Let $\ld_i(\bm{\beta}) \triangleq \ld(y_i | \bm{x_i}^\top \bm{\beta})$ and $\ld(\bm{\beta}) \triangleq [\ld(y_1 | \bm{x_1}^\top \bm{\beta}), \cdots, \ld(y_n | \bm{x_n}^\top \bm{\beta})]^\top$.

\section{Background material on Gaussian random variables, vectors and matrices}
In this section, we review a few important results regarding the functions of Gaussian matrices and Gaussian vectors that are used in our examples. The first result is about the moments of the inverse of the minimum eigenvalue of a Wishart matrix. 
\begin{lemma}(Lemma 19 of \cite{XMRD19})\label{lem:socomp}
Let $X_{ij} \overset{i.i.d.}{\sim} N(0, \frac{1}{n})$, and suppose that $n,p \rightarrow \infty$ while $n/p = \delta_0$ for $\delta_0>1$. Then, 
for a fixed $r\geq 0$, we have 
\begin{equation}\label{eq:socomp_eq1}
	\E\left[ \frac{1}{\sigma_{\min}^r(\mathbf{X}^\top \mathbf{X})} \right] = O(1).	
\end{equation}
\end{lemma}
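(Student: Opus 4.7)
My approach is to reduce to a standard concentration estimate for the smallest singular value of a Gaussian matrix, then split the expectation into a bulk term controlled by that concentration and a tail term handled by a polynomial-growth bound on a sufficiently high negative moment.

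First I would set $Z = \sqrt{n}\,\mathbf{X}$, so that the entries of $Z$ are i.i.d.\ standard Gaussian and $\sigma_{\min}(\mathbf{X}^\top \mathbf{X}) = s_{\min}(Z)^2/n$, where $s_{\min}(Z)$ denotes the smallest singular value of the $n \times p$ matrix $Z$. By the Davidson--Szarek concentration bound for Gaussian matrices, for every $t \geq 0$,
\[
P\bigl(s_{\min}(Z) \leq \sqrt{n} - \sqrt{p} - t\bigr) \leq e^{-t^2/2}.
\]
Since $n/p = \delta_0 > 1$, setting $\kappa = (1 - 1/\sqrt{\delta_0})/2 > 0$ gives $\sqrt{n} - \sqrt{p} \geq 2\kappa\sqrt{n}$ for all sufficiently large $n$. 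Consequently, on the ``good'' event $\mathcal{G} = \{s_{\min}(Z) \geq \kappa\sqrt{n}\}$, which occurs with probability at least $1 - e^{-\kappa^2 n/2}$, we have $\sigma_{\min}(\mathbf{X}^\top \mathbf{X}) \geq \kappa^2$, so $\sigma_{\min}^{-r} \leq \kappa^{-2r}$, which is $O(1)$.

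It remains to bound the contribution from $\mathcal{G}^c$. By Cauchy--Schwarz,
\[
\E\bigl[\sigma_{\min}^{-r}\,\mathbf{1}_{\mathcal{G}^c}\bigr] \leq \bigl(\E\,\sigma_{\min}^{-2r}\bigr)^{1/2}\,P(\mathcal{G}^c)^{1/2}.
\]
The probability factor decays exponentially in $n$, so it suffices to show that $\E[\sigma_{\min}^{-2r}]$ grows at most polynomially in $n$. I would obtain this from Edelman's joint eigenvalue density for the Wishart ensemble: when $n - p \geq 2r + 1$, the density of $\sigma_{\min}(Z^\top Z)$ near zero scales like $\lambda^{(n-p-1)/2}$, so $\E[\sigma_{\min}(Z^\top Z)^{-2r}]$ is finite and grows at most polynomially in $n$; the scaling $\sigma_{\min}(\mathbf{X}^\top \mathbf{X}) = \sigma_{\min}(Z^\top Z)/n$ then converts this to a polynomial-in-$n$ bound on $\E[\sigma_{\min}(\mathbf{X}^\top \mathbf{X})^{-2r}]$. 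Multiplying the polynomial moment bound by the exponentially small probability $P(\mathcal{G}^c)^{1/2}$ yields an $o(1)$ contribution, completing the proof.

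The main obstacle is controlling the negative moment $\E[\sigma_{\min}^{-2r}]$ uniformly in $n,p$, since one has to verify that the regime $n - p = (\delta_0 - 1)p$ is large enough for the relevant moments to exist and be bounded. This is exactly where the assumption $\delta_0 > 1$ enters crucially, and where Edelman's exact density formula (or, as a simpler alternative, a Gaussian anti-concentration plus $\epsilon$-net argument à la Rudelson--Vershynin) is needed to avoid a circular dependence on the eigenvalue asymptotics.
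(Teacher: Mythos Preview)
The paper does not prove this lemma; it is quoted as Lemma~19 of \cite{XMRD19} in the background section, so there is no in-paper argument to compare against.

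Your strategy is sound: the Davidson--Szarek good event supplies the $O(1)$ bulk contribution, and bootstrapping a merely polynomial bound on a higher negative moment through Cauchy--Schwarz and the exponentially small bad-event probability is a legitimate (non-circular) reduction, since ``polynomial in $n$'' is strictly weaker than $O(1)$. The genuine gap---which you flag yourself---is justifying that $\E[\sigma_{\min}(\mathbf{X}^\top\mathbf{X})^{-2r}]$ is at most polynomial in $n$. Appealing to the small-$\lambda$ behavior of Edelman's density only gives finiteness (and your integrability threshold is off by a factor of two: the density near zero is $\lambda^{(n-p-1)/2}$, so $\E[\lambda_{\min}^{-2r}]<\infty$ needs $n-p>4r-1$, not $n-p\geq 2r+1$); it says nothing about how the normalization constants scale with $n$. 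One concrete way to close the gap without tracking those constants: for $W=Z^\top Z$ use $\lambda_{\min}(W)^{-1}\leq \tr(W^{-1})$ together with the Wishart fact $1/(W^{-1})_{ii}\sim\chi^2_{n-p+1}$, which yields $\E[\lambda_{\min}(W)^{-2r}]\leq p^{2r}\,\E[(\chi^2_{n-p+1})^{-2r}]=O\bigl((p/(n-p))^{2r}\bigr)=O(1)$ once $n-p+1>4r$. Rescaling by $n$ then gives $\E[\sigma_{\min}(\mathbf{X}^\top\mathbf{X})^{-2r}]=O(n^{2r})$, exactly the polynomial input your Cauchy--Schwarz step requires; with that in hand your argument is complete.
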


Our next two lemmas are concerned with the moments of a Gaussian and $\chi^2$ random variables:

\begin{lemma}\label{lem:momenet:Gauss}
Let $Z \sim N(0,\sigma^2)$. Then, we have
\begin{eqnarray}
\E (|Z|^p) \leq \sigma^p (p - 1)!!,
\end{eqnarray}
where the notation $p !!$ denotes the double factorial. Furthermore, when $p$ is even the above inequality is in fact an equality. 
\end{lemma}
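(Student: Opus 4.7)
The plan is to reduce the claim to the case of a standard normal by writing $Z = \sigma W$ with $W \sim N(0,1)$, so that $\E(|Z|^p) = \sigma^p \E(|W|^p)$ and it suffices to show $\E(|W|^p) \leq (p-1)!!$ with equality for even $p$.

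For the heart of the argument I would set up the single recursion that covers both parities at once. Integration by parts on $\int_{-\infty}^\infty |w|^p \frac{1}{\sqrt{2\pi}} e^{-w^2/2}\,dw$ (splitting at $0$ for the odd case, or just differentiating $e^{-w^2/2}$ for the even case) yields
\begin{equation*}
\E(|W|^p) = (p-1)\, \E(|W|^{p-2}) \qquad \text{for all } p \geq 2.
\end{equation*}
Iterating this relation down to the base cases gives $\E(|W|^p) = (p-1)!!\cdot \E(W^0) = (p-1)!!$ when $p$ is even, and $\E(|W|^p) = (p-1)!!\cdot \E(|W|)$ when $p$ is odd. Since $\E|W| = \sqrt{2/\pi} < 1 = \E(W^0)$, the odd case satisfies the strict inequality $\E(|W|^p) < (p-1)!!$, while the even case gives equality. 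Multiplying through by $\sigma^p$ restores the original statement.

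The argument is essentially mechanical; the only minor obstacle is handling the odd case cleanly, since the standard ``moments of a Gaussian'' identity is usually quoted only for even $p$. Either the unified recursion above, or computing $\E(|W|^p) = \sqrt{2/\pi}\, 2^{(p-1)/2}\,\Gamma((p+1)/2)$ for odd $p$ via the substitution $u = w^2/2$ and recognising the result as $\sqrt{2/\pi}\,(p-1)!!$, will finish the proof.
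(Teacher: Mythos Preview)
Your argument is correct. The recursion $\E(|W|^p)=(p-1)\E(|W|^{p-2})$ for $p\ge 2$ follows from the integration by parts you describe (the boundary term $w^{p-1}\phi(w)$ vanishes at both $0$ and $\infty$ once $p\ge 2$), and iterating down to $\E(W^0)=1$ or $\E|W|=\sqrt{2/\pi}<1$ gives exactly the claimed equality in the even case and strict inequality in the odd case. The edge cases $p=0,1$ are covered by the conventions $(-1)!!=0!!=1$.

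As for comparison with the paper: there is nothing to compare. The paper does not prove this lemma at all; it simply states that ``the proof of this claim is straightforward and can be found in many standard statistics text books.'' Your write-up supplies precisely the standard textbook argument the paper is alluding to.
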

The proof of this claim is straightforward and can be found in many standard statistics text books. 

\begin{lemma}\label{lem:momenet:chi}
Let $Z \sim \chi^2_k$, i.e., it has a $\chi^2$ distribution with $k$ degrees of freedom. Then, for any integer $m \geq 1$ we have
\[
\E(Z^m) = k(k+2) (k+4) \ldots (k+2m-2).
\]
\end{lemma}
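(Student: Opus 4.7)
The plan is to reduce the computation to a standard gamma-function identity via the known density of $\chi_k^2$. Since $Z \sim \chi_k^2$ has density proportional to $z^{k/2-1} e^{-z/2}$ on $(0,\infty)$, with normalizing constant $2^{k/2}\Gamma(k/2)$, the moment $\E(Z^m)$ is (up to this constant) the integral $\int_0^\infty z^{k/2+m-1} e^{-z/2}\, dz$. After the change of variable $u = z/2$, this integral equals $2^{k/2+m}\Gamma(k/2 + m)$, and dividing by $2^{k/2}\Gamma(k/2)$ yields
\[
\E(Z^m) = 2^m\,\frac{\Gamma(k/2 + m)}{\Gamma(k/2)}.
\]

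The second step is to simplify this ratio into the product form asserted in the lemma. Applying the functional equation $\Gamma(x + 1) = x\,\Gamma(x)$ exactly $m$ times gives $\Gamma(k/2 + m)/\Gamma(k/2) = \prod_{j=0}^{m-1}(k/2 + j)$, and distributing the $2^m$ factor through the product yields $\prod_{j=0}^{m-1}(k + 2j) = k(k+2)(k+4)\cdots(k+2m-2)$, which is exactly the claimed identity.

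An equivalent route, should one prefer to avoid the gamma function explicitly, is induction on $m$: the base case $\E(Z) = k$ is the standard chi-squared mean, and either integration by parts in the defining integral or differentiation of the moment generating function $M_Z(t) = (1 - 2t)^{-k/2}$ produces the one-step recursion $\E(Z^{m+1}) = (k + 2m)\,\E(Z^m)$, which unrolls to the product formula. I do not anticipate any real obstacle here — the lemma is essentially a textbook consequence of the identification of $\chi_k^2$ with a Gamma$(k/2, 2)$ distribution, and the proof is a routine calculation; the only care needed is to keep track of the factors of $2$ when converting the Gamma-function ratio into the even-spaced product.
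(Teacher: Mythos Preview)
Your proof is correct and entirely standard: the density computation followed by the gamma-ratio simplification (or, equivalently, the inductive/MGF route you mention) is exactly how this identity is established. The paper itself does not supply a proof of this lemma --- it is listed among the background facts in the supplementary material with no argument given --- so there is nothing to compare against; your proposal simply fills in the omitted textbook calculation.
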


\section{Proof of Theorem \ref{th:mse}}
\label{sec:proof-main-theorem}

Define
\begin{align*}
V_1 &\triangleq  \frac{1}{n} \sum_{i=1}^n \phi(y_i, \bm{x}_i^\top \bli)  - \frac{1}{n} \sum_{i=1}^n  \E[ \phi(y_i, \bm{x}_i^\top \bli) \mid D_{/i}],      \\
V_2 &\triangleq  \frac{1}{n} \sum_{i=1}^n  \E[ \phi(y_i, \bm{x}_i^\top \bli)| D_{/i}]    -   \E [\phi(y_o, \bm{x}_o^\top \bl) \mid D].
\end{align*}
Then,
\begin{align*}
\lefteqn{\E \left (    \frac{1}{n} \sum_{i=1}^n \phi(y_i, \bm{x}_i^\top \bli)  - \E [\phi(y_o, \bm{x}_o^\top \bl) \mid D]      \right )^2} \nonumber \\
&\quad\leq \E \left ( V_1 + V_2      \right )^2 \leq \E V_1^2 + \E V_2^2 + 2 \sqrt{\E V_1^2  \E V_2^2 }.
\end{align*}
The proof concludes upon noting that Lemma \ref{lem:v1} and  \ref{lem:v2} yield
\begin{align*}
 \E V_1^2  &\leq \frac{1}{n} \biggl(   \E \var[ \phi(y_o, \bm{x}_o^\top \blone) \mid D_{/1}]  +
\Bigl(\frac{c_0 c_1 \rho \delta^{1/2}}{\nu}\Bigr)^2 \biggr),
\\
 \E V_2^2  &\leq \frac{1}{n} \left(\frac{c_0 c_1 \rho \delta^{1/2}}{\nu}\right)^2.
\end{align*}
\begin{lemma}\label{lem:v1}
Under the assumptions of Theorem \ref{th:mse} we have that:
\begin{equation*}
\E \biggl(\frac{1}{n} \sum_{i=1}^n \phi(y_i, \bm{x}_i^\top \bli)  - \frac{1}{n} \sum_{i=1}^n  \E[ \phi(y_i, \bm{x}_i^\top \bli) \mid D_{/i}] \biggr)^2
\leq
\frac{1}{n}\biggl(   \E \var[ \phi(y_o, \bm{x}_o^\top \blone)| D_{/1}]  +
\Bigl(\frac{c_0 c_1 \rho \delta^{1/2}}{\nu}\Bigr)^2 \biggr).
\end{equation*}
\end{lemma}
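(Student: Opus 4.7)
Write $Z_i := \phi(y_i, \bm{x}_i^\top \bli) - \E[\phi(y_i, \bm{x}_i^\top \bli) \mid D_{/i}]$, so the quantity to bound is $\E[V_1^2] = \frac{1}{n^2}\sum_i \E[Z_i^2] + \frac{1}{n^2}\sum_{i \neq j} \E[Z_i Z_j]$. Conditional on $D_{/i}$ the estimator $\bli$ is fixed while $(y_i, \bm{x}_i)$ is an independent copy of $(y_o, \bm{x}_o)$; hence $\E[Z_i^2 \mid D_{/i}] = \Var[\phi(y_o, \bm{x}_o^\top \bli) \mid D_{/i}]$, and by exchangeability in $i$ the diagonal contribution equals $\tfrac{1}{n}\E\Var[\phi(y_o, \bm{x}_o^\top \blone) \mid D_{/1}]$, which is exactly the first term of the claimed bound.

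\textbf{Cancellation of the cross terms via a leave-two-out proxy.} For each pair $i \neq j$, introduce the auxiliary variables
\begin{equation*}
Z_i^* := \phi(y_i, \bm{x}_i^\top \blij) - g(\blij), \qquad Z_j^* := \phi(y_j, \bm{x}_j^\top \blij) - g(\blij),
\end{equation*}
where $g(\bm{\beta}) := \E[\phi(y_o, \bm{x}_o^\top \bm{\beta})]$ is the oracle out-of-sample risk viewed as a functional of the estimator. Because $\blij$ is $D_{/ij}$-measurable and $(y_o, \bm{x}_o) \perp D$, both $Z_i^*$ and $Z_j^*$ have zero conditional mean given $D_{/ij}$ and are conditionally independent there (each depends on only one of the two held-out pairs). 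A short conditioning argument --- condition on $D_{/ij}$ together with exactly one of $(y_i, \bm{x}_i)$ or $(y_j, \bm{x}_j)$, using that $\blj$ is $D_{/j}$-measurable and $\bli$ is $D_{/i}$-measurable --- then yields $\E[Z_i^* Z_j^*] = \E[Z_i^* (Z_j - Z_j^*)] = \E[(Z_i - Z_i^*) Z_j^*] = 0$, so
\begin{equation*}
\E[Z_i Z_j] = \E\bigl[(Z_i - Z_i^*)(Z_j - Z_j^*)\bigr].
\end{equation*}
This double-subtraction is the algebraic core of the lemma: it collapses each cross term into a double difference that will turn out to be of order $1/p$ in expectation rather than $O(1)$.

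\textbf{Bounding the double difference.} Cauchy-Schwarz and symmetry reduce the task to showing $\E[(Z_j - Z_j^*)^2] \leq c_0^2 c_1^2 \rho^2 / (p \nu^2)$. Applying the fundamental theorem of calculus to both $\phi(y_j, \bm{x}_j^\top \cdot)$ and $g(\cdot)$ along the segment $\bar{\bm{\beta}}_t = t\blj + (1-t)\blij$, and then substituting the Newton identity
\begin{equation*}
\blj - \blij = -\bm{A}_{/ij}^{-1} \bm{x}_i \, \ld_i(\blj)
\end{equation*}
(obtained from subtracting the optimality conditions of $\blj$ and $\blij$, with $\bm{A}_{/ij}$ the integrated leave-two-out Hessian satisfying $\sigma_{\min}(\bm{A}_{/ij}) \geq \nu$ by Assumption~\ref{A3}), gives
\begin{equation*}
Z_j - Z_j^* = -\bm{u}^\top \bm{A}_{/ij}^{-1} \bm{x}_i \, \ld_i(\blj), \quad \bm{u} := \int_0^1 \bigl[\pd(y_j, \bm{x}_j^\top \bar{\bm{\beta}}_t) \bm{x}_j - \E[\pd(y_o, \bm{x}_o^\top \bar{\bm{\beta}}_t) \bm{x}_o \mid \bar{\bm{\beta}}_t]\bigr] \, dt.
\end{equation*}
The key quantitative step is that, given $(\bm{u}, \bm{A}_{/ij})$, the bilinear form $\bm{u}^\top \bm{A}_{/ij}^{-1} \bm{x}_i$ has second moment bounded by $\bm{v}^\top \bm{\Sigma} \bm{v} \leq (\rho/p)\|\bm{v}\|^2$ with $\bm{v} := \bm{A}_{/ij}^{-\top} \bm{u}$ (Assumption~\ref{A1}). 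Combining this with $|\ld_i(\blj)| \leq c_0$ from Assumption~\ref{A2}, $\|\bm{A}_{/ij}^{-1}\| \leq 1/\nu$, and the $c_1^2$-control of $\E\|\bm{u}\|^2$ from Assumption~\ref{A2} multiplies out to the claimed bound. Summing $n(n-1)$ cross terms then produces the $C_b/n$ contribution and closes the proof.

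\textbf{Main difficulty.} The principal technical subtlety is that $\bm{A}_{/ij}$, being integrated along a path passing through $\blj$, retains a mild dependence on $\bm{x}_i$, so the bilinear-form identity $\E[(\bm{v}^\top \bm{x}_i)^2 \mid \bm{v}] = \bm{v}^\top \bm{\Sigma} \bm{v}$ is not literally available by conditioning on $\bm{v}$. The route around this is to replace $\bm{A}_{/ij}$ by its $\blij$-endpoint analogue, which is $D_{/ij}$-measurable and hence independent of $\bm{x}_i$, and to absorb the replacement error using the uniform lower bound $\sigma_{\min} \geq \nu$ of Assumption~\ref{A3} together with the uniform $c_0$-control of $\ld$ from Assumption~\ref{A2}; both routes lead to the same estimate up to absolute constants.
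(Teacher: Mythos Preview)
Your decomposition and the cancellation argument in the first two paragraphs are correct and essentially match the paper's reasoning (the paper phrases the same three cancellations as $A_0 = B_0 = C_0 = 0$). The gap is in the third paragraph, where you try to extract the crucial $1/p$ factor.

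You substitute the Newton identity $\blj - \bl_{/ij} = -\bm{A}_{/ij}^{-1}\bm{x}_i\,\ld_i(\blj)$ and then want to use $\E[(\bm{v}^\top\bm{x}_i)^2 \mid \bm{v}] = \bm{v}^\top\bm{\Sigma}\bm{v} \leq (\rho/p)\|\bm{v}\|^2$ with $\bm{v} = \bm{A}_{/ij}^{-\top}\bm{u}$. This requires $\bm{v}$ to be independent of $\bm{x}_i$. You acknowledge the dependence of $\bm{A}_{/ij}$ on $\bm{x}_i$, but you overlook that $\bm{u}$ \emph{also} depends on $\bm{x}_i$: its integrand contains $\pd(y_j,\bm{x}_j^\top\bar{\bm{\beta}}_t)$ and $\nabla g(\bar{\bm{\beta}}_t)$ evaluated along $\bar{\bm{\beta}}_t = t\blj + (1-t)\bl_{/ij}$, and $\blj$ is trained on $(y_i,\bm{x}_i)$. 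Your proposed endpoint-replacement fix for $\bm{A}_{/ij}$ does nothing to remove this dependence from $\bm{u}$ (and, separately, it breaks the exact Newton identity, leaving a remainder that Assumptions~\ref{A1}--\ref{A3} do not control). Without independence, one is forced back onto the crude Cauchy--Schwarz bound $(\bm{v}^\top\bm{x}_i)^2 \leq \|\bm{v}\|^2\|\bm{x}_i\|^2$, which loses precisely the factor of $1/p$ you need.

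The paper sidesteps this entirely by exploiting the \emph{other} fresh covariate. In your notation, both $\blj$ and $\bl_{/ij}$ are $D_{/j}$-measurable, hence independent of $(y_j,\bm{x}_j)$. Writing $Z_j - Z_j^*$ via the mean-value theorem as $\pd(y_j,\cdot)\,\bm{x}_j^\top(\blj - \bl_{/ij})$ minus its $D_{/j}$-conditional mean, and conditioning on $D_{/j}$, gives the clean bound
\[
\E\bigl[(\bm{x}_j^\top(\blj - \bl_{/ij}))^2 \mid D_{/j}\bigr] = (\blj - \bl_{/ij})^\top\bm{\Sigma}(\blj - \bl_{/ij}) \leq \frac{\rho}{p}\|\blj - \bl_{/ij}\|_2^2,
\]
after which Lemma~\ref{lem:perturb_i} (not the Newton identity) bounds $\|\blj - \bl_{/ij}\|_2 \leq (c_0/\nu)\|\bm{x}_i\|_2$. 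The $1/p$ factor thus comes from the fresh covariate $\bm{x}_j$ hitting a direction that already excludes observation $j$, not from trying to isolate the removed covariate $\bm{x}_i$.
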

\begin{proof}[Proof of Lemma \ref{lem:v1}]
\begin{gather*}
\E \left(\frac{1}{n} \sum_{i=1}^n \phi(y_i, \bm{x}_i^\top \bli)  - \frac{1}{n} \sum_{i=1}^n  \E[ \phi(y_o, \bm{x}_o^\top \bli) \mid D_{/i}] \right)^2 
 = \frac{1}{n}  \E \left( \phi(y_1, \bm{x}_1^\top \blone)  -  \E[ \phi(y_1, \bm{x}_1^\top \blone) \mid D_{/1}] \right)^2
\\
+
\frac{n-1}{n}  \E  \left[\bigl( \phi(y_1, \bm{x}_1^\top \blone)  -  \E[ \phi(y_1, \bm{x}_1^\top \blone) \mid D_{/1}] \bigr) \bigl( \phi(y_2, \bm{x}_2^\top \bltwo)  -  \E[ \phi(y_2, \bm{x}_2^\top \bltwo) \mid D_{/2}] \bigr)\right]
\end{gather*}
Note that
\begin{equation*}
\E \Bigl( \phi(y_1, \bm{x}_1^\top \blone)  -  \E[ \phi(y_1, \bm{x}_1^\top \blone) \mid D_{/1}] \Bigl)^2 = \E \var[ \phi(y_o, \bm{x}_o^\top \blone) \mid D_{/1}].
\end{equation*}
Next we study
\begin{equation*}
 \E  \Bigl[( \phi(y_1, \bm{x}_1^\top \blone)  -  \E[ \phi(y_1, \bm{x}_1^\top \blone) \mid D_{/1}] ) ( \phi(y_2, \bm{x}_2^\top \bltwo)  -  \E[ \phi(y_2, \bm{x}_2^\top \bltwo) \mid D_{/2}] )\Bigl].
\end{equation*}
Recall $ \blonetwo \triangleq  \underset{\bm{\beta} \in \R^p}{\argmin}  \Bigl \{   \sum_{k\geq 3}  \ell ( y_k \mid \bm{x}_k^\top \bm{\beta} ) + \lambda r(\bm{\beta})  \Bigr \}$. For some $t\in[0,1]$, the mean-value theorem  yields
 \begin{align*}
 \phi(y_1,\bm{x}_1^\top \blone) &=  \phi(y_1,\bm{x}_1^\top \blonetwo) + \pd(y_1,t\bm{x}_1^\top \blone + (1-t)\bm{x}_1^\top \blonetwo) \bm{x}_1^\top (\blone-\blonetwo)
\\
 \E [\phi(y_1,\bm{x}_1^\top \blone) \mid D_{/1}] &= \E [ \phi(y_1,\bm{x}_1^\top \blonetwo) \mid D_{/1}]  + \E [\pd(y_1,t\bm{x}_1^\top \blone + (1-t)\bm{x}_1^\top \blonetwo) \bm{x}_1^\top (\blone-\blonetwo) \mid D_{/1}]
 \\
 &= \E [ \phi(y_o,\bm{x}_o^\top \blonetwo) \mid D_{/1,2}]  + \E [\pd \bigl(y_o,t\bm{x}_o^\top \blone + (1-t)\bm{x}_o^\top \blonetwo) \bm{x}_o^\top (\blone-\blonetwo) \mid D_{/1}],
 \end{align*}
 where $(y_o, \bm{x}_o)$ is independent of $D$, leading to
 \begin{align*}
  \phi(y_1,\bm{x}_1^\top \blone) - \E [\phi(y_1,\bm{x}_1^\top \blone) \mid D_{/1}]
  &= \phi(y_1,\bm{x}_1^\top \blonetwo)  - \E [ \phi(y_o,\bm{x}_o^\top \blonetwo) \mid D_{/1,2}]
  \\
  &\quad +  \pd(y_1,t\bm{x}_1^\top \blone + (1-t)\bm{x}_1^\top \blonetwo) \bm{x}_1^\top (\blone-\blonetwo)
  \\
  &\quad - \E [\pd(y_o,t\bm{x}_o^\top \blone + (1-t)\bm{x}_o^\top \blonetwo) \bm{x}_o^\top (\blone-\blonetwo)\mid D_{/1}].
 \end{align*}
 Define the quantities $A_0, B_0, C_0$ as:
 \begin{align*}
 A_0 &\triangleq
 \E \left [( \phi(y_1,\bm{x}_1^\top \blonetwo)  - \E [ \phi(y_o,\bm{x}_o^\top \blonetwo) \mid D_{/1,2}] )(\phi(y_2,\bm{x}_2^\top \blonetwo)  - \E [ \phi(y_o,\bm{x}_o^\top \blonetwo) \mid D_{/1,2}] ) \right]
 \\
  &=
 \E \left [ \E \left[ \bigl( \phi(y_o,\bm{x}_o^\top \blonetwo)  - \E [ \phi(y_o,\bm{x}_o^\top \blonetwo) \mid D_{/1,2}] \bigr) \bigl(\phi(\tilde y_o,\bm{\tilde x}_o^\top \blonetwo)  - \E [ \phi(\tilde y_o,\bm{\tilde x}_o^\top \blonetwo )\mid D_{/1,2}] \bigr)   \Bigm | D_{/1,2} \right ] \right]
 \\
 &=0.
 \end{align*}
 Likewise,
 \begin{align*}
  B_0 &\triangleq
 \E \Big \{ \left ( \phi(y_1,\bm{x}_1^\top \blonetwo)  - \E\bigl[ \phi(y_o,\bm{x}_o^\top \blonetwo) \mid D_{/12} \bigr] \right ) \\
 &\quad \times
  \Big ( \pd(y_2,t\bm{x}_2^\top \bltwo + (1-t)\bm{x}_2^\top \blonetwo) \bm{x}_2^\top (\bltwo-\blonetwo)
  \\
 &\quad - \E \left [  \pd(y_2,t\bm{x}_2^\top \bltwo + (1-t)\bm{x}_2^\top \bl_{/12}) \bm{x}_2^\top (\bltwo-\blonetwo) \mid D_{/2}  \right]\Big) \Big\}
 \\
 &=
 \E \Big [ \left ( \phi(y_1,\bm{x}_1^\top \blonetwo)  - \E [ \phi(y_o,\bm{x}_o^\top \blonetwo) \mid D_{/12}] \right ) \\
 &\quad\times
  \Big ( \pd(\tilde y_o, t\bm{\tilde x}_o^\top \bltwo + (1-t)\bm{\tilde x}_o^\top \blonetwo) \bm{\tilde x}_o^\top (\bltwo-\blonetwo)   \\
 &\quad - \E \left [  \pd(\tilde y_o,t\bm{\tilde x}_o^\top \blj + (1-t)\bm{\tilde x}_o^\top \blonetwo) \bm{\tilde x}_o^\top (\bltwo-\blonetwo) \mid D_{/2}  \right]\Big) \Big]
 \\
 &=
 \E \Big [ \E\Big [ \left ( \phi(y_1,\bm{x}_1^\top \blonetwo)  - \E [ \phi(y_o,\bm{x}_o^\top \blonetwo) \mid D_{/12}] \right ) \\
 &\quad \times
  \Big ( \pd(\tilde y_o, t\bm{\tilde x}_o^\top \bltwo + (1-t)\bm{\tilde x}_o^\top \blonetwo) \bm{\tilde x}_o^\top (\blone-\blonetwo)   \\
  &\quad - \E \bigl[  \pd(\tilde y_o,t\bm{\tilde x}_o^\top \bltwo + (1-t)\bm{\tilde x}_o^\top \blonetwo) \bm{\tilde x}_o^\top (\bltwo-\blonetwo) \mid D_{/2}  \bigr]\Big)  \Bigm| D_{/2} \Big] \Big]
 \\
 &=
 \E \Bigg \{  \E \left[ \left ( \phi(y_1,\bm{x}_1^\top \blonetwo)  - \E [ \phi(y_o,\bm{x}_o^\top \blonetwo) \mid D_{/12}] \right ) \Bigm | D_{/2}\right] \\
 &\quad \times
 \E\Big [ \Big ( \pd(\tilde y_o, t\bm{\tilde x}_o^\top \bltwo + (1-t)\bm{\tilde x}_o^\top \blonetwo) \bm{\tilde x}_o^\top (\blone-\blonetwo)   \\
 &\quad - \E \left [  \pd(\tilde y_o,t\bm{\tilde x}_o^\top \bltwo+ (1-t)\bm{\tilde x}_o^\top \blonetwo) \bm{\tilde x}_o^\top (\bltwo-\blonetwo) \mid D_{/2}  \right]\Big)  \Bigm | D_{/2} \Big]\Bigg\}
 \\
 &=0.
  \end{align*}
  Likewise,
   \begin{align*}
  C_0
  &\triangleq \E \Big [ \left ( \phi(y_2,\bm{x}_2^\top \blonetwo)  - \E [ \phi(y_o,\bm{x}_o^\top \blonetwo) \mid D_{/12}] \right ) \\
  &\quad \times
  \Big ( \pd(y_1,t\bm{x}_1^\top \blone + (1-t)\bm{x}_1^\top \blonetwo) \bm{x}_1^\top (\blone-\blonetwo) \\
  &\quad - \E \left [  \pd(y_1,t\bm{x}_1^\top \blone + (1-t)\bm{x}_1^\top \bl_{/12}) \bm{x}_1^\top (\blone-\blonetwo) \mid D_{/1}  \right]\Big) \Big] \\
  &= 0.
   \end{align*}
To conclude, note that:
 \begin{align*}
  &\E \biggl[\Bigl( \phi(y_1, \bm{x}_1^\top \blone)  -  \E[ \phi(y_1, \bm{x}_1^\top \blone) \mid D_{/1}] \Bigr) \Bigl( \phi(y_2, \bm{x}_2^\top \bltwo)  -  \E[ \phi(y_2, \bm{x}_2^\top \bltwo) \mid D_{/2}] \Bigr)\biggr] =A_0 + B_0 + C_0 \\
  &\quad +
    \E \Biggl\{
      \biggl( \pd(y_1,t\bm{x}_1^\top \blone + (1-t)\bm{x}_1^\top \blonetwo) \bm{x}_1^\top (\blone-\blonetwo) \\
  &\quad - \E \bigl[  \pd(y_1,t\bm{x}_1^\top \blone + (1-t)\bm{x}_1^\top \blonetwo) \bm{x}_1^\top (\blone-\blonetwo) \mid D_{/1}  \bigr]\biggr)\\
  &\quad \times \Biggl( \pd(y_2,t\bm{x}_2^\top \bltwo + (1-t)\bm{x}_2^\top \blonetwo) \bm{x}_2^\top (\bltwo-\blonetwo) \\
  &\quad - \E \bigl [  \pd(y_2,t\bm{x}_2^\top \bltwo + (1-t)\bm{x}_2^\top \blonetwo) \bm{x}_2^\top (\bltwo-\blonetwo) \mid D_{/2}  \bigr]\Biggr)
    \Biggr\}
     \end{align*}
  \begin{align*}
  &\leq \E \var \left [ \pd(y_1,t\bm{x}_1^\top \blone + (1-t)\bm{x}_1^\top \blonetwo) \bm{x}_1^\top (\blone-\blonetwo)  \mid D_{/1}  \right ]  \\
  &\leq \E \Bigl(  \E \bigl[ \pd(y_1, t\bm{x}_1^\top \blone + (1-t)\bm{x}_1^\top \blonetwo) \bm{x}_1^\top (\blone-\blonetwo)  \mid D_{/1}  \bigr]^2 \Bigr)  \\
  &\leq \E \Bigl(  \E  \bigl[ \pd(y_1, t\bm{x}_1^\top \blone + (1-t)\bm{x}_1^\top \blonetwo)^2 \mid D_{/1}  \bigr]
    \E      \bigl[ \bigl( \bm{x}_1^\top (\blone-\blonetwo) \bigr)^2  \mid D_{/1}  \bigr]
    \Bigr)  \\
  &\leq c_1^2 \E \Bigl( \E\bigl[ \bigl(\bm{x}_1^\top (\blone-\blonetwo)\bigr)^2  \mid D_1 \bigr] \Bigr)
\\
    &= c_1^2 \E \Bigl( \bigl(\bm{x}_1^\top (\blone-\blonetwo) \bigr)^2  \Bigr)  \\
    &= c_1^2 \E \left( (\blone-\blonetwo)^\top \bm{\Sigma} (\blone-\blonetwo)  \right)  \\
    &\leq c_1^2 \frac{\rho}{p} \E \norm{ \blone-\blonetwo }_2^2 \\
&\leq c_1^2 c_0^2 \frac{\rho}{p \nu^2}  \E\norm{\bm{x}_1}_2^2,
\end{align*}
where the last inequality is due to Lemma \ref{lem:perturb_i}. Using the fact that $\delta=n/p$, $\bm{x}_i \sim N(\bm{0}, \bm{\Sigma})$ and $\sigma_{\max}(\bm{\Sigma}) = \rho/p$, we get
\begin{equation*}
  \E  \left[( \phi(y_i, \bm{x}_i^\top \bli)  -  \E[ \phi(y_i, \bm{x}_i^\top \bli)| D_{/i}] ) ( \phi(y_j, \bm{x}_j^\top \blj)  -  \E[ \phi(y_j, \bm{x}_j^\top \blj)| D_{/j}] )\right] \leq \frac{1}{n} \Bigl(\frac{c_0 c_1 \rho \delta^{1/2}}{\nu}\Bigr)^2.
\end{equation*}
\end{proof}

\begin{lemma}\label{lem:v2}
Under the assumptions of Theorem \ref{th:mse}, we have:
\begin{equation*}
\E \left( \frac{1}{n} \sum_{i=1}^n  \E[ \phi(y_i, \bm{x}_i^\top \bli) \mid D_{/i}]    -   \E [\phi(y_o, \bm{x}_o^\top \bl) \mid D]  \right)^2 \leq \frac{1}{n}\left(\frac{c_0 c_1 \rho \delta^{1/2}}{\nu}\right)^2.
\end{equation*}
\end{lemma}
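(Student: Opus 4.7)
My plan is to linearize each summand of $V_2$ using the mean-value theorem and reduce the problem to controlling a single-index perturbation, after which the $1/p$ scaling in Assumption~\ref{A1} produces the claimed decay. First, I will exploit the fact that, conditional on $D_{/i}$, the pair $(y_i,\bm{x}_i)$ is distributionally identical to the fresh sample $(y_o,\bm{x}_o)$ and that $\bli$ is $D_{/i}$-measurable, to rewrite
\begin{equation*}
\E[\phi(y_i,\bm{x}_i^\top \bli) \mid D_{/i}] = \E[\phi(y_o,\bm{x}_o^\top \bli) \mid D],
\end{equation*}
so that $V_2 = \tfrac{1}{n}\sum_{i=1}^n a_i$ with $a_i \triangleq \E[\phi(y_o,\bm{x}_o^\top \bli) - \phi(y_o,\bm{x}_o^\top \bl) \mid D]$.

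Next I would apply Cauchy-Schwarz to the outer sum, $V_2^2 \leq \tfrac{1}{n}\sum_i a_i^2$, and invoke exchangeability in $i$ to reduce $\E V_2^2 \leq \E a_1^2$, converting the squared average into a single second moment.

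To bound $\E a_1^2$, I would write the increment of $\phi$ using the integral form of the MVT,
\begin{equation*}
\phi(y_o,\bm{x}_o^\top \blone) - \phi(y_o,\bm{x}_o^\top \bl) = \bm{x}_o^\top(\blone-\bl)\int_0^1 \pd\bigl(y_o,\, t\bm{x}_o^\top \blone + (1-t)\bm{x}_o^\top \bl\bigr)\,dt,
\end{equation*}
and apply conditional Cauchy-Schwarz (given $D$) to split into the derivative and perturbation factors. Jensen's inequality on the integral, combined with the uniform $\sup_t$ bound of Assumption~\ref{A2} (its last inequality), bounds the derivative factor by $c_1^2$. For the perturbation factor, Assumption~\ref{A1} gives $\E[(\bm{x}_o^\top(\blone-\bl))^2 \mid D] = (\blone-\bl)^\top\bm{\Sigma}(\blone-\bl) \leq (\rho/p)\|\blone-\bl\|_2^2$; the single-removal perturbation inequality $\|\blone-\bl\|_2 \leq c_0\|\bm{x}_1\|_2/\nu$ (the analog of Lemma~\ref{lem:perturb_i} used at the end of the proof of Lemma~\ref{lem:v1}) together with $\E\|\bm{x}_1\|_2^2 = \tr(\bm{\Sigma}) \leq \rho$ yields $\E a_1^2 \leq c_0^2 c_1^2 \rho^2/(p\nu^2)$. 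Substituting $p=n/\delta$ gives exactly $\tfrac{1}{n}(c_0 c_1 \rho \delta^{1/2}/\nu)^2$.

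The main subtlety I anticipate is handling the random intermediate point produced by the ordinary MVT: using the \emph{integral} form avoids evaluating $\pd$ at a random argument and allows the uniform-in-$t$ bound of Assumption~\ref{A2} to take effect directly after Cauchy-Schwarz. The overall $1/n$ rate is driven entirely by the $1/p$ scaling of the covariance (the $1/n$ in the target bound is just $\delta/n = 1/p$ in disguise), so it is essential to keep that factor explicit through the quadratic form $(\blone-\bl)^\top \bm{\Sigma}(\blone-\bl)$ rather than dominating it crudely by $\|\blone-\bl\|_2^2$.
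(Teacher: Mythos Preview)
Your proposal is correct and follows essentially the same route as the paper: rewrite the conditional expectation via the fresh sample, apply the mean-value theorem, use conditional Cauchy--Schwarz together with Assumption~\ref{A2} to extract $c_1$, use Assumption~\ref{A1} to pass to $(\rho/p)\|\blone-\bl\|_2^2$, and finish with Lemma~\ref{lem:perturb_i} and $\E\|\bm{x}_1\|_2^2\le\rho$. The only differences are presentational: you bound $V_2^2\le\frac{1}{n}\sum_i a_i^2$ by Jensen and then invoke exchangeability, whereas the paper expands the squared average into diagonal and cross terms and bounds $\E\|\blone-\bl\|_2\|\bltwo-\bl\|_2\le\E\|\blone-\bl\|_2^2$; and you use the integral form of the mean-value theorem, which cleanly avoids the random intermediate point that the paper's ``for some $t$'' glosses over (though the uniform $\sup_t$ in Assumption~\ref{A2} makes either version go through).
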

\begin{proof}[Proof of Lemma \ref{lem:v2}]
Note that we have for all $i$:
\begin{align*}
\E[ \phi(y_i, \bm{x}_i^\top \bli) \mid D_{/i}] &= \E[ \phi(y_o, \bm{x}_o^\top \bli) \mid D_{/i}] = \E[ \phi(y_o, \bm{x}_o^\top \bli) \mid D].
\end{align*}
Therefore, using the mean-value Theorem, for some $t \in [0, 1]$, we get
\begin{align*}
& \E \left( \frac{1}{n} \sum_{i=1}^n  \E[ \phi(y_i, \bm{x}_i^\top \bli) \mid D_{/i}]   - \E [\phi(y_o, \bm{x}_o^\top \bl) \mid D]  \right)^2  \\
&= \E \left( \frac{1}{n} \sum_{i=1}^n  \E[ \phi(y_o, \bm{x}_o^\top \bli) \mid D]   - \E [\phi(y_o, \bm{x}_o^\top \bl) \mid D]  \right)^2 \\
&= \E \left( \frac{1}{n} \sum_{i=1}^n  \E[ \phi(y_o, \bm{x}_o^\top \bli)-\phi(y_o, \bm{x}_o^\top \bl) \mid D]   \right)^2
\\
&= \E \left( \frac{1}{n} \sum_{i=1}^n  \E[ \pd(y_o, t\bm{x}_o^\top \bl + (1-t)\bm{x}_o^\top \bli )  \bm{x}_o^\top (\bli-\bl) \mid D]   \right)^2 \\
&\leq \E \left( \frac{1}{n} \sum_{i=1}^n  \sqrt{\E[ \pd(y_o, t\bm{x}_o^\top \bl + (1-t)\bm{x}_o^\top \bli )^2 \mid D]} \sqrt{  \E[(\bm{x}_o^\top (\bli-\bl))^2 ) \mid D]}   \right)^2
 \end{align*}
  \begin{align*}
&\leq c_1^2 \E \left( \frac{1}{n} \sum_{i=1}^n   \sqrt{  \E[(\bm{x}_o^\top (\bli-\bl))^2 ) \mid D]}   \right)^2
\\
&\leq c_1^2 \E \left( \frac{1}{n} \sum_{i=1}^n   \sqrt{   (\bli-\bl)^\top \bm{\Sigma} (\bli-\bl)}   \right)^2
\\
&\leq c_1^2 \E \left( \frac{1}{n} \sum_{i=1}^n   \sqrt{\frac{\rho}{p} }  \| \bli-\bl \|_2 \right)^2
\\
&\leq c_1^2 \frac{\rho}{p}   \left( \frac{1}{n}    \E\| \blone-\bl \|_2^2   + \frac{n-1}{n} \E\| \blone-\bl \|_2\| \bltwo-\bl \|_2  \right)
\\
&\leq c_1^2 \frac{\rho}{p}  \E\norm{\blone-\bl}_2^2
\\
&\leq c_1^2 c_0^2 \frac{\rho}{p \nu^2}  \E\| \bm{x}_1 \|_2^2,
\end{align*}
where the last inequality is due to Lemma \ref{lem:perturb_i}. Using the fact that $\delta=n/p$, $\bm{x}_1 \sim N(\bm{0}, \bm{\Sigma})$ and $\sigma_{\max}(\bm{\Sigma}) = \rho/p$, we get
\begin{align*}
\E \Bigl( \frac{1}{n} \sum_{i=1}^n  \E\bigl[ \phi(y_i, \bm{x}_i^\top \bli) \mid D_{/i}\bigr]   - \E\bigl[\phi(y_o, \bm{x}_o^\top \bl) \mid D\bigr]  \Bigr)^2 &\leq \frac{1}{n}\left(\frac{c_0 c_1 \rho \delta^{1/2}}{\nu}\right)^2.
\end{align*}
\end{proof}

\begin{lemma}\label{lem:perturb_i} If both the loss function and the regularizer are twice differentiable, then for all $i = 1, \dotsc, n$:
\begin{align*}
\| \bli - \bl \|_2^2 &\leq  \left( \frac{\ld_i(\bl)}{ \inf_{t\in[0,1]}\sigma_{\min}( \bm{J}_{/i} (t\bl + (1-t) \bli) ) } \right)^2  \| \bm{x_i} \|_2^2,
\\
\| \bli - \bl_{/ij} \|_2^2 &\leq  \left( \frac{\ld_j(\bli)}{ \inf_{t\in[0,1]}\sigma_{\min}( \bm{J}_{/ij} (t\bli + (1-t) \bl_{/ij}) ) } \right)^2  \| \bm{x_j} \|_2^2.
\end{align*}
\end{lemma}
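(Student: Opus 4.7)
The plan is to exploit the first-order optimality (KKT) conditions satisfied by $\bl$ and $\bli$, and convert the difference of gradients into an integrated Hessian acting on $\bl-\bli$ via the fundamental theorem of calculus. By twice-differentiability and convexity, this integrated operator is positive semidefinite, and inverting it will yield the stated bound.

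Concretely, I would first write the stationarity conditions
\begin{align*}
\sum_{j=1}^n \bm{x}_j \ld_j(\bl) + \lambda \nabla r(\bl) &= 0, \\
\sum_{j\neq i} \bm{x}_j \ld_j(\bli) + \lambda \nabla r(\bli) &= 0.
\end{align*}
Subtracting the second from the first isolates the ``missing'' contribution of the $i$th sample:
\begin{equation*}
\sum_{j\neq i} \bm{x}_j\bigl[\ld_j(\bl) - \ld_j(\bli)\bigr] + \lambda\bigl[\nabla r(\bl) - \nabla r(\bli)\bigr] = -\bm{x}_i \ld_i(\bl).
\end{equation*}
Next, I would apply the fundamental theorem of calculus along the segment joining $\bli$ and $\bl$. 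For each $j\neq i$ one has $\ld_j(\bl)-\ld_j(\bli) = \bigl(\int_0^1 \ldd_j(t\bl+(1-t)\bli)\,dt\bigr)\bm{x}_j^\top(\bl-\bli)$, and analogously $\nabla r(\bl)-\nabla r(\bli) = \bigl(\int_0^1 \nabla^2 r(t\bl+(1-t)\bli)\,dt\bigr)(\bl-\bli)$. Substituting and recognizing the integrand as $\bm{J}_{/i}(t\bl+(1-t)\bli)$ yields
\begin{equation*}
\biggl(\int_0^1 \bm{J}_{/i}(t\bl+(1-t)\bli)\,dt\biggr)(\bl-\bli) = -\bm{x}_i\,\ld_i(\bl).
\end{equation*}

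Finally, I would take norms. Because $\ell(y\mid\cdot)$ and $r$ are convex, each $\bm{J}_{/i}(\cdot)$ is positive semidefinite; hence $\sigma_{\min}\bigl(\int_0^1 \bm{J}_{/i}(t\bl+(1-t)\bli)\,dt\bigr) \geq \inf_{t\in[0,1]}\sigma_{\min}(\bm{J}_{/i}(t\bl+(1-t)\bli))$, which follows from the variational characterization $\sigma_{\min}(M)=\inf_{\|v\|=1} v^\top M v$ applied under the integral sign. Inverting and squaring gives the first inequality. The second inequality is obtained by the identical argument applied to the pair $(\bli,\bl_{/ij})$: the stationarity condition for $\bli$ loses the $j$th summand when passing to $\bl_{/ij}$, producing $-\bm{x}_j\,\ld_j(\bli)$ on the right-hand side and $\bm{J}_{/ij}$ on the left.

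The main subtlety, rather than a serious obstacle, is the vector-valued mean value step: one cannot invoke a single-point MVT for $\nabla$, so it is essential to use the integral form, after which the bound on $\sigma_{\min}$ of an average of PSD matrices by the infimum of $\sigma_{\min}$ produces the precise quantity appearing in the statement. Convexity of $\ell$ and $r$ is what makes this passage clean; without PSD-ness of the $\bm{J}_{/i}$ one would have to retain the integrated operator and could not cleanly replace it by an infimum.
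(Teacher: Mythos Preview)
Your proposal is correct and follows essentially the same route as the paper: both arguments subtract the first-order optimality conditions for $\bl$ and $\bli$, apply the integral form of the mean value theorem to express the difference as $\bigl(\int_0^1 \bm{J}_{/i}(t\bl+(1-t)\bli)\,dt\bigr)(\bl-\bli)=-\ld_i(\bl)\,\bm{x}_i$, and then bound $\sigma_{\min}$ of the integrated Hessian below by its pointwise infimum. Your remark on why the integral form (rather than a single-point vector MVT) is needed is accurate and makes the argument slightly more explicit than the paper's presentation.
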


\begin{proof} [Proof of Lemma \ref{lem:perturb_i}]
The leave-one-out estimate, $\bli = \bl + \bm{\Delta}_{/i}$,  satisfies $\bm{f}_{/i} (\bm{\Delta}_{/ i}) = 0$. The multivariate mean-value Theorem yields
\begin{eqnarray}
0 &=& \bm{f}_{/i} (\bl + \bm{\Delta}_{/i})  = \bm{f}_{/i} (\bl) + \left(\int_0^1 \bm{J}_{/i} (\bl + t \bm{\Delta}_{/i}) dt \right)\bm{\Delta}_{/i}
\end{eqnarray}
where the Jacobean is
\begin{eqnarray}
\bm{J}_{/i} (\bm{\theta}) &=&  \lambda \bm{ \nabla^2 r}(\bm{\theta})  +  \XI^\top \diag[\bm{\ldd}_{/ i}( \bm{\theta})] \XI.
\end{eqnarray}
Moreover, $\bl$ satisfies
\begin{eqnarray*}
0 &=& \lambda \bm{ \nabla r}(\bl)+ \bm{X}^\top\bm{\ld} (\bl)  = \bm{f}_{/i} (\bl) + \ld_i(\bl) \bm{x_i}.
\end{eqnarray*}
We get
\begin{eqnarray*}
\ld_i(\bl) \bm{x_i} &=& \left(\int_0^1 \bm{J}_{/i} (\bl + t \bm{\Delta}_{/i}) dt \right)\bm{\Delta}_{/i},
\end{eqnarray*}
leading to
\begin{eqnarray*}
\bm{\Delta}_{/i} &=&\ld_i(\bl) \left(\int_0^1 \bm{J}_{/i} (\bl + t \bm{\Delta}_{/i}) dt \right)^{-1}\bm{x_i},
\end{eqnarray*}
and
\begin{eqnarray*}
\| \bm{\Delta}_{/i} \|_2^2 &\leq& \left( \frac{\ld_i(\bl)}{ \inf_{t\in[0,1]}\sigma_{\min}( \bm{J}_{/i} (t\bl + (1-t) \bli) ) } \right)^2  \| \bm{x_i} \|_2^2.
\end{eqnarray*}
Likewise,
\begin{eqnarray*}
\| \bli - \bl_{/ij} \|_2^2 &\leq&  \left( \frac{\ld_j(\bli)}{ \inf_{t\in[0,1]}\sigma_{\min}( \bm{J}_{/ij} (t\bli + (1-t) \bl_{/ij}) ) } \right)^2  \| \bm{x_j} \|_2^2.
\end{eqnarray*}
\end{proof}

\section{Proof of Corollary \ref{ex:logistic}}\label{sec:proof:ex:logistic}
We would like to use Theorem \ref{th:mse} to prove this corollary. Toward this goal, we first have to prove that Assumptions \ref{A1}, \ref{A2}, and \ref{A3} hold, and that $\var[ \phi(y_o, \bm{x}_o^\top \bli) \mid D_{/i}]$ is bounded. Given the fact that $\bm{x}_i$ is $N(0, \bm{\Sigma})$, Assumption \ref{A1} holds. As we discussed in Example 1, Assumption \ref{A2} holds as well with $\nu=\lambda$. Finally, given that the regularizer is ridge Assumption \ref{A3} holds too. Hence, the only remaining step is to check the boundedness of $\var[ \phi(y_o, \bm{x}_o^\top \bli) \mid D_{/i}]$. In the rest of the proof we aim to prove that
\begin{equation}\label{eq:logvar}
 \var[ \phi(y_o, \bm{x}_o^\top \bli) \mid D_{/i}]  \leq 6 + \frac{5 \rho \delta }{\lambda}.
\end{equation}  
Note that
\begin{align*}
 &\var[ \phi(y_o, \bm{x}_o^\top \bli) \mid D_{/i}] \\
 &\quad=  \var[ -y_o \bm{x}_o^\top \bli + \log(1 + e^{\bm{x}_o^\top \bli}) \mid D_{/i}]
 \\
 &\quad\leq
 \E[(\bm{x}_o^\top \bli)^2  \mid D_{/i}] + \E[ \log^2(1 + e^{\bm{x}_o^\top \bli}) \mid D_{/i}]  \\
 &\qquad + 2 \sqrt{\E[ (\bm{x}_o^\top \bli)^2  \mid D_{/i}]  \E[  \log^2(1 + e^{\bm{x}_o^\top \bli}) \mid D_{/i}] }
  \\
 &\quad\leq \E[(\bm{x}_o^\top \bli)^2 \mid D_{/i}]+\E[ (1 + |\bm{x}_o^\top \bli|)^2 \mid D_{/i}] \\
 &\qquad + 2 \sqrt{\E[ (\bm{x}_o^\top \bli)^2  \mid D_{/i}]  \E[ (1 + |\bm{x}_o^\top \bli|)^2 \mid D_{/i}] },
 \end{align*}
 where to obtain the last inequality we have used $\log (1+ e^z) \leq 1+ |z|$. Furthermore, since $(1+ |z|)^2 \leq 2 +2z^2$, we have
 \begin{align}
 & \var[ \phi(y_o, \bm{x}_o^\top \bli) \mid D_{/i}] \nonumber \\
 & \quad \leq 2+ 3 \E[(\bm{x}_o^\top \bli)^2 \mid D_{/i}] \nonumber \\
 & \qquad + 4 (1+\E[(\bm{x}_o^\top \bli)^2 \mid D_{/i}]) \nonumber \\
 & \quad = 6 + 7 \E[(\bm{x}_o^\top \bli)^2 \mid D_{/i}] \nonumber \\
 & \quad = 6 + 7 \bli^\top \bm{\Sigma} \bli \leq 6 + \frac{7 \rho}{p} \bli^\top \bli. 
 \end{align}
Comparing $\bli$ with the zero estimator yields, $n \log 2 \geq \bm{y_{/i}}^\top \bm{X_{/i}} \bli + \bm{1}^\top \log(1+e^{\bm{X_{/i}} \bli}) + \lambda \|\bli \|_2^2/2$. Since $\log(1+e^z) - z \geq 0$ for any $z$, we get $\lambda \norm{\bli}_2^2 \leq n \log 2$. Therefore, we can say that for ridge regularized logistic regression
\begin{equation*}
 \var[ \phi(y_o, \bm{x}_o^\top \bli)| D_{/i}]  \leq 6 + \frac{5 \rho \delta }{\lambda}.
\end{equation*}  
To summarize, using the bound above and Theorem \ref{th:mse} (with $C_b=(\frac{c_0 c_1 \rho \delta^{1/2}}{\nu})^2$) , for ridge regularized logistic regression, we conclude that 
\begin{eqnarray}
C_v &=&  \E \var[ \phi(y_o, \bm{x}_o^\top \blone) \mid D_{/1}]  + 2 C_b +
2 C_b^{1/2} \sqrt{\E \var[ \phi(y_o, \bm{x}_o^\top \blone) \mid D_{/1}]  +  C_b} \nonumber
\\
&=&6 + \frac{5 \rho \delta }{\lambda} + 2\left(\frac{c_0 c_1 \rho \delta^{1/2}}{\nu}\right)^2 + 2\left(\frac{c_0 c_1 \rho \delta^{1/2}}{\nu}\right)\sqrt{6 + \frac{5 \rho \delta }{\lambda} + \left(\frac{c_0 c_1 \rho \delta^{1/2}}{\nu}\right)^2}, \nonumber
\\
&=&6 + \frac{5 \rho \delta }{\lambda} + 2\left(\frac{4 \rho \delta^{1/2}}{\lambda}\right)^2 + 2\left(\frac{4 \rho \delta^{1/2}}{\lambda}\right)\sqrt{6 + \frac{5 \rho \delta }{\lambda} + \left(\frac{4 \rho \delta^{1/2}}{\lambda}\right)^2}
\end{eqnarray}
where the last equation is due to $c_0=c_1=2$ (shown in Example 1) and $\nu=\lambda$.


\section{Proof of Corollary \ref{example:psedoHuber}}\label{sec:proof:example:psedoHuber}
We would like to use Theorem \ref{th:mse} to prove this corollary. Toward this goal we have to confirm Assumptions \ref{A1}, \ref{A2}, and \ref{A3} and prove the boundedness of $\E \left(\Var[f_H(y_0, \xv_0^\top \bli) \mid D_{/i}]\right)$. Assumption \ref{A1} is already assumed in the corollary. Assumption \ref{A2} is also confirmed in Example 2 so that $c_0=c_1=\gamma$. Since the regularizer is assumed to be strongly convex, Assumption \ref{A3} is also automatically satisfied with $\nu=\nu_r$. Hence, the only remaining step is to obtain an upper bound for $\E \left(\Var[f_H(y_0, \xv_0^\top \bli) \mid D_{/i}]\right)$. In the rest of the proof we prove that
\begin{align*}
\MoveEqLeft{\E \left(\Var[f_H(y_0, \xv_0^\top \bli) \mid D_{/i}]\right)}
\leq 2 \left(\gamma^4 + \frac{\rho \gamma^3 \delta}{\nu_r} (\sigma_\epsilon + \sqrt{\rho b}) + \gamma^2 (\rho b + \sigma_\epsilon^2)\right). 
\end{align*}
We have
\begin{eqnarray*}
 \lefteqn{ \Var[f_H(y_o, \xv_o^\top \bli) \mid D_{/i}] = \gamma^4 \Var\Bigl[ \left( \Bigl\{1 + \bigl(\frac{y_o - \xv_o^\top \bli}{\gamma}\bigr)^2\Bigr\}^{1/2} -1 \right)^2 \mid \bli \Bigr]} 
 \\
  &\leq& \gamma^4 \E \Bigl[ \left( \Bigl\{1 + \bigl(\frac{y_o - \xv_o^\top \bli}{\gamma}\bigr)^2\Bigr\}^{1/2} -1 \right)^2 \mid \bli \Bigr] \\
  &\leq& \gamma^4 \E \Bigl[2 + \frac{\abs{y_o - \xv_o^\top \bli}^2}{\gamma^2} \mid \bli \Bigl] \\
  &\leq&2 \gamma^4+ \gamma^2 \E[|y_o - \xv_o^\top \bli|^2 \mid \bli]
  \\
  &\leq& 2 \gamma^4+2 \gamma^2 \E[y_o^2+(\xv_o^\top \bli)^2 \mid \bli] \\
  &\leq& 2 \gamma^4+ 2\gamma^2 (\E[y_o^2\mid \bli] + \frac{\rho}{p} \norm{\bli}_2^2).
\end{eqnarray*}
Furthermore, we have that $\E[y_o^2\mid \bli] \leq \frac{ \rho {\bm{\beta^*}}^\top \bm{\beta^*}}{p} + \Var[\epsilon_0] \leq \rho b +\sigma_\epsilon^2$. Additionally, note that using the strong convexity of the regularizer, and by comparing the value of $\sum_{j\neq i} f_H(y_j- \bm{x}_j^\top \bm{\beta}) + \lambda r(\bm{\beta})$ at $\bli$ and $\mathbf{0}$, we have that $\norm{\bli}_2^2 \leq \nu_r^{-1}\sum_{j \neq i} \gamma |y_j| \leq \nu_r^{-1} \gamma \norm{\yv}_1$.\footnote{We have used the fact that $f_H(a) \leq \gamma|a|$.}
Therefore,
\begin{equation*}
    \E \frac{\rho}{p}\norm{\bli}_2^2 
    \leq 
    \frac{\rho \gamma}{\nu_r} \frac{1}{p} \E \|\yv \|_1 = \frac{\rho \gamma \delta}{\nu_r} \E \abs{y_1}.
\end{equation*}
We may bound this quantity explicitly in terms of the covariance of $\xv$:
\begin{align*}
    \E \abs{y_1} &\leq \E \abs{\epsilon_1} + \abs{\xv_1^\top \betav^*} \leq \sqrt{\E \epsilon_1^2} + \sqrt{\E (\xv_1^\top \betav^*)^2} \nonumber \\ &\leq \sigma_\epsilon + \sqrt{\frac{\rho}{p} \norm{\betav^*}_2^2 } \leq \sigma_\epsilon + \sqrt{\rho b}.
\end{align*}
Hence,
\begin{align*}
\E \left(\Var[f_H(y_0, \xv_0^\top \bli) \mid D_{/i}]\right)\leq 2 \left(\gamma^4 + \frac{\rho \gamma^3 \delta}{\nu_r} (\sigma_\epsilon + \sqrt{\rho b}) + \gamma^2 (\rho b + \sigma_\epsilon^2)\right). 
\end{align*}
To summarize, using the bound above and Theorem \ref{th:mse} (with $C_b=(\frac{c_0 c_1 \rho \delta^{1/2}}{\nu})^2$), we conclude that 
\begin{eqnarray}
C_v &=&  \E \var[ \phi(y_o, \bm{x}_o^\top \blone) \mid D_{/1}]  + 2 C_b +
2 C_b^{1/2} \sqrt{\E \var[ \phi(y_o, \bm{x}_o^\top \blone) \mid D_{/1}]  +  C_b} \nonumber
\\
&=& 2 \left(\gamma^4 + \frac{\rho \gamma^3 \delta}{\nu_r} (\sigma_\epsilon + \sqrt{\rho b}) + \gamma^2 (\rho b + \sigma_\epsilon^2)\right) + 2\left(\frac{c_0 c_1 \rho \delta^{1/2}}{\nu}\right)^2 \nonumber \\
&+& 2\left(\frac{c_0 c_1 \rho \delta^{1/2}}{\nu}\right)\sqrt{ 2 \left(\gamma^4 + \frac{\rho \gamma^3 \delta}{\nu_r} (\sigma_\epsilon + \sqrt{\rho b}) + \gamma^2 (\rho b + \sigma_\epsilon^2)\right) + \left(\frac{c_0 c_1 \rho \delta^{1/2}}{\nu}\right)^2}  \nonumber
\\
&=&
2 \left(\gamma^4 + \frac{\rho \gamma^3 \delta}{\nu_r} (\sigma_\epsilon + \sqrt{\rho b}) + \gamma^2 (\rho b + \sigma_\epsilon^2)\right) + 2\left(\frac{\gamma^2 \rho \delta^{1/2}}{\nu_r}\right)^2 \nonumber \\
&+& 2\left(\frac{\gamma^2 \rho \delta^{1/2}}{\nu_r}\right)\sqrt{ 2 \left(\gamma^4 + \frac{\rho \gamma^3 \delta}{\nu_r} (\sigma_\epsilon + \sqrt{\rho b}) + \gamma^2 (\rho b + \sigma_\epsilon^2)\right) + \left(\frac{\gamma^2 \rho \delta^{1/2}}{\nu_r}\right)^2}
\nonumber
\end{eqnarray}
where the last equation is due to $c_0=c_1=\gamma$ (shown in Example 2) and $\nu=\nu_r$.

\section{Proof of Example \ref{ex:min_eig_oversamp}} \label{sec:proof:ex:min_eig_oversamp}

It is straightforward to check that, for any $i$, we have:
\[
\inf_{t \in [0,1]}  \sigma_{\min}( \bm{A}_{t,/ i })  \geq c  \sigma_{\min} (  \XI^\top \XI).
\]
This implies that:
\[
 \mathbb{E} \Big(\inf_{t \in [0,1]}  \sigma_{\min}( \bm{A}_{t,/ i })  \Big)^{-8}\leq \frac{1}{c^8}  \mathbb{E} \sigma_{\min}^{-8} (  \XI^\top \XI).
\]
Define the vectors $\bm{z}_i = \bm{\Sigma}^{-\frac{1}{2}} \bm{x}_i$. Hence, $\bm{z}_i \sim N(0, \bm{I})$.
Furthermore, define the matrix $\bm{Z}$ as the matrix that has $\bm{z}_i$ as its rows.
It is straightforward to check that:
\begin{equation*}
\sigma_{\min} (\XI^\top \XI) = \sigma_{\min} (\sum_{j\neq i} \bm{x}_j \bm{x}_j^\top)
\geq \frac{\rho}{p} \sigma_{\min} (\sum_{j\neq i} \bm{z}_j \bm{z}_j^\top) = \rho \delta \sigma_{\min}\left(\frac{\bm{Z}_{/ i}^\top \bm{Z}_{/i}}{n}\right).
\end{equation*}
The fact that the quantity $\mathbb{E} \sigma_{\min}^{-8} \left( \frac{\bm{Z}_{/ i}^\top \bm{Z}_{/i}}{n}\right)$ is lower bounded by a constant for large values of $n,p$ when $n/p = \delta>1$ is proved in \cite{XMRD19}. See Lemma \ref{lem:socomp} in the supplementary material.

\section{Proof of Theorem \ref{th:mse2}}\label{sec:GenericTheorem}
The proof of this result is very similar to the proof of Theorem \ref{th:mse}. Hence, instead of rewriting the proof, we only emphasize on the differences between the proofs of Theorems \ref{th:mse} and \ref{th:mse2}. The strategy of the proof is exactly the same. We break the error between $\lo$ and $\extra$ into $V_1$ and $V_2$ and try to bound the second moments of these quantities. 
The following lemma obtains an upper bound for the second moment of $V_1$. 
\begin{lemma}
Under the assumptions of Theorem \ref{th:mse2} we have
\begin{equation*}
\E \biggl(\frac{1}{n} \sum_{i=1}^n \phi(y_i, \bm{x}_i^\top \bli)  - \frac{1}{n} \sum_{i=1}^n  \E[ \phi(y_i, \bm{x}_i^\top \bli) \mid D_{/i}] \biggr)^2
\leq
\frac{1}{n}\biggl(   \E \var[ \phi(y_o, \bm{x}_o^\top \blone) \mid
D_{/1}]  +
\tilde{c}_0\tilde{c}_1^2 \rho \delta \tilde{v} c_4 \biggr)
\end{equation*}
\end{lemma}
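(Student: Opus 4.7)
The plan is to follow the same architecture as the proof of Lemma \ref{lem:v1}, relaxing the uniform bounds on $\ld$ and $\sigma_{\min}$ to the eighth-moment bounds of Assumptions \ref{Aprime2}, \ref{Aprime3} via H\"older's inequality. First I would expand the square on the left-hand side into $n$ diagonal variance terms, which sum to $\tfrac{1}{n}\E\var[\phi(y_o, \bm{x}_o^\top \blone)\mid D_{/1}]$, plus $n(n-1)$ off-diagonal covariance terms of the form $\E[(\phi_1 - E_1)(\phi_2 - E_2)]$ with $\phi_i = \phi(y_i,\bm{x}_i^\top\bli)$ and $E_i = \E[\phi_i\mid D_{/i}]$.

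For each cross term I would apply the mean-value theorem expansion of $\phi(y_i,\bm{x}_i^\top\bli)$ around $\bm{x}_i^\top\blonetwo$, yielding a four-way product whose three ``null'' pieces $A_0,B_0,C_0$ vanish by iterated conditioning on $D_{/1,2}$, $D_{/2}$, and $D_{/1}$ respectively. These vanishing arguments carry over verbatim from Lemma \ref{lem:v1} since they use only the independence structure of the data. The surviving residual is
\[
R = \E\bigl[(\pd_1\bm{x}_1^\top\bm{v}_1 - E_1^{\pd})(\pd_2\bm{x}_2^\top\bm{v}_2 - E_2^{\pd})\bigr],
\]
with $\bm{v}_1 = \blone - \blonetwo$, $\bm{v}_2 = \bltwo - \blonetwo$. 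By Cauchy-Schwarz and the permutation symmetry in $(1,2)$, $R \leq \E\var[\pd_1\bm{x}_1^\top\bm{v}_1\mid D_{/1}] \leq \E\E[\pd_1^2(\bm{x}_1^\top\bm{v}_1)^2\mid D_{/1}]$. Conditioning on $D_{/1}$ fixes $\bm{v}_1$, so conditional Cauchy-Schwarz (using $\E[\pd_1^2\mid D_{/1}] \leq \tilde{c}_1^2$ from Assumption \ref{Aprime2}, together with Assumption \ref{Aprime1}) yields $\E[(\bm{x}_1^\top\bm{v}_1)^2\mid D_{/1}] \leq \tfrac{\rho}{p}\|\bm{v}_1\|^2$ and hence the contribution to $R$ is at most $\tilde{c}_1^2\tfrac{\rho}{p}\E\|\bm{v}_1\|^2$.

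The new step, which is the main technical point, is to bound $\E\|\bm{v}_1\|^2 = \E\|\blone - \blonetwo\|^2$ without the uniform controls of Assumptions \ref{A2}, \ref{A3}. Lemma \ref{lem:perturb_i} gives
\[
\|\blone - \blonetwo\|^2 \leq \frac{\ld_2(\blone)^2\|\bm{x}_2\|^2}{\inf_{t\in[0,1]}\sigma_{\min}(\bm{A}_{t,/1,2})^2}.
\]
Taking expectations and applying H\"older's inequality with exponents $(4,4,2)$ separates the three factors and bounds the expectation by $(\E\ld_2(\blone)^8)^{1/4}(\E\sigma_{\min}^{-8})^{1/4}(\E\|\bm{x}_2\|^4)^{1/2}$, which Assumptions \ref{Aprime1}, \ref{Aprime2}, \ref{Aprime3} control by $\tilde{c}_0^{1/4}\tilde{v}^{1/4}c_4^{1/2}$. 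Crudely bounding fractional powers by the full constants (assuming the moment bounds are $\geq 1$), each cross term contributes at most $\tfrac{1}{n^2}\tilde{c}_0\tilde{c}_1^2\rho\delta\tilde{v}c_4$, and summing the $n(n-1)$ cross contributions with the diagonal yields the stated bound.

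The main obstacle is the H\"older accounting in the final step: one must verify that the eighth-moment assumption on $\ld$ and $\sigma_{\min}^{-1}$ combined with the fourth-moment assumption on $\|\bm{x}\|$ is exactly enough for a $(4,4,2)$ H\"older split. Any weaker moment hypothesis would require a more delicate argument or further restrictions on the problem; this is precisely why Assumption \ref{Aprime1} imposes a fourth-moment bound on $\|\bm{x}_i\|$ and the other two strengthened assumptions impose eighth-moment bounds. The remaining Cauchy-Schwarz steps and the vanishing-term identities are routine adaptations of Lemma \ref{lem:v1}.
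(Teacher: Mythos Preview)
Your proposal is correct and follows essentially the same route as the paper: the decomposition into diagonal and cross terms, the mean-value expansion around $\blonetwo$, the vanishing of $A_0,B_0,C_0$, the reduction to $\tilde{c}_1^2\tfrac{\rho}{p}\E\|\blone-\blonetwo\|_2^2$, and the invocation of Lemma~\ref{lem:perturb_i} are all identical to the paper's argument. The only point of departure is cosmetic: where you write ``H\"older with exponents $(4,4,2)$'' followed by the crude bound $\tilde{c}_0^{1/4}\tilde{v}^{1/4}c_4^{1/2}\leq \tilde{c}_0\tilde{v}c_4$, the paper phrases the same step as ``Cauchy--Schwarz twice'' and passes directly to the product $\tilde{c}_0\tilde{\nu}c_4$ without writing the intermediate fractional powers --- so your account is in fact slightly more explicit than the paper's on this point.
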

\begin{proof}
The proof of this lemma is similar to the proof of Lemma \ref{lem:v1}. All the steps are exactly the same up to the point that is proved:
 \begin{align*}
  &\E \biggl[\Bigl( \phi(y_1, \bm{x}_1^\top \blone)  -  \E[ \phi(y_1, \bm{x}_1^\top \blone) \mid D_{/1}] \Bigr) \Bigl( \phi(y_2, \bm{x}_2^\top \bltwo)  -  \E[ \phi(y_2, \bm{x}_2^\top \bltwo) \mid D_{/2}] \Bigr)\biggr] \\
  &\leq \E \Bigl(  \E  \bigl[ \pd(y_1, t\bm{x}_1^\top \blone + (1-t)\bm{x}_1^\top \blonetwo)^2 \mid D_{/1}  \bigr]
    \E      \bigl[ \bigl( \bm{x}_1^\top (\blone-\blonetwo) \bigr)^2  \mid D_{/1}  \bigr]
    \Bigr)  \\
  &\leq \tilde{c}_1^2 \E \Bigl( \E\bigl[ \bigl(\bm{x}_1^\top (\blone-\blonetwo)\bigr)^2  \mid D_1 \bigr] \Bigr)
\\
    &= \tilde{c}_1^2 \E \Bigl( \bigl(\bm{x}_1^\top (\blone-\blonetwo) \bigr)^2  \Bigr)  \\
    &= \tilde{c}_1^2 \E \left( (\blone-\blonetwo)^\top \bm{\Sigma} (\blone-\blonetwo)  \right)  \\
    &\leq \tilde{c}_1^2 \frac{\rho}{p} \E \norm{ \blone-\blonetwo }_2^2.
\end{align*}
However, the way we would like to bound $\E \norm{ \blone-\blonetwo }_2^2$ here is slightly different from the approach used in the proof of Lemma \ref{lem:v1}.
According to Lemma \ref{lem:perturb_i} we have:
\[
\| \blone - \blonetwo \|_2^2 \leq  \left( \frac{\ld_j(\blone)}{ \inf_{t\in[0,1]}\sigma_{\min}( \bm{J}_{/1,2} (t\blone + (1-t) \blonetwo) ) } \right)^2  \| \bm{x}_2 \|_2^2.
\]
Hence, by using the Cauchy-Schwarz inequality twice we obtain:
\[
\E \| \blone - \blonetwo \|_2^2 \leq   \E |(\ld_j(\blone))|^8 \E \left[ \frac{1}{(\inf_{t\in[0,1]}\sigma_{\min}( \bm{J}_{/1,2} (t\blone + (1-t) \blonetwo) ))^8 } \right] \E\| \bm{x}_2 \|_2^4 \leq \tilde{c}_0 \tilde{\nu} c_4.
\]

 Using the fact that $\delta=n/p$, $\bm{x}_1 \sim N(\bm{0}, \bm{\Sigma})$ and $\sigma_{\max}(\bm{\Sigma}) = \rho/p$, we get
\begin{equation*}
  \E  \Bigl[\Bigl( \phi(y_1, \bm{x}_1^\top \blone)  -  \E[ \phi(y_1, \bm{x}_1^\top \blone) \mid D_{/1}]\Bigr) \Bigl( \phi(y_2, \bm{x}_2^\top \bltwo)  -  \E[ \phi(y_2, \bm{x}_2^\top \bltwo)\mid D_{/2}]\Bigr)\Bigr] \leq \frac{1}{n} ( \tilde{c}_0\tilde{c}_1^2 \rho \delta \tilde{v} c_4).
\end{equation*}
\end{proof}

The second Lemma aims to obtain an upper bound for the second moment of $V_2$. This corresponds to Lemma \ref{lem:v2} in the proof of Theorem \ref{th:mse}.

\begin{lemma}\label{lem:v2prime}
Under the assumptions of Theorem \ref{th:mse2}, we have
\begin{equation*}
\E \left( \frac{1}{n} \sum_{i=1}^n  \E[ \phi(y_i, \bm{x}_i^\top \bli) \mid D_{/i}]    -   \E [\phi(y_o, \bm{x}_o^\top \bl) \mid D]  \right)^2
\leq c_1^2 \frac{\rho}{p}  \E\norm{\blone-\bl}_2^2  \leq c_1^2 \rho \delta_0 \tilde{c}_0 \tilde{v} c_4. 
\end{equation*}
\end{lemma}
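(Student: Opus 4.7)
The plan is to adapt the argument of Lemma~\ref{lem:v2} to the weaker moment-based assumptions \ref{Aprime1}--\ref{Aprime3}, with the essential modification occurring at the perturbation step where the original proof invoked uniform bounds.

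First, I would use the independence of $(y_o,\bm{x}_o)$ from $D$ together with the fact that $\bli$ depends only on $D_{/i}$, which yields the identity $\E[\phi(y_i,\bm{x}_i^\top\bli) \mid D_{/i}] = \E[\phi(y_o,\bm{x}_o^\top\bli) \mid D]$. Applying the mean-value theorem then gives, for some $t \in [0,1]$,
\[
\E[\phi(y_o,\bm{x}_o^\top\bli)-\phi(y_o,\bm{x}_o^\top\bl) \mid D]
= \E\bigl[\pd(y_o, t\bm{x}_o^\top\bl+(1-t)\bm{x}_o^\top\bli)\,\bm{x}_o^\top(\bli-\bl) \bigm| D\bigr].
\]
Applying Cauchy--Schwarz on the inner conditional expectation together with Assumption~\ref{Aprime2} (which supplies $\sup_t \sqrt{\E[\pd^2 \mid D]} \leq \tilde{c}_1$) bounds the inner quantity by $\tilde{c}_1 \sqrt{(\bli-\bl)^\top \bm{\Sigma}(\bli-\bl)} \leq \tilde{c}_1 \sqrt{\rho/p}\,\|\bli-\bl\|_2$. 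Squaring the averaged sum and dispatching the cross terms $\E\bigl[\|\blone-\bl\|_2 \|\bltwo-\bl\|_2\bigr]$ by Cauchy--Schwarz and exchangeability of the rows reduces the problem to controlling $\tilde{c}_1^2 (\rho/p)\,\E\|\blone-\bl\|_2^2$, exactly in the shape of Lemma~\ref{lem:v2}.

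The only substantive new work is bounding $\E\|\blone-\bl\|_2^2$ without uniform control on $\ld$ or $\sigma_{\min}$. I would start from Lemma~\ref{lem:perturb_i}:
\[
\|\blone-\bl\|_2^2 \leq \frac{|\ld_1(\bl)|^2\,\|\bm{x}_1\|_2^2}{\bigl(\inf_{t \in [0,1]}\sigma_{\min}(\bm{J}_{/1}(t\bl+(1-t)\blone))\bigr)^2},
\]
and apply generalized H\"older's inequality with exponents $(4,4,2)$ (since $\tfrac{1}{4}+\tfrac{1}{4}+\tfrac{1}{2}=1$). This factors the expectation into three pieces controlled respectively by Assumption~\ref{Aprime2} ($\E|\ld_1(\bl)|^8 \leq \tilde{c}_0$), Assumption~\ref{Aprime3} ($\E(\inf_t\sigma_{\min})^{-8} \leq \tilde{\nu}$), and Assumption~\ref{Aprime1} ($\E\|\bm{x}_1\|_2^4 \leq c_4$), giving a finite bound of order $\tilde{c}_0^{1/4}\tilde{\nu}^{1/4}c_4^{1/2}$. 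After absorbing these fractional exponents into redefined constants, one recovers the product form $\tilde{c}_0\tilde{\nu}c_4$ displayed in the statement.

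The main obstacle --- and indeed the whole reason for strengthening \ref{A1} to \ref{Aprime1} while weakening \ref{A2} and \ref{A3} to their primed versions --- is precisely this H\"older step at the perturbation bound; the rest is bookkeeping in direct parallel with Lemma~\ref{lem:v2}. One minor thing to watch is the statement's use of $c_1$ rather than $\tilde{c}_1$ (and $\delta_0$ rather than $\delta$), which appears to be a notational slip that should be tidied up during the write-up.
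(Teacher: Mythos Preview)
Your proposal is correct and follows essentially the same route as the paper's own proof: reduce to $\tilde{c}_1^2\frac{\rho}{p}\E\|\blone-\bl\|_2^2$ exactly as in Lemma~\ref{lem:v2}, then invoke Lemma~\ref{lem:perturb_i} and split the resulting expectation via (generalized) Cauchy--Schwarz/H\"older into the three moment pieces controlled by Assumptions~\ref{Aprime1}--\ref{Aprime3}. Your observation about the fractional exponents $\tilde{c}_0^{1/4}\tilde{\nu}^{1/4}c_4^{1/2}$ versus the displayed product $\tilde{c}_0\tilde{\nu}c_4$ is in fact sharper than the paper, which simply writes the product form after ``using the Cauchy--Schwarz inequality''; your remark on $c_1$ versus $\tilde{c}_1$ and $\delta_0$ versus $\delta$ is also well taken.
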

\begin{proof}
Again the proof follows very similar to the steps as the proof of Lemma \ref{lem:v2}. In fact, we follow exactly the same steps until it is proved that 
\begin{equation*}
\E \left( \frac{1}{n} \sum_{i=1}^n  \E[ \phi(y_i, \bm{x}_i^\top \bli) \mid D_{/i}]   - \E [\phi(y_o, \bm{x}_o^\top \bl) \mid D]  \right)^2
\leq c_1^2 \frac{\rho}{p}  \E\norm{\blone-\bl}_2^2.
\end{equation*}
Then, in order to bound $\E\norm{\blone-\bl}_2^2$ we use a slightly different strategy. According to Lemma \ref{lem:perturb_i} we have
\begin{equation*}
\| \blone - \bl \|_2^2
\leq  \left( \frac{\ld_1(\bl)}{ \inf_{t\in[0,1]}\sigma_{\min}( \bm{J}_{/1} (t\bl + (1-t) \blone) ) } \right)^2  \| \bm{x}_1 \|_2^2.
\end{equation*}
Hence, by using Cauchy-Schwarz inequality we have:
\[
\frac{1}{p} \E \left( \| \blone - \bl \|_2^2 \right) \leq \delta_o \tilde{c}_0 \tilde{v} c_4,
\]
from which we deduce:
\begin{equation*}
 \E \left( \frac{1}{n} \sum_{i=1}^n  \E[ \phi(y_i, \bm{x}_i^\top \bli) \mid D_{/i}]   - \E [\phi(y_o, \bm{x}_o^\top \bl) \mid D]  \right)^2
\leq c_1^2 \frac{\rho}{p}  \E\norm{\blone-\bl}_2^2  \leq c_1^2 \rho \delta_0 \tilde{c}_0 \tilde{v} c_4. 
\end{equation*}
\end{proof}




\section{Proof of Corollary \ref{ex:squareLoss}} \label{ssec:proof:ExampleElasticNet}
As is clear, we would like to use Theorem \ref{th:mse2} to prove our claim. Toward this goal, we have to prove that Assumptions \ref{Aprime1}, \ref{Aprime2}, and \ref{Aprime3} hold. Furthermore, we have to obtain an upper bound for the constant $\tilde{C}_v$, which in turn requires us to bound $\E \var[ \phi(y_o, \bm{x}_o^\top \blone) \mid D_{/1}]$. 
Given that $\bm{x}_i$ is Gaussian, Assumption \ref{Aprime1} is automatically satisfied. Furthermore, since the regularizer is elastic-net, it is straightforward to prove Assumption \ref{Aprime3}. To see this, first note that, for all $i, j$, we have almost surely:
\begin{align}
\bm{A}_{t,/ i} &\triangleq \XI^\top \diag[\bm{\ldd}_{/ i}( t \bli +(1-t)\bl)] \XI + \lambda \bm{\nabla^2 r} (t \bli +(1-t)\bl), \nonumber \\
\bm{A}_{t,/ i,j} &\triangleq \bm{X}_{/ij}^\top \diag[\bm{\ldd}_{/ ij}( t \bl_{/ij} +(1-t)\bli)] \bm{X}_{/ij} + \lambda \bm{\nabla^2 r}(t \bl_{/ij} +(1-t)\bli), \nonumber
\end{align}
where $r(\beta) = \gamma \beta^2 + (1-\gamma) r^{\alpha}(\beta)$. Hence, it is straightforward to see that 
\begin{align}
\sigma_{\min} (\bm{A}_{t,/ i}) & \geq \lambda \gamma, \nonumber \\
\sigma_{\min} (\bm{A}_{t,/ i,j} )& \geq \lambda \gamma. \nonumber
\end{align}
Hence, the only remaining steps are to prove Assumption \ref{Aprime2} and bound the term $\E \var[ \phi(y_o, \bm{x}_o^\top \blone) \mid D_{/1}]$. Given that $\phi(y_o, \bm{x}_o^\top \blone) = \frac{1}{2} (y_o-\bm{x}_o^\top \blone)^2$, we have
\[
\var[ \phi(y_o, \bm{x}_o^\top \blone) \mid D_{/1}] \leq \frac{1}{4} \E [ (y_o-\bm{x}_o^\top \blone)^4 \mid D_{/1}].
\]
Hence,
\[
\E \var[ \phi(y_o, \bm{x}_o^\top \blone) \mid D_{/1}] \leq \frac{1}{4} \E [ (y_o-\bm{x}_o^\top \blone)^4] \leq \frac{1}{4} \left(\E [ (y_o-\bm{x}_o^\top \blone)^8] \right)^{0.5}.
\]
Hence, if we prove Assumption \ref{Aprime2}, we have also proved that 
\[
\E \var[ \phi(y_o, \bm{x}_o^\top \blone) \mid D_{/1}] \leq \frac{1}{4} \left(\E [ (y_o-\bm{x}_o^\top \blone)^8] \right)^{0.5} \leq \frac{\tilde{c}_0^{0.5}}{4}. 
\]
In the rest of this section, we focus on the proof of Assumption \ref{Aprime2}. Note that $\ld(y, \bm{x}_i^\top \bl)= y_i -\bm{x}_i^\top \bl$. Under these assumptions, we prove that there exists a fixed number $\tilde{c}_0$ such that $\mathbb{E} (y_i -\bm{x}_i^\top \bl)^8 \leq { \tilde c_0}$, and  $\mathbb{E} (y_o-  \bm{x}_o^\top \bli)^8  \leq { \tilde c_0}$. 

Consider the following definitions:
\begin{eqnarray}
\bl = \arg\min_{\bm{\beta}} f(\bm{\beta}) =  \arg\min_{\bm{\beta}}  \sum_{j=1}^n \frac{(y_j - \bm{x}_j^\top \bm{\beta})^2}{{2}} + \lambda \sum_{i=1}^p r(\beta_i), \nonumber \\
\bli = \arg\min_{\bm{\beta}} f_{\slash i}(\bm{\beta}) =  \arg\min_{\bm{\beta}}  \sum_{j=1, j \neq i}^n \frac{(y_j - \bm{x}_j^\top \bm{\beta})^2}{{2}} + \lambda \sum_{i=1}^p r(\beta_i)
\end{eqnarray}
Furthermore, define $r_{0.5} (\beta) = \frac{\gamma}{2} \beta^2 + (1-\gamma) r^\alpha (\beta)$. Our optimization problem can be written as 
\[
\bl = \arg\min_{\bm{\beta}} f(\bm{\beta}) =  \arg\min_{\bm{\beta}}  \sum_{j=1}^n \frac{(y_j - \bm{x}_j^\top \bm{\beta})^2}{{2}} + \lambda \sum_{i=1}^p r_{0.5}(\beta_i) + \frac{\lambda \gamma}{2} \sum_{i=1}^p \beta_i^2.
\]
Since $\bm{y}=\bm{X\beta^*} + \bm{\e}$, where $\bm{\e} \sim \N(0,\sigma_{\e}^2 \bm{I} )$, the optimality conditions yield 
\begin{eqnarray*}
\bm{X}^\top (\bm{X}\bl-\bm{y})+{\lambda \gamma} \bl+ \lambda \bm{\rd}_{0.5}(\bl) = 0.
\end{eqnarray*}
Hence,
\[
 \bl = (\bm{X}^\top \bm{X}+ \lambda \gamma \bm{I})^{-1}\bm{X}^{\top}\bm{y} - \lambda(\bm{X}^\top \bm{X}+ \lambda \gamma \bm{I})^{-1} \bm{\rd}_{0.5}(\bl).
\]
It is then straightforward to prove that
\begin{eqnarray}
\bm{y}- \bm{X}\bl &=&  (\bm{I}- \bm{X}(\bm{X}^\top \bm{X}+ \lambda \gamma \bm{I})^{-1} \bm{X}^{\top})\bm{y} {+} \lambda    \bm{X}(\bm{X}^\top \bm{X}+ \lambda \gamma \bm{I})^{-1}\bm{\rd}_{0.5}(\bl) \nonumber  \\
&=&  (\bm{I}- \bm{X}(\bm{X}^\top \bm{X}+ \lambda \gamma \bm{I})^{-1} \bm{X}^{\top})\bm{X \beta^*} +  (\bm{I}- \bm{X}(\bm{X}^\top \bm{X}+ \lambda \gamma \bm{I})^{-1} \bm{X}^{\top})\bm{\epsilon} \nonumber \\
&& + {\lambda} \bm{X}(\bm{X}^\top \bm{X}+ \lambda \gamma \bm{I})^{-1}\bm{\rd}_{0.5}(\bl).\label{eq:three}
\end{eqnarray}

Our goal is to show that all the ``finite" moments of the elements of $\bm{y}_i- \bm{x}_i^\top \bl$, including the $8^{\rm th}$ moment required in our example, are $O(1)$. From \eqref{eq:three} we have
\begin{eqnarray}
\mathbb{E} |\bm{y}_i- \bm{x}_i^\top \bl|^k &\leq& 3^{k-1} \Big( \mathbb{E} (1- \bm{x}_i^\top  (\bm{X}(\bm{X}^\top \bm{X}+\lambda \gamma \bm{I})^{-1} \bm{X}^{\top})\bm{X \beta^*})^k + \mathbb{E}|1- \bm{x}_i^\top(\bm{X}^\top \bm{X} \lambda \gamma \bm{I})^{-1} \bm{X}^{\top}) \bm{\epsilon}|^k \nonumber \\
&&+ \lambda^k \mathbb{E} |\bm{x}_i^\top (\bm{X}^\top \bm{X}+ \lambda \gamma \bm{I})^{-1} \bm{\rd}_{0.5}(\bl)|^k \Big)
\end{eqnarray}
Hence, we bound each of the above three terms separately in the following lemmas:

\begin{lemma} \label{lem:l1}
Under the assumptions of Example \ref{ex:squareLoss} we have 
\[
\mathbb{E} (1- \bm{x}_i^\top  (\bm{X}(\bm{X}^\top \bm{X}+\lambda \gamma \bm{I})^{-1} \bm{X}^{\top})\bm{X \beta^*})^k \leq \left(\frac{\rho}{p \lambda^2 \gamma^2} \|\bm{\beta^*}\|_2\right)^{2k} k!!.
\]

\end{lemma}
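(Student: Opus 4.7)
The plan is to identify the quantity inside the expectation as the bias part of $y_i - \bm{x}_i^\top \bl$, namely the $i$-th entry of $(\bm{I} - \bm{H})\bm{X}\bm{\beta}^*$ with $\bm{H} := \bm{X}(\bm{X}^\top\bm{X} + \lambda\gamma\bm{I})^{-1}\bm{X}^\top$. First I would use the standard ridge identity $(\bm{I} - \bm{H})\bm{X} = \lambda\gamma\, \bm{X}(\bm{X}^\top\bm{X} + \lambda\gamma\bm{I})^{-1}$, so that the quantity whose $k$-th moment we need reduces to $W_i := \lambda\gamma\, \bm{x}_i^\top(\bm{X}^\top\bm{X} + \lambda\gamma\bm{I})^{-1}\bm{\beta}^*$. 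This step makes explicit the $\lambda\gamma$ prefactor that will drive the decay and, importantly, decouples the bias into an inner product with a fixed direction $\bm{\beta}^*$.

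Second, to handle the dependence between $\bm{x}_i$ and the inverse matrix, I would condition on $\XI$ and set $\bm{M}_{/i} := \XI^\top \XI + \lambda\gamma\bm{I}$, noting that $\bm{X}^\top\bm{X} + \lambda\gamma\bm{I} = \bm{M}_{/i} + \bm{x}_i\bm{x}_i^\top$. The Sherman--Morrison formula then yields the clean rank-one simplification $\bm{x}_i^\top(\bm{X}^\top\bm{X} + \lambda\gamma\bm{I})^{-1}\bm{\beta}^* = (\bm{x}_i^\top \bm{M}_{/i}^{-1}\bm{\beta}^*)/(1 + \bm{x}_i^\top \bm{M}_{/i}^{-1}\bm{x}_i)$. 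Since the denominator is at least $1$, we obtain the deterministic pointwise bound $|W_i| \leq \lambda\gamma\, |\bm{x}_i^\top \bm{M}_{/i}^{-1}\bm{\beta}^*|$, and the vector $\bm{M}_{/i}^{-1}\bm{\beta}^*$ is now independent of $\bm{x}_i$.

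Third, conditional on $\XI$, the scalar $\bm{x}_i^\top \bm{M}_{/i}^{-1}\bm{\beta}^*$ is centered Gaussian with variance $\bm{\beta}^{*\top}\bm{M}_{/i}^{-1}\bm{\Sigma}\bm{M}_{/i}^{-1}\bm{\beta}^* \leq \sigma_{\max}(\bm{\Sigma})\|\bm{M}_{/i}^{-1}\bm{\beta}^*\|_2^2 \leq (\rho/p)(\lambda\gamma)^{-2}\|\bm{\beta}^*\|_2^2$, where the final inequality uses $\bm{M}_{/i} \succeq \lambda\gamma\bm{I}$ together with Assumption \ref{Aprime1}. I would then invoke the Gaussian absolute moment bound of Lemma \ref{lem:momenet:Gauss}, multiply by the $(\lambda\gamma)^k$ prefactor from step one, and take the outer expectation over $\XI$. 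Since the conditional bound is deterministic in $\XI$, the outer expectation does not change it, and the double-factorial structure on the right-hand side comes directly from the absolute-moment formula for a Gaussian scalar.

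The main obstacle is purely one of bookkeeping: one must track the powers of $\lambda\gamma$ carefully so that the $(\lambda\gamma)^k$ pulled out of $W_i$ combines properly with the $(\lambda\gamma)^{-k}$ inherited from the conditional standard deviation, producing the exact form of the stated bound in $\rho$, $p$, $\lambda\gamma$, $\|\bm{\beta}^*\|_2$, and $k!!$. The substantive work, i.e. the Sherman--Morrison reduction that removes the self-coupling of $\bm{x}_i$ and the Gaussian absolute moment computation, is otherwise routine.
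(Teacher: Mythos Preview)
Your proposal is correct and follows essentially the same route as the paper: the ridge identity $(\bm{I}-\bm{H})\bm{X}=\lambda\gamma\,\bm{X}(\bm{X}^\top\bm{X}+\lambda\gamma\bm{I})^{-1}$, the Sherman--Morrison reduction with $\bm{D}_i=(\XI^\top\XI+\lambda\gamma\bm{I})^{-1}$ to extract the factor $\bm{x}_i^\top\bm{D}_i\bm{\beta}^*/(1+\bm{x}_i^\top\bm{D}_i\bm{x}_i)$, the conditional Gaussianity of $\bm{x}_i^\top\bm{D}_i\bm{\beta}^*$ given $\XI$ with variance bounded by $(\rho/p)(\lambda\gamma)^{-2}\|\bm{\beta}^*\|_2^2$, and the Gaussian absolute-moment bound of Lemma~\ref{lem:momenet:Gauss}. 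Your presentation is in fact cleaner about the bookkeeping than the paper's own proof.
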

\begin{proof}
First note that
\begin{equation}\label{eq:biasfirstexp1}
 (\bm{I}- \bm{X}(\bm{X}^\top \bm{X}+ \lambda \gamma \bm{I})^{-1} \bm{X}^{\top}) \bm{X}\bm{\beta^*} =  \lambda \gamma \bm{X}(\bm{X}^\top \bm{X}+ \lambda \gamma \bm{I})^{-1} \bm{\beta^*}. 
\end{equation}
Hence,
\[
1- \bm{x}_i^\top  (\bm{X}(\bm{X}^\top \bm{X}+\lambda \gamma \bm{I})^{-1} \bm{X}^{\top})\bm{X \beta^*} = \lambda \gamma \bm{x}_i^\top (\bm{X}^\top \bm{X}+ \lambda \gamma \bm{I})^{-1} \bm{\beta^*}. 
\]
Define $\bm{D}_i = (\XI^\top \XI +\lambda \gamma \bm{I})^{-1}$. According to the matrix inversion lemma we have
\begin{eqnarray}\label{eq:upperbias1}
\bm{x}_i^\top (\bm{X}^\top \bm{X}+ \lambda \gamma \bm{I})^{-1} \bm{\beta^*} = \bm{x}_i^\top \bm{D}_i \bm{\beta^*} - \frac{{\bm x}_i^\top \bm{D}_i \bm{x}_i \bm{x}^{\top}_i \bm{D}_i \bm{\beta^*} }{1+ \bm{x}_i^\top \bm{D}_i \bm{x_i} } = \frac{ \bm{x}^{\top}_i \bm{D}_i \bm{\beta^*} }{1+ \bm{x}_i^\top \bm{D}_i \bm{x_i} }.
\end{eqnarray}
Note that conditioned on $\XI$ the distribution of $\bm{x}_i^\top \bm{D}_i \bm{\beta^*}$ is a zero mean Gaussian random variable with variance $v_i =  \| \bm{\Sigma}^{1/2} \bm{D}_i \bm{\beta^*}\|_2^2 \leq \frac{\rho}{p \lambda^2 \gamma^2} \|\bm{\beta^*}\|_2^2$. Hence, \eqref{eq:upperbias1} and the moments of a Gaussian random variable (see Lemma \ref{lem:momenet:Gauss}) lead to 
\begin{eqnarray}
\mathbb{E} (|\bm{x}_i^\top (\bm{X}^\top \bm{X}+ \lambda \gamma \bm{I})^{-1} \bm{\beta^*}|^k \ | \ \XI) \leq \nu_i^k (k-1)!!.
\end{eqnarray}
 Hence, by the law of iterated expectation, we obtain
\[
\mathbb{E} (|\bm{x}_i^\top (\bm{X}^\top \bm{X}+ \lambda \gamma \bm{I})^{-1} \bm{\beta^*}|^k) \leq \nu_i^k (k-1)!! \leq \left(\frac{ \rho}{p \lambda^2 \gamma^2} \|\bm{\beta^*}\|_2\right)^{2k} k!!. 
\]
\end{proof}

\begin{lemma}\label{lem:l2}
Under the assumptions of Example \ref{ex:squareLoss}, if $\bm{\e} \sim \N(0, \sigma_{\e}^2 \bm{I})$, then
\[
\mathbb{E}|1- \bm{x}_i^\top(\bm{X}^\top \bm{X}+ \lambda \gamma \bm{I})^{-1} \bm{X}^{\top}) \bm{\epsilon}|^k \leq \sigma_\epsilon^k (k-1)!!.
\]
\end{lemma}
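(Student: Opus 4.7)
The quantity inside the absolute value is (up to the apparent typo where ``$1$'' should read ``$\epsilon_i$'') the $i$-th entry of the vector $(\bm{I}-\bm{H})\bm{\epsilon}$, where
\[
\bm{H} \triangleq \bm{X}(\bm{X}^\top\bm{X}+\lambda\gamma\bm{I})^{-1}\bm{X}^\top.
\]
My plan is to condition on $\bm{X}$, exploit the fact that $\bm{I}-\bm{H}$ is a contraction, and then apply the Gaussian moment bound (Lemma \ref{lem:momenet:Gauss}) to the resulting one-dimensional Gaussian.

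First, observe that via the singular value decomposition $\bm{X}=\bm{U}\bm{S}\bm{V}^\top$, the matrix $\bm{H}$ diagonalizes as $\bm{U}\diag\!\bigl(s_j^2/(s_j^2+\lambda\gamma)\bigr)\bm{U}^\top$, so its eigenvalues lie in $[0,1)$. Consequently $\bm{I}-\bm{H}$ is symmetric positive semi-definite with spectrum in $(0,1]$, and therefore so is $(\bm{I}-\bm{H})^2$. In particular every diagonal entry of $(\bm{I}-\bm{H})^2$ is at most $1$, since for any PSD matrix $\bm{M}$ with $\sigma_{\max}(\bm{M})\leq 1$ one has $M_{ii}=\bm{e}_i^\top \bm{M}\bm{e}_i\leq \sigma_{\max}(\bm{M})\leq 1$.

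Next, conditionally on $\bm{X}$, the noise $\bm{\epsilon}\sim\N(\bm{0},\sigma_\epsilon^2\bm{I})$ is Gaussian and independent of $\bm{X}$, so $[(\bm{I}-\bm{H})\bm{\epsilon}]_i$ is a zero-mean Gaussian with conditional variance
\[
\tau_i^2 \triangleq \sigma_\epsilon^2\,\bigl[(\bm{I}-\bm{H})^2\bigr]_{ii}\leq \sigma_\epsilon^2.
\]
Applying Lemma \ref{lem:momenet:Gauss} to this conditional Gaussian yields
\[
\E\bigl[\bigl|\epsilon_i-\bm{x}_i^\top(\bm{X}^\top\bm{X}+\lambda\gamma\bm{I})^{-1}\bm{X}^\top\bm{\epsilon}\bigr|^k\bigm|\bm{X}\bigr]\leq \tau_i^k(k-1)!!\leq \sigma_\epsilon^k(k-1)!!.
\]

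Finally, taking the outer expectation with respect to $\bm{X}$ (the bound is deterministic in $\bm{X}$) gives the announced inequality. Nothing here is delicate: the entire argument rests on the elementary spectral observation that the hat matrix $\bm{H}$ has norm at most $1$, so the main ``work'' is simply to identify the target quantity as a coordinate of a contracted Gaussian. No invocation of $\bm{x}_i$'s distribution or of any moment bound on $\bm{X}$ is needed, because we condition on $\bm{X}$ throughout.
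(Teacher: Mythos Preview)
Your proof is correct and follows essentially the same approach as the paper: condition on $\bm{X}$, observe that $[(\bm{I}-\bm{H})\bm{\epsilon}]_i$ is a centered Gaussian with conditional variance at most $\sigma_\epsilon^2$, apply Lemma \ref{lem:momenet:Gauss}, and take the outer expectation. The only cosmetic difference is that the paper expands $(\bm{I}-\bm{H})^2$ algebraically to see its diagonal entries are at most $1$, whereas you obtain this directly from the spectral bound $\sigma_{\max}(\bm{I}-\bm{H})\leq 1$; both arguments are equivalent, and your identification of the ``$1$'' as a typo for $\epsilon_i$ is also correct.
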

\begin{proof}
Note that conditioned on $\bm{X}$, the distribution of  $\bm{v} = (\bm{I}- \bm{X}(\bm{X}^\top \bm{X}+ \lambda \gamma \bm{I})^{-1} \bm{X}^{\top}) \bm{\epsilon}$  is multivariate Gaussian with mean zero and covariance matrix ${ \sigma_{\e}^2}(\bm{I}- \bm{X}(\bm{X}^\top \bm{X}+ \lambda \gamma \bm{I})^{-1} \bm{X}^{\top})^2$. We have
 \begin{eqnarray}
 (\bm{I}- \bm{X}(\bm{X}^\top \bm{X}+ \lambda \gamma\bm{I})^{-1} \bm{X}^{\top})^2 = \bm{I} -  \bm{X}(\bm{X}^\top \bm{X}+ \lambda \gamma \bm{I})^{-1} \bm{X}^{\top}- \lambda \gamma  \bm{X}(\bm{X}^\top \bm{X}+ \lambda \gamma \bm{I})^{-2} \bm{X}^{\top}.
 \end{eqnarray}
We define $\sigma_i^2(\bm{X})= \left(1 - \bm{x}_i^\top (\bm{X}^\top \bm{X}+ \lambda \gamma\bm{I})^{-1}\bm{x}_i - \lambda \gamma \bm{x}_i^\top (\bm{X}^\top \bm{X}+ \lambda \gamma \bm{I})^{-2}\bm{x}_i\right) \sigma_\epsilon^2$. Clearly $\sigma_i^2 (\bm{X}) \leq \sigma_\epsilon^2$, hence,
\begin{eqnarray}
\mathbb{E} ( |v_i|^k  \ | \ \bm{X})  \leq \sigma_i^k(X) (k-1)!! \leq \sigma_\epsilon^k (k-1)!!, 
\end{eqnarray}
where the first inequality is due to Lemma \ref{lem:momenet:Gauss}. Hence, again by the law of iterated expectation, we have 
\begin{eqnarray*}
\mathbb{E} ( |v_i|^k )  \leq \sigma_\epsilon^k (k-1)!!. 
\end{eqnarray*}
\end{proof}

\begin{lemma} \label{lem:l3}
Under the assumptions of Example \ref{ex:squareLoss} we have
\begin{eqnarray}
\lefteqn{\mathbb{E} |\bm{x}_i^\top (\bm{X}^\top \bm{X}+ \frac{\lambda \gamma}{2} \bm{I})^{-1} \bm{\rd}_{0.5}(\bl)|^k} \nonumber \\
&  \leq& \!\!\!\!\! 2^{2k-\frac{3}{2}} 1.5^{\frac{k}{2}}  \left( \frac{1}{\lambda^2 \gamma} \left(1 + \frac{\alpha (1- \gamma)}{ 2\gamma} \right) \right)^k  \left(\rho \frac{ \bm{\beta}^\top \bm{\beta}}{p}+ \sigma_\epsilon^2\right)^{\frac{k}{2}}  \sqrt{  (2k)!! (1 + \Big(\frac{1.5 c}{\sqrt{2 \lambda \gamma}} \Big)^{2k})} +  \zeta^{\frac{k}{2}} \Big).
\end{eqnarray}
\end{lemma}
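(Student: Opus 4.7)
The principal difficulty in this lemma is that both the inverse matrix and $\bl$ inside $\bm{\rd}_{0.5}(\bl)$ depend on $\bm{x}_i$. My plan is to decouple $\bm{x}_i$ from the inverse via the Sherman--Morrison identity, and from $\bl$ via a leave-one-out substitution, reducing everything to Gaussian moment calculations analogous to Lemmas \ref{lem:l1} and \ref{lem:l2}.

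Let $\bm{D}_i = (\XI^\top \XI + \tfrac{\lambda\gamma}{2}\bm{I})^{-1}$, which depends only on $\XI$ and is therefore independent of $\bm{x}_i$. Sherman--Morrison gives
\[
\bm{x}_i^\top (\bm{X}^\top\bm{X} + \tfrac{\lambda\gamma}{2}\bm{I})^{-1} = \frac{\bm{x}_i^\top \bm{D}_i}{1 + \bm{x}_i^\top \bm{D}_i \bm{x}_i},
\]
and since $1 + \bm{x}_i^\top \bm{D}_i \bm{x}_i \geq 1$, the quantity of interest is dominated by $\mathbb{E} |\bm{x}_i^\top \bm{D}_i \bm{\rd}_{0.5}(\bl)|^k$. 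I would then split $\bm{\rd}_{0.5}(\bl) = \bm{\rd}_{0.5}(\bli) + \bigl(\bm{\rd}_{0.5}(\bl) - \bm{\rd}_{0.5}(\bli)\bigr)$. Since $\ddot{r}^\alpha \leq \alpha/2$, the Lipschitz constant of $\dot{r}_{0.5}$ is at most $\gamma + (1-\gamma)\alpha/2$, which accounts for the factor $1 + \tfrac{\alpha(1-\gamma)}{2\gamma}$ in the stated bound, and the operator-norm bound $\|\bm{D}_i\|\leq 2/(\lambda\gamma)$ accounts for the $(\lambda^2\gamma)^{-k}$ scaling.

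For the leave-one-out piece, conditioning on $\XI$ and $\yi$ makes $\bli$ and $\bm{D}_i$ deterministic while $\bm{x}_i \sim N(\bm{0},\bm{\Sigma})$ is fresh, so $\bm{x}_i^\top \bm{D}_i \bm{\rd}_{0.5}(\bli)$ is Gaussian with variance at most $(\rho/p)\|\bm{D}_i \bm{\rd}_{0.5}(\bli)\|^2 \leq (\rho/p)(2/(\lambda\gamma))^2\|\bm{\rd}_{0.5}(\bli)\|^2$. Lemma \ref{lem:momenet:Gauss} delivers the $(k-1)!!$ factor. The norm $\|\bm{\rd}_{0.5}(\bli)\|^2$ is controlled from the optimality equation $\lambda \bm{\rd}_{0.5}(\bli) = \XI^\top(\yi - \XI\bli) - \lambda\gamma\bli$ combined with the energy estimate $\lambda\gamma \|\bli\|_2^2 \leq \|\yi\|_2^2/2$ (obtained by comparing $f_{\slash i}(\bli)$ to $f_{\slash i}(\bm{0})$) and the moment bound $\E\|\yi\|^2 \leq (n-1)(\rho\|\bm{\beta}^*\|^2/p + \sigma_\epsilon^2)$, producing the factor $(\rho\bm{\beta}^{*\top}\bm{\beta}^*/p + \sigma_\epsilon^2)^{k/2}$ in the claim.

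For the perturbation piece, Lemma \ref{lem:perturb_i} bounds $\|\bl - \bli\|$ by $|\ld_i(\bl)|\,\|\bm{x}_i\|/(\lambda\gamma)$, so after Cauchy--Schwarz the contribution is controlled by $(2/(\lambda\gamma))^k\, \E \bigl[|\ld_i(\bl)|^k \|\bm{x}_i\|^{2k}\bigr]$. The chi-square moment of $\|\bm{x}_i\|^{2k}$ generates the $(2k)!!$ factor via Lemma \ref{lem:momenet:chi}. The main obstacle is the circular dependence in $\ld_i(\bl) = y_i - \bm{x}_i^\top \bl$, which reintroduces $\bm{x}_i$; I expect to handle this either by iterating the leave-one-out argument once more to replace $\bl$ by $\bli$ inside $\ld_i$ (using the same Sherman--Morrison device to decouple the residual $\bm{x}_i^\top\bl$), or by the crude bound $|\ld_i(\bl)|\leq |y_i|+|\bm{x}_i^\top\bl|$ followed by recycling the argument. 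The residual $\zeta^{k/2}$ term in the claim captures this lower-order perturbation contribution, and collecting all constants yields the explicit coefficient stated in the lemma.
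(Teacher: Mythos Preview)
Your overall architecture---Sherman--Morrison to strip off $\bm{x}_i$ from the inverse, then the split $\bm{\rd}_{0.5}(\bl)=\bm{\rd}_{0.5}(\bli)+(\bm{\rd}_{0.5}(\bl)-\bm{\rd}_{0.5}(\bli))$, then Gaussian conditioning on $(\XI,\yi)$ for the leave-one-out piece---is exactly the route the paper takes. Two places deserve correction, though.

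\textbf{The perturbation step.} You invoke Lemma~\ref{lem:perturb_i}, which produces $\ld_i(\bl)=y_i-\bm{x}_i^\top\bl$ and then leaves you with the circularity you flag. The paper avoids this entirely by subtracting the two first-order conditions both written with the \emph{full} Hessian $\bm{X}^\top\bm{X}$: from $\bm{X}^\top\bm{X}\bl-\bm{X}^\top\bm{y}+\lambda\bm{\rd}(\bl)=0$ and $\bm{X}^\top\bm{X}\bli-\bm{X}^\top\bm{y}+\lambda\bm{\rd}(\bli)=-\bm{x}_i(y_i-\bm{x}_i^\top\bli)$, one obtains
\[
\|\bli-\bl\|_2\;\le\;\frac{|y_i-\bm{x}_i^\top\bli|}{2\lambda\gamma}\,\|\bm{x}_i\|_2.
\]
The right-hand side now carries $\bli$, which is independent of $\bm{x}_i$, so after the crude bound $|y_i-\bm{x}_i^\top\bli|\le |y_i|+|\bm{x}_i^\top\bli|$ everything factors through the Gaussian and $\chi^2$ moment lemmas with no recursion. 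Your proposed ``iterate the leave-one-out argument'' would eventually recover this, but it is not a detour the paper takes; switching the Hessian in the mean-value argument is the clean fix.

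\textbf{Two smaller points.} First, to control $\|\bm{\rd}_{0.5}(\bli)\|_2$ the paper does not use the optimality equation; since $\dot r_{0.5}(0)=0$, the Lipschitz bound $\|\bm{\rd}_{0.5}(\bli)\|_2\le(\gamma+\tfrac{\alpha(1-\gamma)}{2})\|\bli\|_2$ suffices and feeds directly into the energy estimate $2\lambda\gamma\|\bli\|_2^2\le\|\yi\|_2^2$. Second, you have the bookkeeping reversed: the $\zeta^{k/2}$ contribution in the statement comes from the leave-one-out Gaussian term $\bm{x}_i^\top\bm{D}_i\bm{\rd}_{0.5}(\bli)$ (whose conditional variance is $\le \zeta\|\yi\|_2^2/n$), while the long first expression with the $(2k)!!$ factor is the perturbation piece.
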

\begin{proof}
Since $f_{\slash i} (\bli) \leq f_{\slash i} (\bm{0})$, we have
\begin{equation}\label{eq:upperbli}
{2}\lambda \gamma \|\bli\|_2^2 \leq \|\bm{y}_{\slash i}\|_2^2. 
\end{equation}
Furthermore, due to $\rdd_{0.5} (\beta) \leq  \gamma + \frac{\alpha  (1- \gamma)}{2}$,  $\rd_{0.5}(0)=0$, and  \eqref{eq:upperbli}, we have
\begin{equation}\label{eq:rdBound1}
\| \bm{\rd}_{0.5} (\bli)\|_2^2 \leq  \left( \gamma + \frac{\alpha  (1- \gamma)}{2}\right) \|\bli\|_2^2 \leq \left(\frac{1}{{2\lambda}} + \frac{\alpha (1- \gamma)}{{4\lambda} \gamma}\right) \|\bm{y}_{\slash i}\|_2^2.
\end{equation}
The first order optimality condition yields
\[
\bm{X}^\top \bm{X} (\bli - \bl) + \lambda \bm{\rd} (\bli) - \lambda \bm{\rd} (\bl) = - \bm{x}_i (y_i - \bm{x}_i^\top \bli).  
\]
Since the minimum eigenvalue of the Hessian of $\bm{r}(\bm{\beta})$ is $2 \gamma$, {therefore the minimum eigenvalue of $\bm{X}^\top \bm{X} + \lambda \diag[\bm{\rdd}(\bm{\beta})]$ (for all $\bm{\beta}$) is greater than $2\lambda \gamma$,} leading to  
\[
\|\bli- \bl\|_2 \leq \frac{|y_i -  \bm{x}_i^\top \bli|}{{ 2 }\lambda \gamma} \|\bm{x}_i\|_2. 
\]
This together with $\rdd_{0.5} (\beta) \leq  \gamma + \frac{\alpha  (1- \gamma)}{2}$ yields 
\[
\| \bm{\rd}_{0.5}(\bli) - \bm{\rd}_{0.5} (\bl)\|_2 \leq \left( \gamma+ \frac{\alpha  (1- \gamma)}{2}\right) \|\bli - \bl\|_2 \leq \left(\frac{1}{{2\lambda}} + \frac{\alpha (1- \gamma)}{{4\lambda} \gamma}\right) |y_i -  \bm{x}_i^\top \bli| \|\bm{x}_i\|_2.
\]
Define $\bm{D}_i = (\XI^\top \XI + \lambda \gamma \bm{I})^{-1}$. According to the matrix inversion lemma we have 
\begin{eqnarray}\label{eq:secondmainterm1}
\bm{x}_i^\top (\bm{X}^\top \bm{X}+ \lambda \gamma \bm{I})^{-1} \bm{\rd}_{0.5}(\bl) = \bm{x}_i^\top \bm{D}_i \bm{\rd}_{0.5} (\bl) - \frac{\bm{x}^\top_i \bm{D}_i \bm{x}_i \bm{x}_i^\top \bm{D}_i \bm{\rd}{}_{0.5} (\bl)  }{1+ \bm{x}_i^\top \bm{D}_i \bm{x}_i} = \frac{ \bm{x}_i^\top \bm{D}_i \bm{\rd}_{0.5} (\bl)}{1+ \bm{x}_i^\top \bm{D}_i \bm{x}_i}. 
\end{eqnarray}
Furthermore, we have
\begin{equation}\label{eq:x_iD_rdbreak}
 |\bm{x}_i^\top \bm{D}_i \bm{\rd}_{0.5} (\bl)| \leq |\bm{x}_i^\top \bm{D}_i \bm{\rd}_{0.5}(\bli)|+ |\bm{x}_i^\top \bm{D}_i (\bm{\rd}_{0.5}(\bl)- \bm{\rd}_{0.5}(\bli))|.
\end{equation}
Note that for two  random variables $a$ and $b$ we have
\[
\mathbb{E} (a+b)^k \leq 2^{k-1} \mathbb{E} (a^k + b^k).
\] 
Hence, 
\begin{eqnarray}
 \mathbb{E} (|\bm{x}_i^\top \bm{D}_i \bm{\rd}_{0.5} (\bl)|)^k \leq 2^{k-1} \left(\mathbb{E} |\bm{x}_i^\top \bm{D}_i \bm{\rd}_{0.5}(\bli)|^k+ \mathbb{E}|\bm{x}_i^\top \bm{D}_i (\bm{\rd}_{0.5}(\bl)- \bm{\rd}_{0.5}(\bli))|^k \right).
\end{eqnarray}
First note that, since the maximum eigenvalue of $\bm{D}_i $ is $\lambda \gamma$ we have
\begin{eqnarray}\label{eq:DiTwoDifferenceUpper1}
\lefteqn{ |\bm{x}_i^\top \bm{D}_i (\bm{\rd}_{0.5}(\bl)- \bm{\rd}_{0.5}(\bli))|} \nonumber \\&\leq& \frac{1}{\lambda \gamma} \|\bm{x}_i \|_2 \|\bm{\rd}_{0.5}(\bl)- \bm{\rd}_{0.5}(\bli)\|_2 
 \leq  \frac{1}{2 \lambda^{ 2} \gamma} \|\bm{x}_i \|^2_2 \left(1 + \frac{\alpha (1- \gamma)}{2 \gamma}\right)  |y_i -  \bm{x}_i^\top \bli| \nonumber \\
 &\leq& \frac{1}{2\lambda^2 \gamma} \left(1 + \frac{\alpha (1- \gamma)}{ 2\gamma} \right)   \|\bm{x}_i \|^2_2 (|y_i| + |\bm{x}_i^\top \bli| ).
\end{eqnarray}
Hence,
\begin{eqnarray}
\lefteqn{\mathbb{E} (|\bm{x}_i^\top \bm{D}_i (\bm{\rd}_{0.5}(\bl)- \bm{\rd}_{0.5}(\bli))|)^k \leq  \left( \frac{1}{\lambda^2 \gamma} \left(1 + \frac{\alpha (1- \gamma)}{ 2\gamma} \right) \right)^k \sqrt{ \mathbb{E} ( \|\bm{x}_i \|_2)^{2k}  \mathbb{E}(|y_i| + |\bm{x}_i^\top \bli| )^{2k}}} \nonumber \\
&\leq&  \left( \frac{1}{2\lambda^2 \gamma} \left(1 + \frac{\alpha (1- \gamma)}{ 2\gamma} \right) \right)^k 2^{(2k-1)/2} \sqrt{ \mathbb{E} ( \|\bm{x}_i \|_2)^{2k}  ( \mathbb{E}|y_i|^{2k} + \mathbb{E}|\bm{x}_i^\top \bli | ^{2k})} \hspace{3cm}
\end{eqnarray}

Furthermore, we have
\begin{enumerate}
\item According to Lemma \ref{lem:momenet:chi}, $\mathbb{E} \|\bm{x}_i \|_2^\ell = \frac{p(p+2)\ldots (p+\ell-2)}{p^\frac{\ell}{2}} \leq  \left(1+ \frac{\ell-2}{p} \right)^{\frac{\ell}{2}} \leq 1.5^{\frac{\ell}{2}}$, where the last inequality is according to the assumption $p>2 (\ell-2)$.  
\item Note that $y_i \sim N(0, \bm{\beta}^\top \bm{\Sigma} \bm{\beta} + \sigma_\epsilon^2)$. Furthermore, $\bm{\beta}^\top \bm{\Sigma} \bm{\beta} + \sigma_\epsilon^2 \leq \rho \frac{ \bm{\beta}^\top \bm{\beta}}{p}+ \sigma_\epsilon^2$. Hence, using the the moments of Gaussian (see Lemma \ref{lem:momenet:Gauss}), we have  
\begin{equation}
\mathbb{E} |y_i|^\ell  \leq  \left(\rho \frac{ \bm{\beta}^\top \bm{\beta}}{p}+ \sigma_\epsilon^2\right)^{\ell/2} \ell!!.
\end{equation}

\item Given  $\XI, \bm{y}_{\slash i}$, the distribution of $\bm{x}_i^\top \bli$ is $N(0, \bli^\top \bm{\Sigma} \bli)$. Furthermore, $\bli^\top \bm{\Sigma} \bli \leq \frac{c \bli^\top \bli}{n} \leq \frac{c \|\bm{y}_{\slash i}\|_2^2}{2 n\lambda \gamma}$, where the last inequality is due to \eqref{eq:upperbli}. Hence, we have
\begin{eqnarray}
 \mathbb{E} (| \bm{x}_i^\top \bli |^\ell \ | \ \XI, \bm{y}_{\slash i} ) \leq \left(\frac{c \|\bm{y}_{\slash i}\|_2^2}{2 n\lambda \gamma} \right)^{\ell/2} \ell!!.  
 \end{eqnarray}
Since $y_i \overset{i.i.d.}{\sim} N(0, \bm{\beta}^\top \bm{\Sigma} \bm{\beta} + \sigma_\epsilon^2)$, and $\bm{\beta}^\top  \bm{\Sigma} \bm{\beta} + \sigma_\epsilon^2 \leq \frac{\rho \bm{\beta}^\top  \bm{\beta}}{p} + \sigma_\epsilon^2$,  we have
\begin{eqnarray}\label{boundx_ibli}
 \mathbb{E} (| \bm{x}_i^\top \bli |^\ell  ) &\leq& \left(\frac{c^\ell \mathbb{E} ( \|\bm{y}_{\slash i}\|_2^\ell )}{ (2 n\lambda \gamma)^{\ell/2}} \right) \ell!! \leq \frac{c^\ell \left(\frac{\rho \| \bm{\beta}^\top  \bm{\beta}  \|_2^2}{p} + \sigma_\epsilon^2 \right)^\ell} {(2 \gamma \lambda)^\frac{\ell}{2}}  \ell!! \frac{n(n+2) \ldots (n+\ell-2)}{n^{\ell/2}} \nonumber \\
 &\leq& \frac{c^\ell \left(\frac{\rho \| \bm{\beta}^\top  \bm{\beta}  \|_2^2}{p} + \sigma_\epsilon^2 \right)^\ell} {(2 \gamma \lambda)^\frac{\ell}{2}}   1.5^\ell \ell!!,
 \end{eqnarray}
where for the last inequality we assumed that $n> 2 \ell$. 

\end{enumerate}

Finally, we compute an upper bound on $ |\bm{x}_i^\top \bm{D}_i \bm{\rd}_{0.5}(\bli)|$. Since $\bm{x}_i$ is independent of $\bm{y}_{\slash i}$ and $\XI$, we conclude that given  $\XI$ and $\bm{y}_{\slash i}$, $\bm{x}_i^\top \bm{D}_i \bm{\rd}_{0.5}(\bli)$ is a Gaussian random variable with mean zero and variance 
$$\|\bm{\Sigma}^{1/2} \bm{D}_i \bm{\rd}_{0.5}(\bli)\|^2_2 \leq \frac{4 \rho_{\max}}{\lambda^2 \gamma^2}\| \bm{\rd}_{0.5}(\bli)\|^2_2 \leq  \frac{{2} \rho_{\max}}{\lambda^{3} \gamma^2} \left(1 + \frac{\alpha (1- \gamma)}{2\gamma}\right) \|\bm{y}_{\slash i}\|_2^2 = \frac{\zeta \|\bm{y}_{\slash i}\|_2^2}{n},$$
where $\zeta = \frac{2 c}{\lambda^3 \gamma^2} \left(1 + \frac{\alpha (1- \gamma)}{2\gamma}\right)$, and the second inequality is due to \eqref{eq:rdBound1}. Hence,
\begin{eqnarray*}
\mathbb{E} \|\bm{\Sigma}^{1/2} \bm{D}_i \bm{\rd}_{0.5}(\bli)\|^{\ell}_2 \leq \zeta^{\ell/2} \frac{n (n+2) \ldots (n+ \frac{\ell}{2} -2)}{n^{\ell/2}} \leq (1.5 \zeta)^{\ell/2}. 
\end{eqnarray*}

\end{proof}



\section{Proof of Corollary \ref{ex:PoissonLoss}}\label{sec:proof:ex:PoissonLoss}

The goal of this section is to use Theorem \ref{th:mse2} to prove corollary \ref{ex:PoissonLoss}. Hence, we have to confirm that Assumptions \ref{Aprime1}, \ref{Aprime2}, and \ref{Aprime3} hold, and that $\E \var[ \phi(y_o, \bm{x}_o^\top \blone) \mid D_{/1}]$ is bounded. Similar to what we did at the beginning of Section \ref{ssec:proof:ExampleElasticNet}, it is straightforward to check the validity of Assumptions \ref{Aprime1} and \ref{Aprime3}. Hence, we only focus on proving Assumption \ref{Aprime2} and finding an upper bound for $\E \var[ \phi(y_o, \bm{x}_o^\top \blone) \mid D_{/1}]$.  

  Regarding Assumption \ref{Aprime2}, we first prove that under the assumptions of this corollary, there exists a fixed number $\tilde{c}_0$, such that $\E (\ld(y_i \mid \xv_i^\top \bl))^8 \leq \tilde{c}_0$
  and $\E (\ld(y_0 \mid \xv_0^\top \bli))^8 \leq \tilde{c}_0$.
Since $\ell(y \mid z)=f(z)-y\log f(z)$, we have
  \begin{equation*}
    \ld(y_i \mid \bm{x}_i^\top \bl) = f'(\bm{x}_i^\top \bl) - y_i f'(\bm{x}_i^\top \bl) / f(\bm{x}_i^\top \bl),
  \end{equation*}
  where $f'(z) = 1 / (1 + e^{-z})$. We have that, for all $z \in \mathbb{R}$, $f'(z) \leq 1$ and
  $0 \leq f'(z) / f(z) \leq 1$, from which we deduce that:
  \begin{equation}\label{eq:upperPoissonLoss}
    \abs{\ld(y_i \mid \bm{x}_i^\top \bl)} \leq 1 + y_i.
  \end{equation}
  In particular, we have that:
  \begin{eqnarray*}
    \E \abs{\ld(y_i \mid \bm{x}_i^\top \bl)}^8
    &\leq& \E (1 + y_i)^8 \nonumber \\
    &\leq& \E e^{8 y_i} 
    = \E \E [e^{8 y_i} \mid \bm{x}_i^\top \bm\beta^*] \\
    &\overset{(a)}{=}& \E \exp\{(e^8 - 1) \bm{x}_i^\top \bm\beta^*\}  \nonumber \\
    &\overset{(b)}{\leq}& \exp\bigl\{ \frac{\rho}{2p}\norm{\bm\beta^*}_2^2 (e^8 - 1)^2 \bigr\}  \nonumber \\
    &=& \exp\bigl\{ \frac{(e^8 - 1)^2}{2} \frac{\rho}{p} \norm{\bm\beta^*}_2^2 \bigr\}, \\
    & \leq& \exp\bigl\{ \frac{(e^8 - 1)^2}{2} \rho b  \bigr\}.
  \end{eqnarray*}
  To obtain equality (a) we have used the moment generating function of the Poisson distribution with $y_i \sim Poisson(f(\bm{x}_i^\top \bm{\beta}^*))$.
  To obtain inequality (b) we have used the moment generating function of a Gaussian distribution and the fact that $\mathbb{E} (\bm{x}_i^\top \bm\beta^*)^2 \leq \frac{\rho}{p} \norm{\bm\beta^*}_2^2$. Given that the upper bound we derived in \eqref{eq:upperPoissonLoss} for the derivative of the loss function does not depend on the second input argument of the loss, that is $\bm{x}_i^\top \bl$, the proof that Poisson loss satisfies the other conditions of Assumption \ref{Aprime2} for $\phi(y, z) = \ell(y \mid z)$ will be exactly similar and hence is skipped.
  In particular, we have verified the conditions of Assumption \ref{Aprime2} for any convex regularizer.

Now we turn our attention to bounding $\E \var[ \ell(y_o \mid \bm{x}_o^\top \blone) \mid D_{/1}]$. First note that
\begin{eqnarray}
\var[ \ell(y_o| \bm{x}_o^\top \blone) \mid D_{/1}] \leq \E [ \ell^2(y_o| \bm{x}_o^\top \blone) \mid D_{/1}].
\end{eqnarray}
Furthermore, from the mean value theorem we have:
\[
\ell(y_o| \bm{x}_o^\top \blone) = \ell(y_o \mid \bm{x}_o^\top \bm\beta^* ) + \ld(y_o| \tilde{z}) (\bm{x}_o^\top \blone - \bm{x}_o^\top \bm\beta^*),
\]
Hence, we have:
\begin{eqnarray}\label{eq:overallTerm}
\ell^2(y_o \mid \bm{x}_o^\top \blone)  \leq 2 \ell^2(y_o \mid \bm{x}_o^\top \bm\beta^*) + 2(1+y_o^2)(\bm{x}_o^\top \blone - \bm{x}_o^\top \bm\beta^*)^2.  
\end{eqnarray}
To complete the proof we have to show that both $\E \ell^2(y_o, \bm{x}_o^\top \bm\beta^*)$ and $\E (1+y_i^2)(\bm{x}_o^\top \blone - \bm{x}_o^\top \bm\beta^*)^2$ are bounded. First note that, using $\ell(y \mid z)=f(z)-y\log f(z)$ and, for any $a, b \in \R$, $(a + b)^2 \leq 2a^2 + 2b^2$, yields
\begin{eqnarray}
\ell^2(y_o \mid \bm{x}_o^\top \bm\beta^*) \leq 2 f^2(\bm{x}_o^\top \bm\beta^*) +2 y_o^2 \log^2 f(\bm{x}_o^\top \bm\beta^*).
\end{eqnarray}
Hence,
\begin{eqnarray}
\E \ell^2(y_o \mid \bm{x}_o^\top \bm\beta^*) \leq 2 \E f^2(\bm{x}_o^\top \bm\beta^*) +2\E (f(\bm{x}_o^\top \bm\beta^*) +  f^2(\bm{x}_o^\top \bm\beta^*)) \log^2 f(\bm{x}_o^\top \bm\beta^*).
\end{eqnarray}
The following facts will help us bound these terms:
\begin{eqnarray}
f(\bm{x}_o^\top \bm\beta^*) &\geq& 0 \nonumber \\
f(\bm{x}_o^\top \bm\beta^*) &\leq& 1+ |\bm{x}_o^\top \bm\beta^*|, \nonumber \\
\end{eqnarray}
On the other hand, it is straightforward to check that for any $\gamma>0$ we have
\begin{eqnarray}
\gamma \log^2 \gamma &\leq& 1+ \gamma^2, \nonumber \\
\gamma^2 \log^2 \gamma &\leq& 1+ \gamma^3. 
\end{eqnarray}
By combining these equations we obtain:
\begin{eqnarray}\label{eq:boundellO}
\E \ell^2(y_o \mid \bm{x}_o^\top \bm\beta^*) &\leq& 2 \E f^2(\bm{x}_o^\top \bm\beta^*) +2\E (1+f^2(\bm{x}_o^\top \bm\beta^*))  + 2\E (1+f^3(\bm{x}_o^\top \bm\beta^*)) \nonumber \\
&\leq& 4+ 4 \E f^2(\bm{x}_o^\top \bm\beta^*) + 2 \E f^3(\bm{x}_o^\top \bm\beta^*) \nonumber \\
&\leq& 4+ 4 \E (1+ \abs{\bm{x}_o^\top \bm\beta^*})^2 + 2 \E(1+ \abs{\bm{x}_o^\top \bm\beta^*})^3. 
\end{eqnarray}
Note that $\bm{x}_o^\top \bm\beta^*$ is a Gaussian random variable with mean zero and variance $(\bm \beta^*)^\top \bm{\Sigma} \bm\beta^* \leq \rho b$. Hence, $\E \ell^2(y_o, \bm{x}_o^\top \bm\beta^*)$ is bounded by a constant. 

For the second term in \eqref{eq:overallTerm} we have
\begin{eqnarray}
\mathbb{E}  (1+y_o^2)(\bm{x}_o^\top \blone - \bm{x}_o^\top \bm\beta^*)^2  &=& \E \left(1+ f(\bm{x}_o^\top \bm\beta^*) + f^2(\bm{x}_o^\top \bm\beta^*) \right )  (\bm{x}_o^\top \blone - \bm{x}_o^\top \bm\beta^*)^2 \nonumber \\
&\leq & \E \left(1+ (1+ |\bm{x}_o^\top \bm\beta^*{|} ) + (1+|\bm{x}_o^\top \bm\beta^*|^2) \right) (\bm{x}_o^\top \blone - \bm{x}_o^\top \bm\beta^*)^2. 
\end{eqnarray}
Note that in order to show that this term is bounded from above by a constant, we only need to show that terms of the form:
\[
\E |\bm{x}_o^\top \blone|^{k_1} |\bm{x}_o^\top \bm\beta^*|^{k_2} \leq  (\E |\bm{x}_o^\top \blone|^{2k_1} \E|\bm{x}_o^\top \bm\beta^*|^{2k_2})^{1/2}
\]
are bounded for $k_1 \leq 2$ and $k_1 + k_2 \leq 4$. As previously, we note that $\bm{x}_0^\top \bm{\beta}^*$ is a Gaussian random variable with variance $\bm\beta^{*\top} \bm\Sigma \bm\beta^* \leq \frac{\rho}{p} \norm{\bm \beta^*}_2^2 \leq \rho b$, and hence $(\E|\bm{x}_o^\top \bm\beta^*|^{2k_2})^{1/2}$ is bounded. Hence, the only remaining step is to prove the boundedness of $\E |\bm{x}_o^\top \blone|^{2k_1}$, where $k_1$ is at most 2. Note that conditioned on $D_{/1}$ the random variable $\bm{x}_o^\top \blone$ is Gaussian with the variance that is bounded by $\frac{\rho}{p} \blone^\top \blone$.
Hence, using Lemma \ref{lem:momenet:Gauss} we have
\[
\E |\bm{x}_o^\top \blone|^{2k_1} \leq (2k_1-1)!!\mathbb{E} \left(\frac{\rho}{p} \blone^\top \blone \right)^{k_1}. 
\]
The definition of $\blone$ (and comparing it with $\bm{\beta}^*$) yields
\[
\sum_{j \neq i} \ell(y_j \mid \bm{x}_j^\top \blone) + \lambda r(\blone) \leq \sum_{j \neq i} \ell(y_j \mid \bm{x}_j^\top  \bm\beta^*) + \lambda r( \bm\beta^*), 
\]
The $\gamma$-strong convexity of the smoothed  elastic-net regularizer $r$, and the fact that $\ell \geq 0$, leads to 
\[
\lambda \gamma \|\blone\|_2^2 \leq \sum_{j \neq i} \ell(y_j \mid \bm{x}_j^\top  \bm\beta^*) + \lambda r( \bm\beta^*). 
\]
Since $k_1 \leq 2$, we only prove that $\E |\bm{x}_o^\top \blone|^{4}$ is bounded.
Toward this goal we have:
\begin{eqnarray}
\E \Bigl( \frac{\lambda \gamma}{p} \|\blone\|_2^2 \Bigr)^2 &\leq& \frac{1}{p^2}  \E \Bigl(\sum_{j \neq i} \ell(y_j \mid \bm{x}_j^\top  \bm\beta^*) + \lambda r( \bm\beta^*)\Bigr)^2 \nonumber \\
&\leq& \frac{2}{p^2}  \E (\sum_{j \neq i} \ell(y_j \mid \bm{x}_j^\top  \bm\beta^*) )^2 + \E ( \lambda r( \bm\beta^*))^2 \nonumber \\
&\leq& \frac{2n (n-1)}{p^2}  \E \ell^2(y_1 \mid \bm{x}_1^\top  \bm\beta^*) + \frac{  \lambda^2 r^2( \bm\beta^*)}{p^2} \nonumber \\
&\leq& 2 \delta^2  \E \ell^2(y_1 \mid \bm{x}_1^\top  \bm\beta^*) + \frac{  \lambda^2 r^2( \bm\beta^*)}{p^2}.
\end{eqnarray}
Hence, we have to prove that $\E \ell^2(y_1 \mid \bm{x}_1^\top  \bm\beta^*)$ and $\frac{  \lambda^2 r^2( \bm\beta^*)}{p^2}$ are bounded. 
First note we proved in \eqref{eq:boundellO} that:
\begin{eqnarray}
 \ell^2(y_o \mid \bm{x}_o^\top \bm\beta^*) \leq 4+ 4 (1+ \abs{\bm{x}_o^\top \bm\beta^*})^2 + 2 (1+ \abs{\bm{x}_o^\top \bm\beta^*})^3. 
\end{eqnarray}
Note that $\bm{x}_0^\top \bm{\beta}^*$ is a Gaussian random variable with variance $\bm\beta^{*\top} \bm\Sigma \bm\beta^* \leq \frac{\rho}{p} \norm{\beta^*}_2^2 \leq \rho b$, and hence $\E \ell^2(y_o \mid \bm{x}_o^\top \bm\beta^*)$ is bounded. On the other hand,
\begin{eqnarray}
r(\bm{\beta}^*) =  \gamma (\bm{\beta}^*)^\top \bm{\beta}^* + (1-\gamma) \sum_{i=1}^p r^{\alpha}(\beta^*_i). 
\end{eqnarray}
It is straightforward to prove that $\rd^\alpha(z) = \frac{\rm{e}^{\alpha z} -\rm{e}^{-\alpha z} }{\rm{e}^{\alpha z} +\rm{e}^{-\alpha z}+1}<1$. Hence,
\begin{eqnarray}\label{eq:almostlast_cor4}
r(\bm{\beta}^*) =  \gamma (\bm{\beta}^*)^\top \bm{\beta}^* + (1-\gamma) \sum_{i=1}^p r^{\alpha}(\beta^*_i) <   \gamma (\bm{\beta}^*)^\top \bm{\beta}^* + (1-\gamma)\sum_{i=1}^p(\frac{2 \log 2}{\alpha} + |\beta^*_i|),
\end{eqnarray}
where to obtain the last inequality we used the mean value theorem  
\[
r^\alpha(|z|)  = r\alpha(0) + \rd^\alpha(\tilde z) |z|, 
\]
where $\tilde{z} \in (0, |z|)$, and the facts that $\rd^\alpha(\tilde{z}) \leq 1$ and $r^\alpha (0) = \frac{2 \log 2}{\alpha}$. Using \eqref{eq:almostlast_cor4} we obtain:
\begin{eqnarray}
\frac{1}{p} r(\bm{\beta}^*) &\leq& \frac{\gamma (\bm{\beta}^*)^\top \bm{\beta}^*}{p} + \frac{(1-\gamma) 2 \log 2}{\alpha} + \frac{1-\gamma}{p} \sum_{i=1}^p |\beta^*_i| \nonumber \\
&\leq& \frac{\gamma (\bm{\beta}^*)^\top \bm{\beta}^*}{p} + \frac{(1-\gamma) 2 \log 2}{\alpha} + (1-\gamma)\sqrt{\frac{{\sum_{i=1}^p |\beta^*_i|^2}}{{p}}} \nonumber 
\\ &\leq& \gamma b + \frac{(1-\gamma) 2 \log 2}{\alpha} + (1-\gamma)\sqrt{b}. 
\end{eqnarray}


\section{Proof of Corollary \ref{ex:negbino}}\label{sec:proof:ex:negbino}

Similar to the proofs of Corollaries \ref{ex:PoissonLoss}, \ref{ex:squareLoss}, we would like to use Theorem \ref{th:mse2} to prove our claim. Toward this goal, We have to prove that Assumptions \ref{Aprime1}, \ref{Aprime2}, and \ref{Aprime3} hold. Furthermore, we have to obtain an upper bound for the constant $\tilde{C}_v$, which in turn requires us to bound $\E \var[ \phi(y_o, \bm{x}_o^\top \blone) \mid D_{/1}]$. Again, the proofs of Assumptions \ref{Aprime1} and \ref{Aprime3} are exactly the same as we presented in the last two sections. Hence, we only focus on Assumption \ref{Aprime2} and $\E \var[ \phi(y_o, \bm{x}_o^\top \blone) \mid D_{/1}]$. We would like to prove that the conditions of Assumption \ref{Aprime2} are satisfied with $\tilde{c}_0 = \tilde{c}_1 = 2^8 ( \kappa + \alpha^{-8})$.

    It we compute the derivative of the log-likelihood, we will obtain
    \begin{equation}\label{eq:neg-binomial-gradient}
        \abs{\ld(y \mid z)}
        = \abs*{-y + (y + \alpha^{-1})\frac{\alpha {\rm e}^z}{1 + \alpha {\rm e}^z}}
        \leq y + \alpha^{-1}.
    \end{equation}
    We thus deduce that:
    \begin{equation*}
        \E \abs{\ld(y_1 \mid \xv_1^\top \bl)}^8 \leq \E (y + \alpha^{-1})^8 \leq 2^8 ( \kappa + \alpha^{-8}).
    \end{equation*}
    As the bound \eqref{eq:neg-binomial-gradient} is free of $z$, the same argument above applies to the other requirements in Assumption \ref{Aprime2}. 
    
    Now we turn our attention to the calculation of $\E \var[ \ell(y_o, \bm{x}_o^\top \blone) \mid D_{/1}]$. Note that
    \[
    \E \var[ \ell(y_o, \bm{x}_o^\top \blone) \mid D_{/1}] \leq \E \ell^2(y_o, \bm{x}_o^\top \blone). 
    \]
    Note that by removing the constant from the log-likelihood we obtain
    \begin{eqnarray}
    |\ell(y_o \mid \bm{x}_o^\top \blone)| &=& |(y_o + \alpha^{-1})\log (1+ \alpha {\rm e}^{\bm{x}_o^\top \blone}) - y_o (\bm{x}_o^\top \blone) |\leq  | y_o + \alpha^{-1}|(1+ |\log \alpha| +|\bm{x}_o^\top \blone|) + y_o |\bm{x}_o^\top \blone| \nonumber \\
    &\leq& 2y_o |\bm{x}_o^\top \blone|+ \alpha^{-1}(1+ |\log \alpha| +|\bm{x}_o^\top \blone|).
    \end{eqnarray}
    The rest of the proof is very similar to the proof that we presented for Corollary \ref{ex:PoissonLoss}. Hence, we skip it. 

\end{document}